\newtheorem{theorem}{Theorem}
\newtheorem{definition}{Definition}
\newtheorem{assumption}{Assumption}
\newtheorem{lemma}{Lemma}
\newtheorem{remark}{Remark}
\begin{document}

\title{BROADCAST: Reducing Both Stochastic and Compression Noise to Robustify Communication-Efficient Federated Learning}

\author{
  Heng Zhu$^{1,2}$\\
  \texttt{zh2013@mail.ustc.edu.cn}
   \And
 Qing Ling$^1$\\
 \texttt{lingqing556@mail.sysu.edu.cn}
 \AND
   \\
 $^1$ Sun Yat-sen University \\  
 $^2$ University of Science and Technology of China
}

\date{}
\maketitle

\begin{abstract}
Communication between workers and the master node to collect local stochastic gradients is a key bottleneck in a large-scale federated learning system. Various recent works have proposed to compress the local stochastic gradients to mitigate the communication overhead. However, robustness to malicious attacks is rarely considered in such a setting. In this work, we investigate the problem of Byzantine-robust compressed federated learning, where the attacks from Byzantine workers can be arbitrarily malicious. We theoretically point out that different to the attacks-free compressed stochastic gradient descent (SGD), its vanilla combination with geometric median-based robust aggregation seriously suffers from the compression noise in the presence of Byzantine attacks. In light of this observation, we propose to reduce the compression noise with gradient difference compression so as to improve the Byzantine-robustness. We also observe the impact of the intrinsic stochastic noise caused by selecting random samples, and adopt the stochastic average gradient algorithm (SAGA) to gradually eliminate the inner variations of regular workers. We theoretically prove that the proposed algorithm reaches a neighborhood of the optimal solution at a linear convergence rate, and the asymptotic learning error is in the same order as that of the state-of-the-art uncompressed method. Finally, numerical experiments demonstrate the effectiveness of the proposed method.
\end{abstract}


\section{Introduction}
With the rapid development of intelligent devices, federated learning has been proposed as an effective approach to fusing local data of distributed devices without jeopardizing data privacy. In a federated learning system, local data are kept at the distributed devices (also termed as workers). At each iteration, the workers send local stochastic gradients to a master node, while the master node aggregates the local stochastic gradients to update the trained model \cite{konevcny2016federated,yang2019federated,MAL-083,zhou2018security}. Beyond data privacy, communication efficiency and robustness to various adversarial attacks are also major concerns of federated learning.

Information exchange between the workers and the master node, especially transmitting the local stochastic gradients from the workers to the master node, is a bottleneck of a federated learning system. In particular, when the trained model is high-dimensional, the local stochastic gradients are high-dimensional too and the communication burden is remarkable. To improve the communication efficiency, several popular strategies have been proposed. One such strategy is to reduce the communication frequency by performing multiple rounds of local updates before one round of transmissions \cite{stich2019local,lin2019don,chen2018lag}. Another orthogonal strategy is to reduce the sizes of transmitted messages by compression. Typical compression methods include quantization that uses limited bits to represent real vectors \cite{alistarh2017qsgd,wen2017terngrad,zhang2017zipml}, and sparsification that enforces sparsity of transmitted vectors \cite{stich2018sparsified,wangni2018gradient,konevcny2018randomized}. In this work we focus on compression. At each iteration, the workers compress the local stochastic gradients and send to the master node. Then, the master node aggregates the received compressed local stochastic gradients to obtain a new direction.

In a federated learning system, however, the process of transmitting the compressed local stochastic gradients is vulnerable to adversarial attacks \cite{MAL-083,guerraoui2018hidden,chen2018internet,cao2019distributed,cao2020distributed}. Not all the workers are guaranteed to be reliable and send the true compressed local stochastic gradients. On the contrary, some of them may send faulty messages to bias the aggregation and lead the optimization process to a wrong direction. To characterize the attacks, we consider the Byzantine attacks model where the number and identities of Byzantine workers are unknown to the master node. The Byzantine workers are assumed to be omniscient, can collude with each other, and may send arbitrary malicious messages \cite{yang2020adversary}. To defend against Byzantine attacks, several robust aggregation rules have been proposed to replace the mean aggregation rule in the popular distributed stochastic gradient descent (SGD) algorithm \cite{chen2017distributed,yin2018byzantine,blanchard2017machine}. These approaches are provably able to alleviate the influence of the malicious messages sent by the Byzantine workers on the optimization process.

In this paper, we investigate the problem of Byzantine-robust federated learning with compression, simultaneously considering Byzantine-robustness and communication efficiency. For Byzantine-robust aggregation rules, the noise introduced by compressing the local stochastic gradients significantly weakens their ability to defend against Byzantine attacks. To theoretically justify this claim, we compare the attacks-free compressed stochastic gradient descent (SGD) and its vanilla combination with geometric median-based robust aggregation under Byzantine attacks. We show that even with unbiased compressors, the Byzantine-robust compressed SGD still suffers from the compression noise. This observation illustrates the necessity of reducing the compression noise in the presence of Byzantine attacks. In addition, the stochastic noise caused by selecting random samples to compute the local stochastic gradients also brings difficulties to handling Byzantine attacks \cite{wu2020federated,khanduri2019byzantine,karimireddy2020learning}. To address these issues, we propose a novel algorithm, termed as BROADCAST (Byzantine-RObust Aggregation with gradient Difference Compression And STochastic variance reduction), to reduce both compression and stochastic noise. To be specific, we apply gradient difference compression \cite{mishchenko2019distributed,horvath2019stochastic} to reduce the compression noise, and adopt the stochastic average gradient algorithm (SAGA) \cite{defazio2014saga} to gradually eliminate the inner variations of regular workers.
Our contributions are summarized as follows:
\begin{itemize}
    \item Our work is among the first attempts to jointly consider Byzantine-robust aggregation and compression in federated learning. Compared with gradient norm thresholding \cite{ghosh2021communication} that removes a predefined fraction of compressed messages, our proposed algorithm does not need any prior knowledge about the fraction of Byzantine workers as long as it is smaller than $\frac{1}{2}$.
    \item We theoretically point out that compared with attacks-free compressed SGD, the Byzantine-robust compressed SGD seriously suffers from the compression noise under Byzantine attacks, emphasizing the necessity of reducing compression noise to enhance Byzantine-robustness.
    \item We prove that the proposed algorithm reaches a neighborhood of the optimal solution at a linear convergence rate, and the asymptotic learning error is in the same order as that of the state-of-the-art uncompressed method \cite{wu2020federated}.
\end{itemize}

\subsection{Related Works}

To achieve compression, we can quantize each coordinate of a transmitted vector into few bits \cite{alistarh2017qsgd,wen2017terngrad,zhang2017zipml}, or obtain a sparser vector by letting some elements be zero \cite{stich2018sparsified,wangni2018gradient,konevcny2018randomized}. These compressors, either unbiased or biased, introduce compression noise that affects convergence of underlying algorithms. Error feedback has been applied to reduce the effect of compression noise and ensure convergence, even with biased compressors \cite{stich2018sparsified,wu2018error,karimireddy2019error,tang2019doublesqueeze}. However, the analysis relies on the assumption of bounded stochastic gradients. Free of this assumption, gradient difference compression is also provably able to reduce the compression noise, requiring the use of unbiased compressors \cite{mishchenko2019distributed,horvath2019stochastic,kovalev2020linearly,liu2020double}. Nevertheless, the influence of gradient difference compression on the Byzantine-robustness has not yet been investigated. Our application and analysis of gradient difference compression in Byzantine-robust federated learning are novel.

%
%
%

Most of the existing Byzantine-robust federated learning methods aim to modify the distributed SGD with robust aggregation rules, such as geometric median \cite{chen2017distributed}, coordinate-wise median \cite{yin2018byzantine}, coordinate-wise trimmed mean \cite{yin2018byzantine}, Krum \cite{blanchard2017machine}, Bulyan \cite{guerraoui2018hidden}, etc. When the workers have non-independent and identical distribution (non-i.i.d.) data, \cite{li2019rsa} proposes a robust stochastic aggregation algorithm that forces the regular workers to reach a common solution, and \cite{he2020byzantine} proposes a resampling strategy to reduce the heterogeneity of data distributions at different workers. Byzantine-robustness and privacy preservation are jointly considered in \cite{so2020byzantine,hashemi2021byzantine}.

Variance reduction techniques have been widely used to reduce stochastic noise to accelerate convergence of stochastic algorithms \cite{defazio2014saga,johnson2013accelerating,shalev2013stochastic,yuan2018variance}. In \cite{horvath2019stochastic,kovalev2020linearly}, the combination of variance reduction and gradient difference compression is investigated. Variance reduction is also important to Byzantine-robustness. It is proved in \cite{wu2020federated} that the use of SAGA can fully eliminate the inner variation and improve the ability of tolerating Byzantine attacks. In \cite{khanduri2019byzantine}, the stochastic variance reduced gradient (SVRG) method is combined with robust aggregation to solve distributed non-convex problems. SGD with momentum is considered in \cite{karimireddy2020learning}, also indicating that variance reduction could effectively enhance the performance of defending Byzantine attacks.

For existing Byzantine-robust methods with compression, \cite{bernstein2018signsgd} shows that SignSGD is able to handle a certain class of Byzantine attacks. However, we will show in the numerical experiments that it fails upon several common Byzantine attacks. In \cite{ghosh2021communication}, gradient norm thresholding is used to remove potential malicious messages with compression, where error feedback is applied to reduce the learning error and Gaussian attacks are tested. However, gradient norm thresholding needs to know the fraction of Byzantine workers, or at least a proper estimate, so as to set the fraction of removals. In contrast, our proposed algorithm does not need any prior knowledge about the number of Byzantine workers as long as it is smaller than $\frac{1}{2}$. In addition, gradient norm thresholding can be viewed as a modified mean aggregation rule. Analyzing its error feedback extension is rather straightforward, and relies on the assumption of bounded stochastic gradients. Our analysis considers the combination of geometric median and gradient difference compression, and is hence more challenging. Further, we do not require the assumption of bounded stochastic gradients. We only assume strong convexity, Lipschitz continuous gradients, and bounded variance, which are common in the analysis of first-order stochastic methods.

Orthogonal to compression, another way to improve communication efficiency is to reduce communication frequency in a predefined or adaptive manner, such as in local SGD \cite{stich2019local,lin2019don} or lazily aggregated gradient \cite{chen2018lag}, respectively. The work of \cite{dong2020communication} combines robust stochastic aggregation \cite{li2019rsa} with lazily aggregated gradient \cite{chen2018lag} to achieve Byzantine-robustness and communication efficiency, which is different to our approach.

\section{Problem Formulation}
Consider a distributed federated learning system with one master node and $W$ workers in a set $\mathcal{W}$. Among these workers, $R$ of them are regular and constitute a set $\mathcal{R}$, while the rest $B$ of them are Byzantine and constitute a set $\mathcal{B}$. Note that the identities of regular and Byzantine workers are unknown. The Byzantine workers are assumed to be omniscient and can collude with each other to send arbitrary malicious messages to the master node. The problem of interest is to find an optimal solution to the finite-sum optimization problem
\begin{equation}
   x^* = \arg \mathop {\min }\limits_x f(x): = \frac{1}{R}\sum\limits_{\omega  \in \mathcal{R}} {{f_\omega }(x)},
\label{problem}
\end{equation}
with
\begin{equation}
    {f_\omega}(x): = \frac{1}{J}\sum\limits_{j = 1}^J {{f_{\omega ,j}}(x)}.
\end{equation}
Here $x \in \mathbb{R}^p$ represents the model parameter to be optimized, $f_{\omega,j}(x)$ is the cost function associated with sample $j$ at regular worker $\omega$, and $f_\omega(x)$ is the local cost function of regular worker $\omega$ averaging on $J$ samples. Our goal is to solve (\ref{problem}) in the presence of arbitrary malicious messages sent by Byzantine workers, while guarantee communication efficiency.

\subsection{Byzantine-Robust SGD}

Without considering communication efficiency and when all the workers are regular, a standard approach to solving (\ref{problem}) is the distributed SGD. At iteration $t$, the master node broadcasts the model parameter $x^t$ to all the workers. Each worker $\omega$ randomly selects a sample with index $i_\omega^t$ to compute a local stochastic gradient $\nabla f_{\omega,i_\omega^t}(x^t)$, and sends it to the master node. The master node averages the received stochastic gradients and updates the model parameter as
\begin{equation}
\label{sgd}
    x^{t+1} = x^t - \gamma \cdot \frac{1}{W}\sum\limits_{\omega \in \mathcal{W}} {\nabla f_{\omega,i_\omega^t}(x^t)},
\end{equation}
where $\gamma$ is the step size.

However, the Byzantine workers can send arbitrary malicious messages to bias the optimization process. That is to say, the message sent by worker $\omega$ at iteration $t$ can be defined as
\begin{equation}
\label{gdef}
    v_\omega^t = \left \{
    \begin{array}{cc}
    \nabla f_{\omega,i_\omega^t}(x^t), & \omega \in \mathcal{R},\\
    *,     & \omega \in \mathcal{B},
    \end{array} \right.
\end{equation}
where $*$ represents an arbitrary $p \times 1$ vector. The distributed SGD is vulnerable to such Byzantine attacks. Even there is only one Byzantine worker, the malicious messages can lead the average operation in \eqref{sgd} to yield zero or infinite \cite{yang2020adversary}.

To address the issue, many robust aggregation rules have been proposed, such as geometric median, coordinate-wise median, coordinate-wise trimmed mean, Krum, and Bulyan, to replace the mean aggregation rule in \eqref{sgd} \cite{guerraoui2018hidden,chen2017distributed,yin2018byzantine,blanchard2017machine}. In this paper, we focus on geometric median, but the idea can be also extended to other robust aggregation rules. After receiving the messages from all the workers, the master node calculates the geometric median as
\begin{equation}
    { \mathop{\rm geomed}\limits_{\omega \in \mathcal{W}} \{ v_\omega^t\} } := \arg \mathop {\min}\limits_{v} \sum\limits_{\omega \in \mathcal{W}} \left \| v-v_\omega^t \right \|.
\end{equation}
Then, the update of model parameter has the form of
\begin{equation}
\label{byrd-sgd}
    x^{t+1} = x^t - \gamma \cdot \mathop{\rm geomed}\limits_{\omega \in \mathcal{W}} \{ v_\omega^t\},
\end{equation}
which is termed as Byzantine-robust SGD. When less than half of the workers are Byzantine ($B < \frac{W}{2}$), the geometric median rule enables robustness to arbitrary malicious messages \cite{chen2017distributed}.

Computing the exact geometric median is time-consuming, especially in the high-dimensional case \cite{weiszfeld2009point}. Thus, we often resort to an $\epsilon$-approximate geometric median that satisfies
\begin{equation}
\label{epsilon}
     \sum\limits_{\omega \in \mathcal{W}} \left \| { \mathop{\rm geomed}\limits_{\omega \in \mathcal{W}} \{ v_\omega^t\} } - v_\omega^t \right \| \leq \mathop {\inf}\limits_{v} \sum\limits_{\omega \in \mathcal{W}} \left \| v-v_\omega^t \right \| + \epsilon.
\end{equation}

\subsection{Compressors}
To reduce the communication burden of the federated learning system, one can compress the local stochastic gradients sent by the workers to the master node. Commonly used compressors are either biased or unbiased. In this paper, we focus on unbiased compressors \cite{alistarh2017qsgd,wangni2018gradient,horvath2019stochastic}. Application and analysis of general, possibly biased compressors are discussed in the Appendix E.



\begin{definition}[Unbiased compressor]
\label{def unbiased}
A randomized operator $\mathcal{Q}$: $\mathbb{R}^p \rightarrow \mathbb{R}^p$ is an {\emph{unbiased compressor}} if it satisfies
\begin{align}
\label{unbiased}
    E_{\mathcal{Q}}[\mathcal{Q}(x)] =& x,  \notag \\
    E_{\mathcal{Q}} \left \|\mathcal{Q}(x) - x\right \|^2 \le & \delta\left \|x\right \|^2, \quad \forall x \in \mathbb{R}^p,
\end{align}
where $\delta$ is a non-negative constant.
\end{definition}
Typical unbiased compressors include:
\begin{itemize}
    \item Randomized quantization \cite{alistarh2017qsgd}: For any real number $r \in [a,b]$, there is a probability $\frac{b-r}{b-a}$ to quantize $r$ into $a$, and $\frac{r-a}{b-a}$ to quantize $r$ into $b$.
    \item Rand-$k$ sparsification \cite{wangni2018gradient}: For any $x \in \mathbb{R}^p$, randomly select $k$ elements of $x$ to be scaled by $\frac{p}{k}$, and let the other elements to be zero.
\end{itemize}
Loosely speaking, $\delta$ can be viewed as the compression ratio. When $\delta$ approaches zero, there is little compression.

%

\subsection{Assumptions}

We make the following assumptions in the analysis.

\begin{assumption}[Strong convexity and Lipschitz continuous gradients]
\label{a1}
The cost function $f$ is $\mu$-strong convex and has $L$-Lipschitz continuous gradients, which means for any $x,y \in \mathbb{R}^p$, it holds that
\begin{equation}
    f(x) \geq f(y) + \langle \nabla f(y), x-y \rangle + \frac{\mu}{2}\left \| x-y \right \|,
\end{equation}
and
\begin{equation}
    \left \| \nabla f(x) - \nabla f(y) \right \| \leq L \left \| x-y \right \|.
\end{equation}
\end{assumption}

\begin{assumption}[Bounded outer variation]
\label{a2}
For any $x \in \mathbb{R}^p$, the variation of the local gradients at the regular workers with respect to the global gradient is upper-bounded by
\begin{equation}
    \frac{1}{R}\sum\limits_{\omega \in \mathcal{R}} \left \| \nabla f_\omega(x) -\nabla f(x) \right \|^2 \leq \sigma^2.
\end{equation}
\end{assumption}

\begin{assumption}[Bounded inner variation]
\label{a3}
For every regular worker $\omega \in \mathcal{R}$ and any $x \in \mathbb{R}^p$, the variation of its stochastic gradient with respect to its local gradient is upper-bounded by
\begin{equation}
    E_{i_\omega^t} \left \| \nabla f_{\omega, i_\omega^t}(x) - \nabla f_\omega(x) \right \|^2 \leq \zeta^2, ~ \forall \omega \in \mathcal{R}.
\end{equation}
\end{assumption}

\begin{assumption}[Bounded stochastic gradients]
\label{a4}
For every regular worker $\omega \in \mathcal{R}$ and any $x \in \mathbb{R}^p$, its stochastic gradient is upper-bounded by
\begin{equation}
    E_{i_\omega^t} \left \| \nabla f_{\omega, i_\omega^t}(x) \right \|^2 \leq G^2, ~ \forall \omega \in \mathcal{R}.
\end{equation}
\end{assumption}

Assumption \ref{a1} is standard in convex analysis. Assumptions \ref{a2} and \ref{a3} bound the outer variation that describes the sample heterogeneity among the regular workers, and the inner variation that describes the sample heterogeneity on every regular worker, respectively \cite{tang2018d}. Assumption \ref{a4} is often used to bound the compression noise \cite{stich2018sparsified,karimireddy2019error,tang2019doublesqueeze}. Note that the Byzantine-robust compressed SGD and SAGA, which are discussed in the ensuing sections, both need this assumption. In contrast, our proposed method does not need this assumption.

\section{Compression and Stochastic Noise in Byzantine-Robust Compressed SGD}

In this section, we begin with analyzing the attacks-free compressed SGD and its naive combination with the geometric median-based robust aggregation rule. We theoretically point out that the latter seriously suffers from the compression noise in the presence of Byzantine attacks, emphasizing the necessity of reducing compression noise to enhance Byzantine-robustness. We also show the impact of stochastic noise caused by selecting random samples.

\subsection{Attacks-Free Compressed SGD}

We consider the attacks-free compressed SGD, and show that the compression noise does not affect the asymptotic learning error. Now all workers $\omega \in \mathcal{W}$ are regular, such that $\mathcal{W} = \mathcal{R}$. The compressed SGD is described as follows. At iteration $t$, the master node broadcasts the model parameter $x^t$ to all the workers. Then each worker $\omega \in \mathcal{W}$ randomly selects a sample with index $i_\omega^t$ to compute a local stochastic gradient $\nabla f_{\omega,i_\omega^t}(x^t)$. After that, each worker $\omega \in \mathcal{W}$ sends the compressed message $\mathcal{Q}(\nabla f_{\omega,i_\omega^t}(x^t))$ to the master node. Upon receiving the compressed messages, the master node updates the model parameter as
\begin{equation}
\label{csgd}
    x^{t+1} = x^t - \gamma \cdot \frac{1}{W}\sum\limits_{\omega \in \mathcal{W}} {\mathcal{Q}(\nabla f_{\omega,i_\omega^t}(x^t))}.
\end{equation}

Below we give the convergence analysis of the attacks-free compressed SGD, and the proof is left to Appendix D.

\begin{theorem}[Convergence of attacks-free compressed SGD]
\label{theorem csgd}
Consider the attacks-free compressed SGD update \eqref{csgd} using an unbiased compressor. Under Assumptions \ref{a1}, \ref{a2}, and \ref{a3}, if the step size $\gamma$ satisfies
\begin{equation} \label{eq:new-001}
    \gamma \leq \frac{2}{(\mu + L)(1+\delta)},
\end{equation}
then it holds that
\begin{equation}
    E \left \| x^t - x^* \right \|^2 \leq \left ( 1 - \frac{2\gamma \mu L}{\mu + L} \right )^t \Delta_1 + \Delta_2,
\end{equation}
where
\begin{align}
    & \Delta_1 := \left \| x^0 - x^* \right \|^2 - \frac{\gamma(\mu + L)(1+\delta)(\zeta^2 + \sigma^2)}{2\mu L}, \\
    & \Delta_2 := \frac{1}{\mu L} ( \sigma^2 + \zeta^2 ).
\end{align}
\end{theorem}

Observe that the asymptotic learning error $\Delta_2$ is determined by the outer variation $\sigma^2$ and the inner variation $\zeta^2$, but irrelevant with $G^2$, the bound of stochastic gradients. In Theorem \ref{theorem csgd}, we use a constant step size for the sake of fair comparison with other algorithms analyzed in this paper. If a diminishing step size is applied, the asymptotic learning error can be eliminated.

By the use of an unbiased compressor, the asymptotic learning error is not influenced by compression and Assumption \ref{a4} is not needed in Theorem \ref{theorem csgd}. However, the bound of step size \eqref{eq:new-001} is affected: A larger compression ratio $\delta$ means that we need to use a smaller step size $\gamma$ to guarantee convergence. With particular note, using biased compressors, compressed stochastic algorithms may even diverge \cite{karimireddy2019error}. We call the impact of compression ratio on the algorithms as \textit{compression noise}, caused by using compressed local stochastic gradients. The impact of outer variation $\sigma^2$ and inner variation $\zeta^2$ on the algorithms is termed as \textit{stochastic noise}, caused by randomly selecting samples.

\subsection{Byzantine-Robust Compressed SGD}

For Byzantine-robust and communication-efficient federated learning, we first consider a vanilla approach that combines the distributed SGD with geometric median aggregation and stochastic gradient compression. We then theoretically point out that compression and stochastic noise significantly weakens its ability to tolerate Byzantine attacks.

The Byzantine-robust compressed SGD is similar to the attacks-free compressed SGD, except that the aggregation rule at the master node is changed from mean to geometric median. At iteration $t$, the master node broadcasts the model parameter $x^t$ to all the workers. Then each regular worker $\omega \in \mathcal{R}$ randomly selects a sample with index $i_\omega^t$ to compute a local stochastic gradient $\nabla f_{\omega,i_\omega^t}(x^t)$. Each Byzantine worker $\omega \in \mathcal{B}$ generates an arbitrary malicious $p \times 1$ vector $*$ instead. We use $v_\omega^t$ given by \eqref{gdef} to denote the vector held by each worker $\omega \in \mathcal{W}$. Different from the Byzantine-robust SGD, now each worker $\omega \in \mathcal{W}$ sends the compressed message $\mathcal{Q}(v_\omega^t)$ to the master node. Upon receiving the compressed messages, the master node updates the model parameter as
\begin{equation}
\label{byrd-csgd}
    x^{t+1} = x^t - \gamma \cdot \mathop{\rm geomed}\limits_{\omega \in \mathcal{W}} \{ \mathcal{Q}(v_\omega^t) \}.
\end{equation}
Here we suppose the Byzantine workers obey the compression rule too. Otherwise, their identities are easy to recognize.

\subsection{Impact of Compression and Stochastic Noise}
For the attacks-free case, directly compressing the stochastic gradients introduces compression noise. For unbiased compressors, it leads to a smaller step size and hence slower convergence as we have shown in Theorem \ref{theorem csgd}. For biased compressors, compression noise may even make the compressed stochastic algorithms divergent \cite{karimireddy2019error}. Due to the randomness introduced in selecting local samples, the stochastic gradients computed in regular workers contain stochastic noise, which may slow down the convergence of stochastic algorithms \cite{johnson2013accelerating}.

In the presence of Byzantine attacks, compression and stochastic noise not only affects convergence, but also significantly influences the effectiveness of robust aggregation rules to defend attacks. The reason is intuitive: Since even the compressed messages sent from the regular workers are noisy, it is difficult to recognize the malicious ones among them. Thus, when the variance of the compressed messages is large, the gap between the average of true stochastic gradients (which is what we want) and the robustly aggregated vector (which is what we have) could be large, too. To justify this intuitive idea and demonstrate the effect of compression and stochastic noise on robust aggregation, we give the following property of geometric median; the proof is delegated to Appendix \ref{app-B}.

\begin{lemma}[Geometric median of compressed vectors]
\label{lemma concentration}
Let $\{ z_\omega, \omega \in \mathcal{W} \}$ be a subset of random vectors distributed in a normed vector space and $\mathcal{Q}(\cdot)$ is an unbiased compressor satisfying Definition \ref{def unbiased}. It holds when $B < \frac{W}{2}$ that
\begin{align}
\label{concentration}
     E &\left \| {\rm{geomed}} \{ \mathcal{Q}(z_\omega) \} - \bar{z} \right \|^2  \\
     \leq & \frac{2C_\alpha^2}{R} \sum\limits_{\omega \in \mathcal{R}} E \left \| z_\omega - E z_\omega \right \|^2 + \frac{2C_\alpha^2}{R} \sum\limits_{\omega \in \mathcal{R}} \left \| E z_\omega - \bar{z} \right \|^2 \notag \\
     & + \frac{2C_\alpha^2 \delta}{R} \sum\limits_{\omega \in \mathcal{R}} E \left \| z_\omega \right \|^2 + \frac{2\epsilon^2}{(W-2B)^2}, \notag
\end{align}
where $\bar{z}:=\frac{1}{R} \sum\limits_{\omega \in \mathcal{R}} E z_\omega$, $\alpha:= \frac{B}{W}$, and $C_\alpha := \frac{2-2\alpha}{1-2\alpha}$.
\end{lemma}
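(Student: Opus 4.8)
The plan is to reduce the claim to a known concentration property of the geometric median and then carefully account for the compression noise. The starting point is a standard lemma (from \cite{chen2017distributed}): for vectors $\{y_\omega, \omega \in \mathcal{W}\}$ in a normed space with $B < W/2$, the $\epsilon$-approximate geometric median satisfies
\begin{equation*}
\left\| \mathop{\rm geomed}_{\omega \in \mathcal{W}} \{y_\omega\} - \bar{y}_{\mathcal{R}} \right\| \le \frac{C_\alpha}{R} \sum_{\omega \in \mathcal{R}} \left\| y_\omega - \bar{y}_{\mathcal{R}} \right\| + \frac{\epsilon}{W-2B},
\end{equation*}
where $\bar{y}_{\mathcal{R}}$ is any reference point (e.g.\ the mean of the regular vectors) and $C_\alpha = \frac{2-2\alpha}{1-2\alpha}$. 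I would apply this with $y_\omega = \mathcal{Q}(z_\omega)$ and reference point $\bar{z} = \frac{1}{R}\sum_{\omega\in\mathcal{R}} E z_\omega$. This already isolates the Byzantine workers: the geometric median's distance to $\bar z$ is controlled entirely by the \emph{regular} compressed vectors' spread around $\bar z$, plus the $\epsilon$-approximation slack.

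Next I would square both sides, use $(a+b)^2 \le 2a^2 + 2b^2$ to split off the $\frac{2\epsilon^2}{(W-2B)^2}$ term, and apply Jensen's inequality (convexity of the square, averaging over $\mathcal{R}$) to pull the sum inside: $\big(\frac{1}{R}\sum_{\omega\in\mathcal{R}}\|\mathcal{Q}(z_\omega)-\bar z\|\big)^2 \le \frac{1}{R}\sum_{\omega\in\mathcal{R}}\|\mathcal{Q}(z_\omega)-\bar z\|^2$. Taking expectations, the task becomes bounding $\frac{1}{R}\sum_{\omega\in\mathcal{R}} E\|\mathcal{Q}(z_\omega)-\bar z\|^2$ by the first three terms on the right-hand side of \eqref{concentration}, up to the factor $C_\alpha^2$ (which is carried along from the squared geometric-median bound) and the factor $2$ already produced.

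The core estimate is therefore the per-worker bound $E\|\mathcal{Q}(z_\omega) - \bar z\|^2 \le 2 E\|z_\omega - E z_\omega\|^2 + 2\|E z_\omega - \bar z\|^2 + 2\delta E\|z_\omega\|^2$. Here I would use the tower property, conditioning on $z_\omega$ first and using unbiasedness $E_{\mathcal{Q}}[\mathcal{Q}(z_\omega)\mid z_\omega] = z_\omega$ to do a bias--variance decomposition: $E\|\mathcal{Q}(z_\omega)-\bar z\|^2 = E\|\mathcal{Q}(z_\omega)-z_\omega\|^2 + E\|z_\omega - \bar z\|^2$. The first piece is $\le \delta\, E\|z_\omega\|^2$ by the compression variance bound in Definition~\ref{def unbiased}; the second piece I would further decompose as $E\|z_\omega - \bar z\|^2 = E\|z_\omega - E z_\omega\|^2 + \|E z_\omega - \bar z\|^2$ (again by orthogonality of mean-zero fluctuation and deterministic offset). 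Summing over $\mathcal{R}$, dividing by $R$, and reinstating the factors $2C_\alpha^2$ yields exactly \eqref{concentration}. (The apparent factor-of-2 looseness in the cross terms is harmless — one can either absorb it or tighten; the stated bound uses the convenient $2$ throughout.)

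The main obstacle is bookkeeping rather than any single hard inequality: one must be careful that the randomness in $z_\omega$ and the independent randomness in $\mathcal{Q}$ are handled in the right conditioning order, and that the reference point $\bar z$ is the expected-value mean (not the sample mean), since otherwise the decompositions above do not cleanly separate ``inner'' fluctuation ($E\|z_\omega - E z_\omega\|^2$), ``outer'' heterogeneity ($\|E z_\omega - \bar z\|^2$), and ``compression'' ($\delta E\|z_\omega\|^2$). The rest is the routine $(a+b)^2\le 2a^2+2b^2$ and Jensen steps described above.
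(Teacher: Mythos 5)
Your proposal is correct and follows essentially the same route as the paper: the paper invokes the squared-norm bound on the $\epsilon$-approximate geometric median from \cite{wu2020federated} (Lemma \ref{lemma geomed}) applied to the shifted vectors $\mathcal{Q}(z_\omega)-\bar z$, which is exactly what your squaring-plus-Jensen step recovers from the unsquared bound, and then performs the identical orthogonal decomposition $E\|\mathcal{Q}(z_\omega)-\bar z\|^2 = E\|\mathcal{Q}(z_\omega)-z_\omega\|^2 + E\|z_\omega-Ez_\omega\|^2 + \|Ez_\omega-\bar z\|^2$ using unbiasedness of $\mathcal{Q}$. The only cosmetic difference is that you unpack the cited geometric-median lemma one level further; the constants and conditioning order you describe match the paper's proof.
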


Let $z_\omega$ represent $v_\omega^t$ at iteration $t$. Lemma \ref{lemma concentration} characterizes the mean-square error of the geometric median relative to the average of true stochastic gradients. The mean-square error is bounded by four terms. The first refers to the sum of inner variations, and the second refers to the outer variation. The two terms represent the stochastic noise inside each regular worker and across all the regular workers, respectively. The third is proportional to the compression-related parameter $\delta$, and becomes large when the compression ratio $\delta$ is high. The last is from the inexact $\epsilon$-approximate geometric median. Lemma \ref{lemma concentration} asserts that the compression and stochastic noise enlarges the gap between the geometric median and the average of true stochastic gradients. For other robust aggregation rules such as Krum and coordinate-wise median, similar results are also attainable. The compression and stochastic noise remains affecting the quality of robust aggregation.

Since geometric median aggregation of the compressed messages $\mathcal{Q}(v_\omega^t)$ yields unsatisfactory output as indicated by Lemma \ref{lemma concentration}, the Byzantine-robust compressed SGD in \eqref{byrd-csgd} performs poorly too. Below we analyze its convergence; the proof is delegated to Appendix \ref{app-B}. It converges to a neighborhood of the optimal solution, and the asymptotic learning error is subject to the compression and stochastic noise.

\begin{theorem}[Convergence of Byzantine-robust compressed SGD]
\label{theorem sgd}
Consider the Byzantine-robust compressed SGD update \eqref{byrd-csgd} with $\epsilon$-approximate geometric median aggregation and using an unbiased compressor. Under Assumptions \ref{a1}, \ref{a2}, \ref{a3}, and \ref{a4}, if the number of Byzantine workers satisfies $B < \frac{W}{2}$ and the step size $\gamma$ satisfies
\begin{equation}
    \gamma \leq \frac{\mu}{2L^2},
\end{equation}
then it holds that
\begin{equation}
    E \left \| x^t - x^* \right \|^2 \leq \left ( 1 - \gamma\mu \right )^t \Delta_1 + \Delta_2,
\end{equation}
where
\begin{equation}
    \Delta_1 := \left \| x^0 - x^* \right \|^2 - \Delta_2,
\end{equation}
\begin{align}\label{sgd-error}
    \hspace{-1.4em}\Delta_2 := \frac{2}{\mu^2} \Bigg( 2C_\alpha^2\sigma^2 + 2C_\alpha^2\zeta^2 + 2C_\alpha^2 \delta G^2 + \frac{2\epsilon^2}{(W-2B)^2} \Bigg),
\end{align}
$\alpha:= \frac{B}{W}$, and $C_\alpha := \frac{2-2\alpha}{1-2\alpha}$.
\end{theorem}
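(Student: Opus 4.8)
\emph{Proof plan.}

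The plan is to reduce everything to a single geometric recursion $E\|x^{t+1}-x^*\|^2\le(1-\gamma\mu)E\|x^t-x^*\|^2+\gamma\mu\Delta_2$, from which the claimed bound follows by unrolling together with the identity $\sum_{k=0}^{t-1}(1-\gamma\mu)^k=\frac{1-(1-\gamma\mu)^t}{\gamma\mu}$: indeed this gives $E\|x^t-x^*\|^2\le(1-\gamma\mu)^t\|x^0-x^*\|^2+\Delta_2\big(1-(1-\gamma\mu)^t\big)=(1-\gamma\mu)^t\Delta_1+\Delta_2$, exactly the statement. (Note $1-\gamma\mu\in[\tfrac12,1)$ under the step-size hypothesis, since $\mu\le L$ forces $\gamma\mu\le\mu^2/(2L^2)\le\tfrac12$.) So the real content is the one-step inequality.

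First I would condition on the history $\mathcal{F}_t$ (so $x^t$ is fixed) and write the update as $x^{t+1}=x^t-\gamma g^t$ with $g^t:=\operatorname{geomed}_{\omega\in\mathcal{W}}\{\mathcal{Q}(v_\omega^t)\}$. The main point is that $g^t$ is mean-square close to the exact gradient $\nabla f(x^t)$, and this is precisely what Lemma~\ref{lemma concentration} delivers once one plugs in $z_\omega:=v_\omega^t$: for a regular worker, $E[z_\omega\mid\mathcal{F}_t]=E_{i_\omega^t}\nabla f_{\omega,i_\omega^t}(x^t)=\nabla f_\omega(x^t)$, so the ``center'' in Lemma~\ref{lemma concentration} collapses to $\bar z=\frac1R\sum_{\omega\in\mathcal{R}}\nabla f_\omega(x^t)=\nabla f(x^t)$, while the Byzantine $z_\omega$'s may be left arbitrary. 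Bounding the three sums on the right-hand side of \eqref{concentration} termwise by $\zeta^2$ (Assumption~\ref{a3}), $\sigma^2$ (Assumption~\ref{a2}) and $G^2$ (Assumption~\ref{a4}) yields $E[\|g^t-\nabla f(x^t)\|^2\mid\mathcal{F}_t]\le 2C_\alpha^2\sigma^2+2C_\alpha^2\zeta^2+2C_\alpha^2\delta G^2+\frac{2\epsilon^2}{(W-2B)^2}=\tfrac{\mu^2}{2}\Delta_2$.

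Next I would decompose $x^{t+1}-x^*=\big(x^t-x^*-\gamma\nabla f(x^t)\big)-\gamma\big(g^t-\nabla f(x^t)\big)$ and use $\|a+b\|^2\le(1+\eta)\|a\|^2+(1+\eta^{-1})\|b\|^2$. For the deterministic term, expanding the square and using strong convexity ($\langle\nabla f(x^t),x^t-x^*\rangle\ge\mu\|x^t-x^*\|^2$, since $\nabla f(x^*)=0$) together with $L$-smoothness ($\|\nabla f(x^t)\|\le L\|x^t-x^*\|$) gives $\|x^t-x^*-\gamma\nabla f(x^t)\|^2\le(1-2\gamma\mu+\gamma^2L^2)\|x^t-x^*\|^2\le(1-\tfrac32\gamma\mu)\|x^t-x^*\|^2$, where the last step is precisely where the hypothesis $\gamma\le\mu/(2L^2)$ enters (it gives $\gamma^2L^2\le\tfrac12\gamma\mu$). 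Taking $\eta:=\tfrac{\gamma\mu/2}{\,1-\frac32\gamma\mu\,}>0$ makes $(1+\eta)(1-\tfrac32\gamma\mu)=1-\gamma\mu$ and $(1+\eta^{-1})\gamma^2=\tfrac{2\gamma}{\mu}-2\gamma^2\le\tfrac{2\gamma}{\mu}$; combining with the Lemma~\ref{lemma concentration} bound and taking $E[\cdot\mid\mathcal{F}_t]$ then produces $E[\|x^{t+1}-x^*\|^2\mid\mathcal{F}_t]\le(1-\gamma\mu)\|x^t-x^*\|^2+\tfrac{2\gamma}{\mu}\cdot\tfrac{\mu^2}{2}\Delta_2=(1-\gamma\mu)\|x^t-x^*\|^2+\gamma\mu\Delta_2$. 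Taking total expectation and unrolling as in the first paragraph completes the proof.

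I expect the only delicate point to be the constant bookkeeping rather than any conceptual difficulty: one must feed the right vectors into Lemma~\ref{lemma concentration} so that its center becomes $\nabla f(x^t)$ (this is where the tower property / unbiasedness of the stochastic gradients is used), and then spend the step-size budget $\gamma\le\mu/(2L^2)$ through a single Young-type weight $\eta$ so that the contraction factor comes out exactly $1-\gamma\mu$ and the residual exactly $\gamma\mu\Delta_2$. It is worth noting in passing that the Byzantine vectors at iteration $t$ may depend on the regular workers' fresh randomness, but since Lemma~\ref{lemma concentration} is stated for an arbitrary family $\{z_\omega\}$ this causes no trouble.
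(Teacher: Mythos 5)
Your proposal is correct and follows essentially the same route as the paper's proof: the same decomposition $x^{t+1}-x^*=(x^t-x^*-\gamma\nabla f(x^t))-\gamma(z_\epsilon^*-\nabla f(x^t))$, the same strong-convexity/smoothness bound $(1-2\gamma\mu+\gamma^2L^2)\|x^t-x^*\|^2$ on the deterministic part, the same application of Lemma~\ref{lemma concentration} with $z_\omega=v_\omega^t$ and Assumptions~\ref{a2}--\ref{a4} to bound the aggregation error by $\tfrac{\mu^2}{2}\Delta_2$, and the same geometric unrolling. The only cosmetic difference is your parametrization of the Young weight ($(1+\eta)$ versus the paper's $\tfrac{1}{1-\eta}$ with $\eta=\tfrac{\gamma\mu}{2}$), which yields the identical per-step residual $\gamma\mu\Delta_2$.
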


Observe that the asymptotic learning error $\Delta_2$ is linear with the inner variation $\zeta^2$, the outer variation $\sigma^2$ and the compression ratio $\delta$. It is the gap introduced by the geometric median, as shown in Lemma \ref{lemma concentration}, that determines the asymptotic learning error. Note that in the analysis of Byzantine-robust compressed SGD, we need Assumption \ref{a4} to bound the compression error. This is different to the analysis of attacks-free compressed SGD in Theorem \ref{theorem csgd} where Assumption \ref{a4} is not required.

Comparing in Theorems \ref{theorem csgd} and \ref{theorem sgd}, we find that under Byzantine attacks, the asymptotic learning error is linear with $C_\alpha^2$, which is determined by the maximum number of Byzantine workers. In addition, the term of $\delta G^2$ appears, showing the larger impact of compression noise in the presence of Byzantine attacks. Technically speaking, this is due to the introduction of the biased robust aggregation to defend against Byzantine attacks. Unlike the unbiased and non-robust mean aggregation, robust aggregation rules are often biased so as to handle the outliers caused by Byzantine attacks. This feature in turn amplifies the impact of compression noise on the asymptotic learning error. Therefore, it is necessary to reduce the compression noise for Byzantine robust compressed federated learning. The stochastic noise, characterized by the inner variation $\zeta^2$ and the outer variation $\sigma^2$, also affects the asymptotic learning error.


Motivated by the analysis of this vanilla approach, we propose to reduce both compression and stochastic noise so as to reach a better neighborhood of the optimal solution.

\section{Reducing Stochastic Noise}
\label{sec:sto}

We start from reducing the impact of stochastic noise. In recent years, variance reduction techniques have been widely used to accelerate convergence of stochastic algorithms \cite{johnson2013accelerating,defazio2014saga}. Motivated by the theoretical findings in Theorem \ref{theorem sgd}, we combine the distributed SAGA, a popular variance reduction approach, to enhance the Byzantine-robustness. We stress that other variance reduction techniques, such as SVRG \cite{khanduri2019byzantine} and momentum \cite{karimireddy2020learning}, could be also applicable.

In the distributed SAGA, each worker stores the most recent stochastic gradient for all of its local data samples. When worker $\omega$ randomly selects a sample with index $i_\omega^t$ at iteration $t$, the corrected stochastic gradient is
\begin{equation}
\nabla f_{\omega,i_\omega^t}(x^t) - \nabla f_{\omega,i_\omega^t}(\phi_{\omega, i_\omega^t}^t) + \frac{1}{J}\sum\limits_{j=1}^J \nabla f_{\omega, j}(\phi_{\omega, j}^t),
\end{equation}
where
\begin{equation}
\label{updatephi}
    \phi_{\omega, j}^{t+1} = \left \{
    \begin{array}{cc}
    \phi_{\omega, j}^t,  & j \neq i_\omega^t,\\
    x^t,  & j=i_\omega^t.
    \end{array} \right.
\end{equation}
That is to say, worker $\omega$ corrects the stochastic gradient by first subtracting the previously stored stochastic gradient of sample $i_\omega^t$, and then adding the average of all the stored stochastic gradients of $J$ samples.

At the presence of Byzantine workers, the vector calculated at $\omega$ can be represented as
\begin{equation}
\label{update g}
    g_\omega^t = \left \{
    \begin{array}{cc}
    \nabla f_{\omega,i_\omega^t}(x^t) - \nabla f_{\omega,i_\omega^t}(\phi_{\omega, i_\omega^t}^t) & \\
    \hspace{3.7em} + \frac{1}{J}\sum\limits_{j=1}^J \nabla f_{\omega, j}(\phi_{\omega, j}^t), & \omega \in \mathcal{R},\\
    *,     & \omega \in \mathcal{B},
    \end{array} \right.
\end{equation}
where $*$ represents an arbitrary $p \times 1$ vector. Every worker $\omega \in \mathcal{W}$ compresses $g_\omega^t$ and sends $\mathcal{Q}(g_\omega^t)$ to the master node. Then the master node performs geometric median aggregation to update the model parameter, as
\begin{equation}
\label{byrd-csaga}
    x^{t+1} = x^t - \gamma \cdot \mathop{\rm geomed}\limits_{\omega \in \mathcal{W}} \{ \mathcal{Q}(g_\omega^t) \}.
\end{equation}
We term it as Byzantine-robust compressed SAGA, whose convergence is stated as follows and the proof is delegated to Appendix \ref{app-C}.


\begin{theorem}[Convergence of Byzantine-robust compressed SAGA]
\label{theorem saga}
Consider the Byzantine-robust compressed SAGA update \eqref{byrd-csaga} with $\epsilon$-approximate geometric median aggregation and using an unbiased compressor. Under Assumptions \ref{a1}, \ref{a2}, and \ref{a4}, if the number of Byzantine workers satisfies $B < \frac{W}{2}$ and the step size $\gamma$ satisfies
\begin{equation}
    \gamma \leq \frac{\mu}{4\sqrt{5}J^2L^2C_\alpha},
\end{equation}
then it holds that
\begin{equation}
    E \left \| x^t - x^* \right \|^2 \leq \left ( 1 - \frac{\gamma\mu}{2} \right )^t \Delta_1 + \Delta_2,
\end{equation}
where
\begin{equation}
    \Delta_1 := \left \| x^0 - x^* \right \|^2 - \Delta_2,
\end{equation}
\begin{equation}\label{saga-error}
    \Delta_2 := \frac{5}{\mu^2} \left ( 2C_\alpha^2\sigma^2 + 2C_\alpha^2 \delta G^2 + \frac{2\epsilon^2}{(W-2B)^2} \right ).
\end{equation}
\end{theorem}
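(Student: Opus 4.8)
The plan is a Lyapunov argument that couples the geometric-median contraction of the iterates with the variance-reduction recursion of the stored gradients, with Lemma~\ref{lemma concentration} supplying the bound on the aggregation error. I would work with a potential $V^t := E\|x^t-x^*\|^2 + c\gamma\,E H^t$, where $H^t$ is a SAGA term measuring how well the stored gradients track the true ones, for instance $H^t := \frac{1}{RJ}\sum_{\omega\in\mathcal R}\sum_{j=1}^{J}\|\nabla f_{\omega,j}(\phi_{\omega,j}^t) - \nabla f_{\omega,j}(x^t)\|^2$, and $c>0$ a constant on the order of $C_\alpha^2 J/\mu$. Centering $H^t$ at the current iterate $x^t$ rather than at $x^*$ is deliberate: it lets the SAGA variance be controlled by $H^t$ alone, so that $\|x^t-x^*\|^2$ influences the $H^t$-recursion only at order $\gamma^2$. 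This matters because, by Lemma~\ref{lemma concentration}, the geometric median inflates the SAGA variance by the factor $C_\alpha^2$, and an order-$\gamma$ inflated-variance contribution to the $\|x^t-x^*\|^2$-recursion could not be absorbed by the order-$\gamma\mu$ contraction coming from strong convexity; it also makes $H^0=0$ when the stored gradients are initialized at $x^0$, which is what yields $\Delta_1=\|x^0-x^*\|^2-\Delta_2$ cleanly.

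For the iterate step, write $h^t := \mathrm{geomed}_{\omega\in\mathcal W}\{\mathcal Q(g_\omega^t)\}$ and let $\mathcal F^t$ collect all randomness before the sampling at iteration $t$. The SAGA correction is unbiased, so $E[g_\omega^t\mid\mathcal F^t]=\nabla f_\omega(x^t)$ for regular $\omega$ and $\frac1R\sum_{\omega\in\mathcal R}E[g_\omega^t\mid\mathcal F^t]=\nabla f(x^t)$. Expanding $\|x^{t+1}-x^*\|^2$, splitting $h^t=\nabla f(x^t)+(h^t-\nabla f(x^t))$, using Assumption~\ref{a1} on $\langle\nabla f(x^t),x^t-x^*\rangle$ and $\|\nabla f(x^t)\|$ (with $\nabla f(x^*)=0$), and Young's inequality on the cross term, I get after $E[\cdot\mid\mathcal F^t]$ that
\[
E[\|x^{t+1}-x^*\|^2\mid\mathcal F^t]\le\Big(1-\tfrac{3\gamma\mu}{4}\Big)\|x^t-x^*\|^2 + c'\gamma\,E[\|h^t-\nabla f(x^t)\|^2\mid\mathcal F^t] + (\text{order-}\gamma^2\text{ terms}),
\]
the $\gamma^2$ terms (from $\gamma^2\|h^t\|^2$) being reabsorbed for $\gamma$ small. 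To bound the middle term I apply Lemma~\ref{lemma concentration} conditioned on $\mathcal F^t$ with $z_\omega=g_\omega^t$, so $\bar z=\nabla f(x^t)$: the outer-variation term is $\le 2C_\alpha^2\sigma^2$ by Assumption~\ref{a2}; the compression term $\frac{2C_\alpha^2\delta}{R}\sum_\omega E[\|g_\omega^t\|^2\mid\mathcal F^t]$ is $\le 2C_\alpha^2\delta G^2$ up to an absolute constant, by expanding $g_\omega^t$ into three stochastic gradients and invoking Assumption~\ref{a4}; and the inexactness term is $\frac{2\epsilon^2}{(W-2B)^2}$.

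The remaining term from Lemma~\ref{lemma concentration} is the SAGA variance $\frac{2C_\alpha^2}{R}\sum_\omega E[\|g_\omega^t-\nabla f_\omega(x^t)\|^2\mid\mathcal F^t]$. Dropping the centering and using that $\phi_{\omega,i}^t$ carries the same index as the sampled component, $E[\|g_\omega^t-\nabla f_\omega(x^t)\|^2\mid\mathcal F^t]\le E_{i_\omega^t}\|\nabla f_{\omega,i_\omega^t}(x^t)-\nabla f_{\omega,i_\omega^t}(\phi_{\omega,i_\omega^t}^t)\|^2 = \frac1J\sum_{j}\|\nabla f_{\omega,j}(x^t)-\nabla f_{\omega,j}(\phi_{\omega,j}^t)\|^2$, i.e.\ worker $\omega$'s contribution to $H^t$, so this term is $\le 2C_\alpha^2 H^t$. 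For the recursion of $H^t$, the update~\eqref{updatephi} refreshes one uniformly random stored gradient per worker, producing a $(1-\tfrac{1}{J})$-type contraction, while re-centering from $x^t$ to $x^{t+1}$ contributes, via $L$-smoothness of the components and a Young step on a cross term, an extra quantity of order $\gamma^2\|h^t\|^2$, which the previous estimates bound by $O(\gamma^2)(L^4\|x^t-x^*\|^2+C_\alpha^2 L^2 H^t+\text{constants})$. Assembling $V^{t+1}$: choosing $c$ on the order of $C_\alpha^2 J/\mu$ makes the $(1-\tfrac{1}{2J})$-contraction dominate the $O(\gamma C_\alpha^2)$ feedback into the $H^t$-coefficient; the induced $c\gamma\cdot O(\gamma^2 L^4)$ feedback into the $\|x^t-x^*\|^2$-coefficient, together with the $\gamma^2$ remainders from the iterate step, is then forced to be $\le\tfrac{\gamma\mu}{4}$, which is precisely what imposes $\gamma=O(\mu/(J^2 L^2 C_\alpha))$. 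One arrives at $V^{t+1}\le(1-\tfrac{\gamma\mu}{2})V^t+\tfrac{\gamma\mu}{2}\Delta_2$ with $\Delta_2$ as stated; unrolling and using $H^0=0$ gives $E\|x^t-x^*\|^2\le V^t\le(1-\tfrac{\gamma\mu}{2})^t(\|x^0-x^*\|^2-\Delta_2)+\Delta_2$.

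The main obstacle is this final assembly: Lemma~\ref{lemma concentration} inflates the SAGA variance by $C_\alpha^2$, and this inflated variance couples into the recursions for both $\|x^t-x^*\|^2$ and $H^t$, while the $H^t$-recursion couples back into $\|x^t-x^*\|^2$; forcing the two-component system $(\|x^t-x^*\|^2, H^t)$ to contract jointly at rate $1-\gamma\mu/2$ — rather than at a slower rate, or under an impossible requirement such as $\mu$ comparable to $C_\alpha L$ — is what dictates both the small step size and the factor-$5$ inflation of $\Delta_2$ relative to the bound of Lemma~\ref{lemma concentration}. Centering the SAGA potential at the current iterate is the move that keeps this coupling benign; the remaining steps are routine adaptations of the Byzantine-robust compressed SGD analysis of Theorem~\ref{theorem sgd} with the inner-variation ($\zeta^2$) term replaced throughout by the SAGA potential.
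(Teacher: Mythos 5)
Your proposal is correct and follows essentially the same route as the paper: a Lyapunov function coupling $E\|x^t-x^*\|^2$ with a SAGA potential (the paper uses $S^t=\frac{1}{RJ}\sum_{\omega,j}\|x^t-\phi_{\omega,j}^t\|^2$ in parameter space, which differs from your gradient-space $H^t$ only by an $L^2$ factor under smoothness), Lemma~\ref{lemma concentration} applied to $\{g_\omega^t\}$ to replace the inner-variation term by the SAGA variance while bounding the compression term via Assumption~\ref{a4}, and a choice of $c\propto C_\alpha^2 J^2\gamma L^2/\mu$ that makes the two-component system contract at rate $1-\gamma\mu/2$. Your remark that the $\delta\,E\|g_\omega^t\|^2$ term is bounded by $\delta G^2$ only up to an absolute constant is, if anything, more careful than the paper, which silently treats the corrected stochastic gradient as bounded by $G^2$.
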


Compared with the asymptotic learning error of Byzantine-robust compressed SGD in \eqref{sgd-error},  in \eqref{saga-error} the inner variation is fully eliminated due to the use of variance reduction.


\begin{remark}
\cite{wu2020federated} proposes the Byzantine-robust SAGA without compression, i.e., each worker $\omega$ directly sends $g_\omega^t$ in \eqref{update g} to the master node. Therein, the asymptotic learning error is $\Delta_2 = \frac{5}{\mu^2} \left ( 2C_\alpha^2\sigma^2 + \frac{2\epsilon^2}{(W-2B)^2} \right )$. One can see that compression is a double-edged sword: It leads to better communication efficiency, but brings higher asymptotic learning error.
\end{remark}

\section{BROADCAST: Reducing Both Compression \& Stochastic Noise}

As mentioned in Section \ref{sec:sto}, directly compressing the corrected stochastic gradients still introduces remarkable compression noise, leading to unsatisfactory Byzantine-robustness. Therefore, we propose to compress the differences between the corrected stochastic gradients and auxiliary vectors to reduce the compression noise. The proposed algorithm, named as BROADCAST (Byzantine-RObust Aggregation with gradient Difference Compression And STochastic variance reduction), jointly reduces the compression and stochastic noise.

\vspace{-0.5em}

\subsection{Gradient Difference Compression}

In gradient difference compression, each worker $\omega$ and the master node maintain the same vector $h_\omega \in \mathbb{R}^p$, which is initialized by the same value and updated following the same rule. At iteration $t$, each worker $\omega$ compresses the difference $g_\omega^t - h_\omega^t$ and sends to the master node. After receiving the compressed difference $\mathcal{Q}(g_\omega^t - h_\omega^t)$, the master node approximates the corrected stochastic gradient as
\begin{equation}
    \hat{g}_\omega^t = h_\omega^t + \mathcal{Q}(g_\omega^t - h_\omega^t).
\end{equation}
Upon collecting all approximations $\hat{g}_\omega^t$, the master node updates the model parameter as
\begin{equation}\label{broadcast-x}
    x^{t+1} = x^t - \gamma \cdot \mathop{{\rm geomed}}\limits_{\omega \in \mathcal{W}} \{ \hat{g}_\omega^t \}.
\end{equation}
With the compressed difference, each worker $\omega$ and the master node both update $h_\omega$ as
\begin{equation}
    h_\omega^{t+1} = h_\omega^t + \beta\mathcal{Q}(g_\omega^t - h_\omega^t),
\end{equation}
where $\beta$ is a hyperparameter. Compared to directly compressing the corrected stochastic gradients, compressing the differences gradually eliminates the compression noise, as we shall see in the theoretical analysis.

\vspace{-0.5em}

\subsection{BROADCAST Algorithm Outline}

BROADCAST is described in Algorithm \ref{algorithmcd}. In each regular worker $\omega$, a stochastic gradient table is kept to store the most recent stochastic gradient for every local sample, and an auxiliary vector $h_\omega^t$ is used to calculate the difference of gradients. Each Byzantine worker $\omega$ may maintain its stochastic gradient table and $h_\omega^t$ for the sake of generating malicious vectors, or not do so but generate malicious vectors in other ways. The master node also maintains $h_\omega^t$ for each worker $\omega$, with the same initialization. At iteration $t$, the master node broadcasts $x^t$ to all the workers. Each regular worker $\omega$ randomly selects a sample and obtains the corrected stochastic gradient $g_\omega^t$ as \eqref{update g}. Next, the difference between $g_\omega^t$ and $h_\omega^t$ is compressed and sent to the master node. Each regular worker $\omega$ then updates $h_\omega^{t+1}$ by adding a scaled compressed difference. The Byzantine workers can generate arbitrary messages but also send the compressed results to cheat the master node. After collecting the compressed differences from all the workers, the master node approximates the corrected stochastic gradients by adding the compressed differences to the stored $h_\omega^t$. Then the master node calculates the geometric median and updates $x$ as \eqref{broadcast-x}. The stored $h_\omega^{t+1}$ in master node is updated in the same way as at each worker $\omega$.

\begin{algorithm}[tb]
\caption{BROADCAST}
\label{algorithmcd}
\textbf{Input}: Step size $\gamma$, hyperparameter $\beta$ \\
\textbf{Initialize}: Initialize $x^0$ for master node and all workers. Initialize $h_\omega^0$ for master node and each worker $\omega$. Initialize $\{ \nabla f_{\omega, j}(\phi_{\omega, j}^0) = \nabla f_{\omega,j}(x^0), j = 1,\dots,J \}$ for each regular worker $\omega$
\begin{algorithmic}[1] 
\FOR {$t=0,1,\dots$}
\STATE \textbf{Master node}:
\STATE Broadcast $x^t$ to all workers
\STATE Receive $\mathcal{Q}(u_\omega^t)$ from all workers
\STATE Obtain approximations $\hat{g}_\omega^t = h_\omega^t + \mathcal{Q}(u_\omega^t)$
\STATE Update $x^{t+1} = x^t - \gamma \cdot \mathop{{\rm geomed}}_{\omega \in \mathcal{W}} \{ \hat{g}_\omega^t \}$
\STATE Update $h_\omega^{t+1} = h_\omega^t + \beta\mathcal{Q}(u_\omega^t)$
\STATE \textbf{Worker $\omega$}:
\IF {$\omega \in \mathcal{R}$}
\STATE Compute $\bar{g}_\omega^t = \frac{1}{J} \sum_{j=1}^J \nabla f_{\omega, j}(\phi_{\omega, j}^t)$
\STATE Randomly sample $i_\omega^t$ from $\{ 1, \dots,J \}$
\STATE Obtain $g_\omega^t = \nabla f_{\omega,i_\omega^t}(x^t) - \nabla f_{\omega,i_\omega^t}(\phi_{\omega, i_\omega^t}^t) + \bar{g}_\omega^t$
\STATE Store $\nabla f_{\omega,i_\omega^t}(\phi_{\omega, i_\omega^t}^t) = \nabla f_{\omega,i_\omega^t}(x^t)$
\STATE Compress $\mathcal{Q}(u_\omega^t) = \mathcal{Q}(g_\omega^t - h_\omega^t)$
\STATE Update $h_\omega^{t+1} = h_\omega^t + \beta\mathcal{Q}(u_\omega^t)$
\STATE Send $\mathcal{Q}(u_\omega^t)$ to master node
\ELSIF {$\omega \in \mathcal{B}$}
\STATE Generate arbitrary malicious vector $g_\omega^t = *$
\STATE Send $\mathcal{Q}(u_\omega^t) = \mathcal{Q}(g_\omega^t)$ to master node
\ENDIF
\ENDFOR
\end{algorithmic}
\end{algorithm}

\subsection{Theoretical Analysis}


Here we establish the convergence of BROADCAST. Note that the technical challenge lies in how to analyze gradient different compression in the biased robust aggregation. We do not have the property that the aggregation of approximated gradient $\{ \hat{g}_\omega^t \}_{\omega \in \mathcal{W}}$ is still unbiased in the previous work.
\begin{theorem}[Convergence of BROADCAST]
\label{theorem cd}
Consider Algorithm \ref{algorithmcd} with $\epsilon$-approximate geometric median aggregation and using an unbiased compressor. Under Assumptions \ref{a1} and \ref{a2}, if the number of Byzantine workers satisfies $B < \frac{W}{2}$ and $\delta C_\alpha^2 \leq \frac{\mu^2}{56L^2}$, the hyperparameter satisfies $\beta(1+\delta) \leq 1$, and the step size $\gamma$ satisfies
\begin{equation}
    \gamma \leq \frac{\beta\mu}{4\sqrt{35}\sqrt{1+5\delta} \cdot J^2L^2C_\alpha},
\end{equation}
then it holds that
\begin{equation}
    E \left \| x^t - x^* \right \|^2 \leq \left ( 1 - \frac{\gamma\mu}{2} \right )^t \Delta_1 + \Delta_2,
\end{equation}
where
\begin{align}
    \Delta_1 :=& \left \| x^0 - x^* \right \|^2 - \Delta_2, \\
    \Delta_2 :=& \frac{70}{17\mu^2} \left ( 2(1 + 6\delta)C_\alpha^2\sigma^2 + \frac{2\epsilon^2}{(W-2B)^2} \right ).
\end{align}
\end{theorem}

The proof of this theorem is in Appendix A. The asymptotic learning error $\Delta_2$ of BROADCAST is no longer dependent on the inner variation and compression noise. The major source of the learning error comes from the outer variation, as well as the computation error in calculating the geometric median. In contrast, the inner variation and compression noise terms both appear in the learning error of the Byzantine-robust compressed SGD, and the compression noise term appears in that of the Byzantine-robust compressed SAGA. Compared to the Byzantine-robust SAGA without compression, the magnitude of learning error is same. If $\delta$ is 0, meaning that no compression is applied, the learning error in BROADCAST is in the same order as that of the Byzantine-robust SAGA without compression. The constant is slightly improved due to proof techniques. Thus, BROADCAST achieves gradient compression for free and achieves the same Byzantine-robustness as its uncompressed counterpart.


\begin{figure*}[htb]
    \centering
    \subfigure[COVTYPE Dataset]{
    \includegraphics[width=0.9\textwidth]{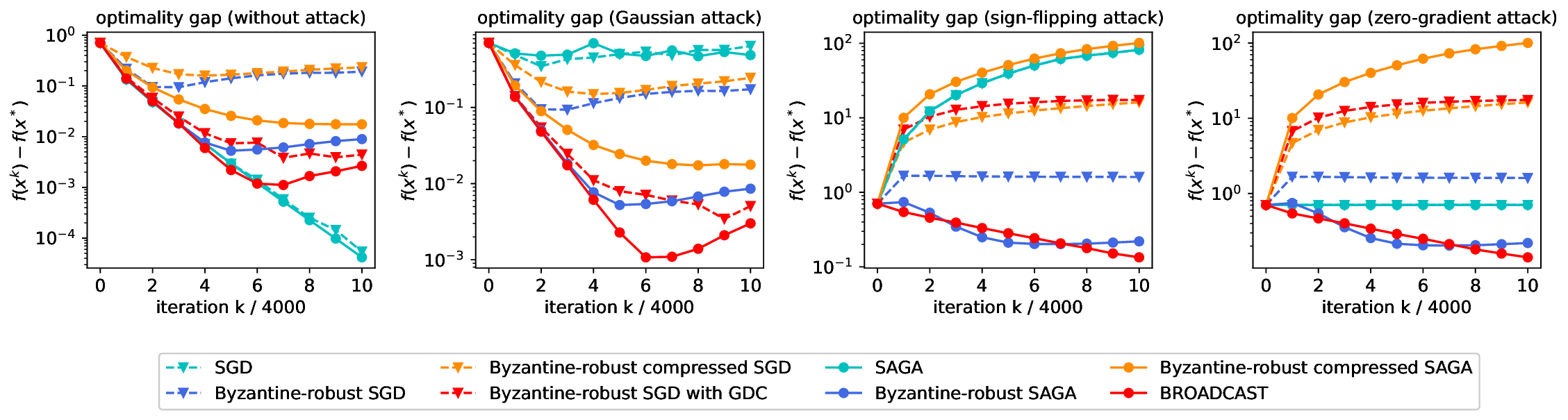}
    }
    \subfigure[Mushrooms Dataset]{
    \includegraphics[width=0.9\textwidth]{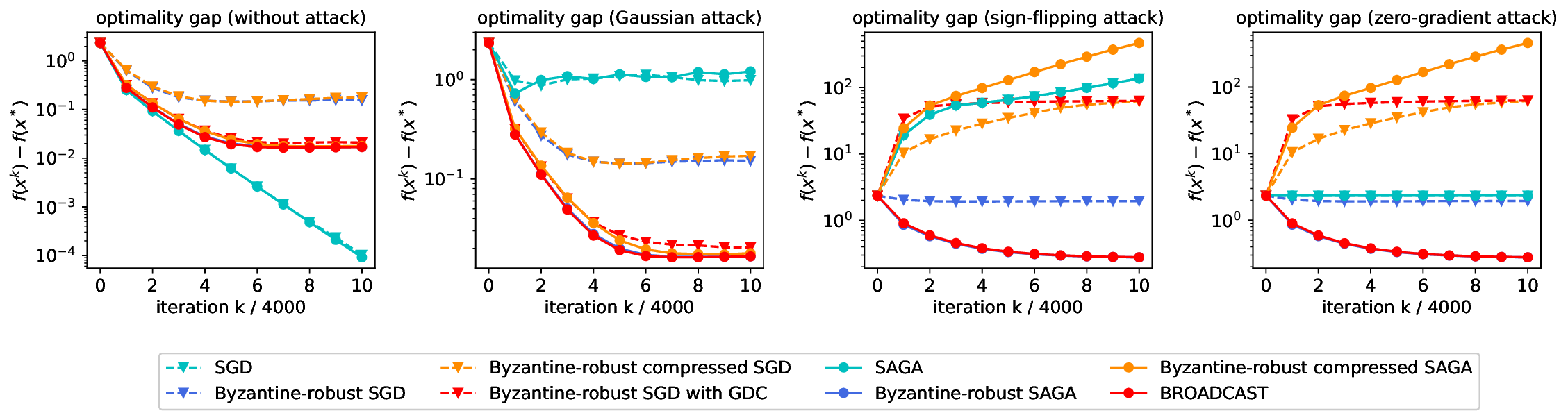}
    }
    \caption{Effect of reducing stochastic and compression noise for logistic regression.}
    \label{compression1}
\end{figure*}

\begin{figure*}[htb]
    \centering
    \includegraphics[width=0.9\textwidth]{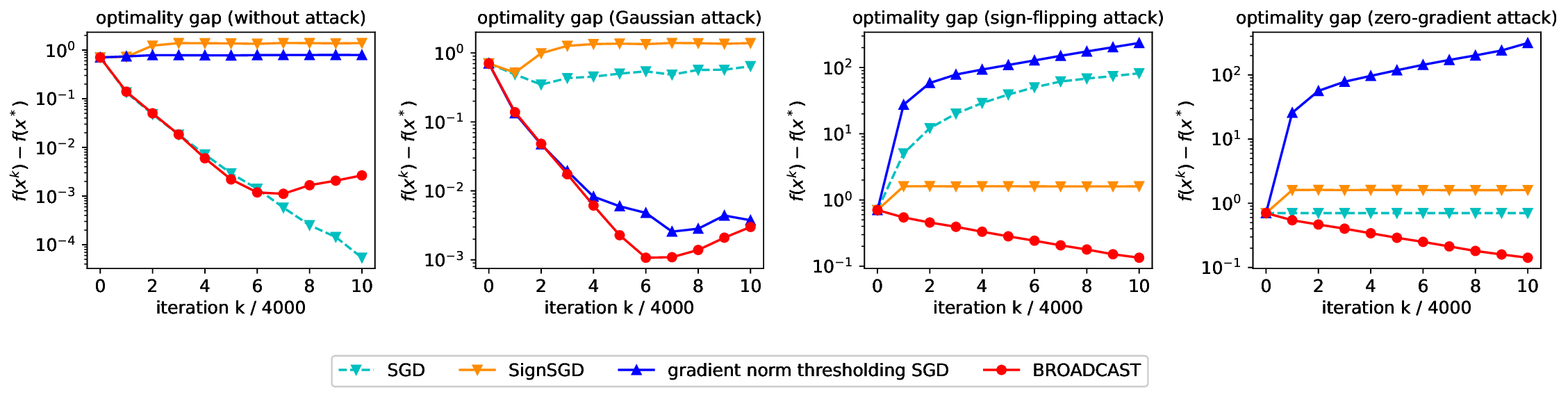}
    \caption{Comparison between proposed algorithm and existing methods for logistic regression.}
    \label{comparison}
\end{figure*}

\begin{figure*}[htb]
    \centering
    \subfigure[COVTYPE Dataset]{
    \includegraphics[width=0.9\textwidth]{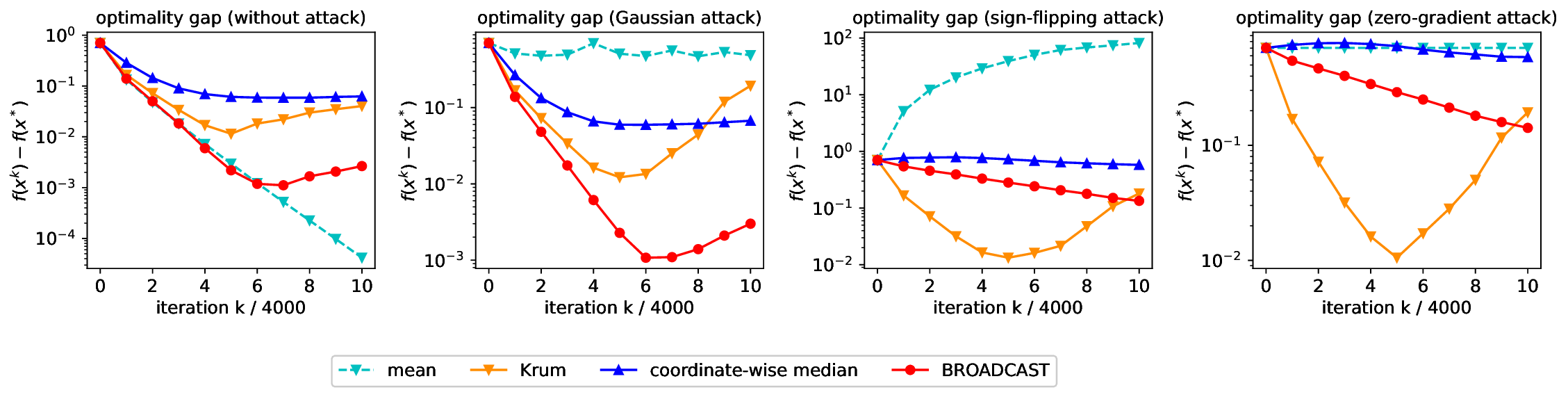}
    }
    \subfigure[Mushrooms Dataset]{
    \includegraphics[width=0.9\textwidth]{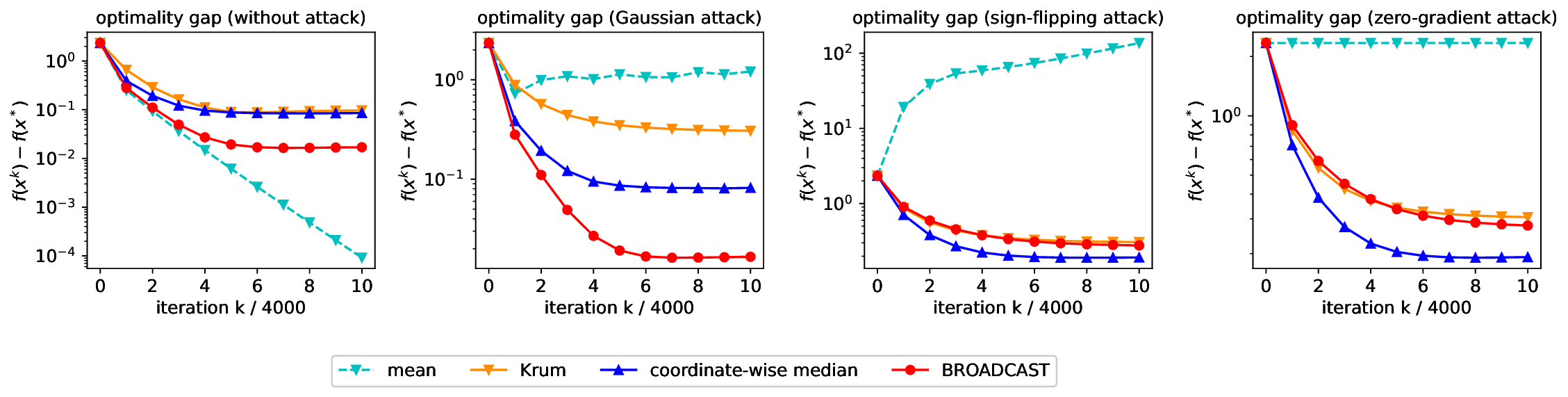}
    }
    \caption{Comparison between different robust aggregation rules for logistic regression.}
    \label{aggregation}
\end{figure*}

\begin{figure*}[htb]
    \centering
    \subfigure[COVTYPE Dataset]{
    \includegraphics[width=0.9\textwidth]{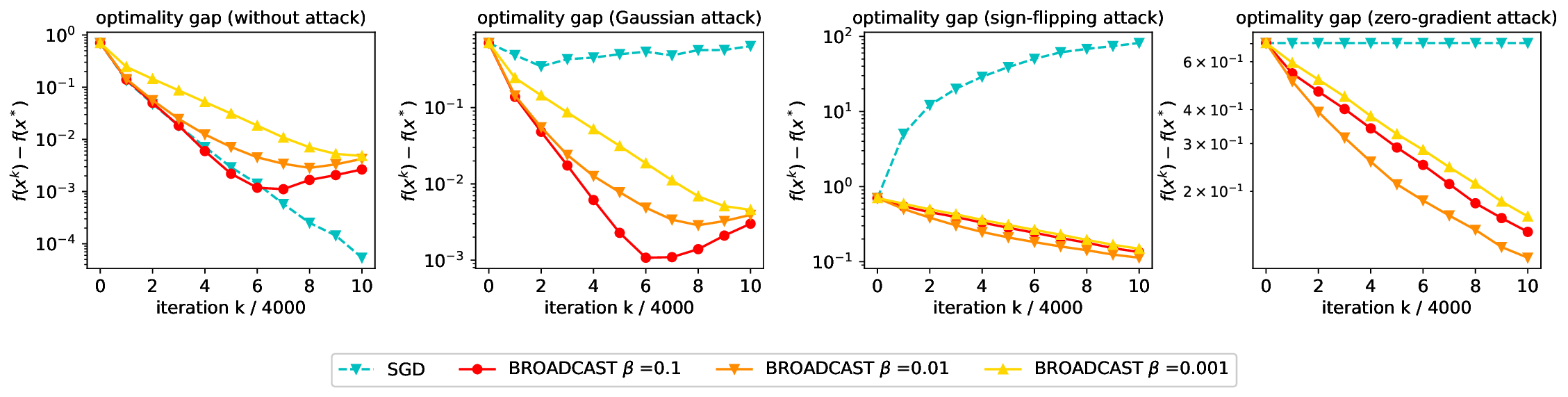}
    }
    \subfigure[Mushrooms Dataset]{
    \includegraphics[width=0.9\textwidth]{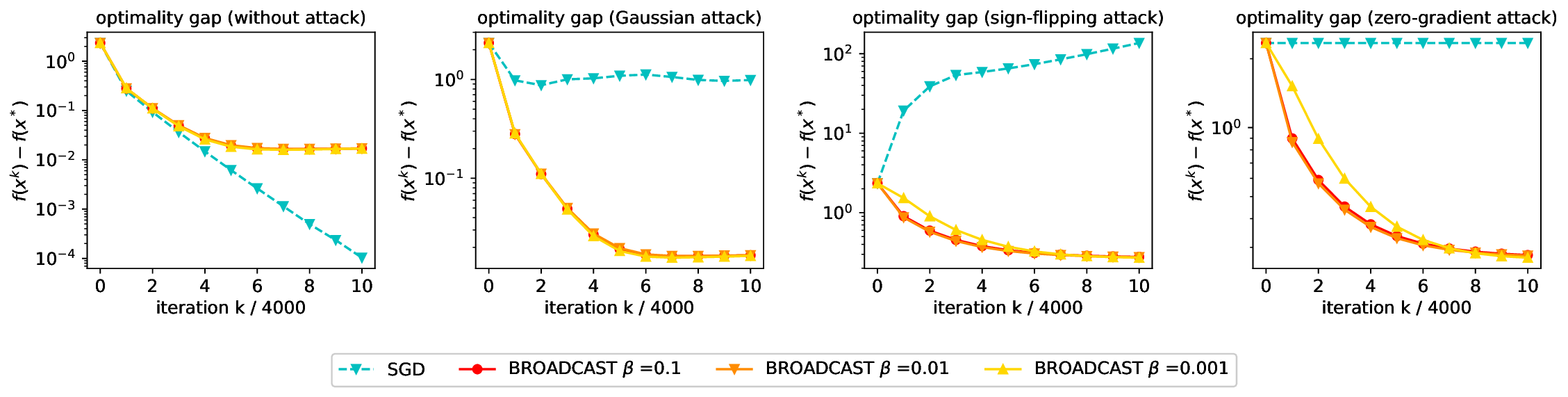}
    }
    \caption{Effect of different $\beta$ in BROADCAST for logistic regression.}
    \label{betapicture}
\end{figure*}

\section{Numerical Experiments}
We present numerical experiments to illustrate the effectiveness of BROADCAST and compare it with existing Byzantine-robust distributed learning algorithms. To validate our theoretical analysis, we consider a strongly convex logistic regression problem on the COVTYPE and Mushroom datasets\footnote{https://www.csie.ntu.edu.tw/\%7ecjlin/libsvmtools/datasets}. We also consider a neural network training task on the MNIST dataset\footnote{http://yann.lecun.com/exdb/mnist} to demonstrate the performance of BROADCAST when the cost function is non-convex. The source code is available at \texttt{https://github.com/oyhah/BROADCAST}.


\subsection{Logistic Regression}
In the logistic regression problem, for sample $j$ at each regular worker $\omega$, the sample cost function is
\vspace{-0.2em}
\begin{equation}
    f_{\omega, j}(x) = \ln (1 + \exp{ (-b_{\omega, j} \langle a_{\omega, j}, x \rangle} )) + \frac{\xi}{2} \left \| x \right \|^2,
\end{equation}
where $a_{\omega, j} \in \mathbb{R}^p$ is the feature vector, $b_{\omega, j} \in \{-1, 1\}$ is the label, and $\xi=0.01$ is the regularization parameter. We use two datasets: the COVTYPE dataset with 581012 samples and $p = 54$ dimensions, as well as the Mushrooms dataset with 8124 samples and $p= 112$ dimensions.

We launch $R = 50$ regular workers and $B = 20$ Byzantine workers. The samples are evenly and randomly allocated to the regular workers. The Byzantine attacks tested here are Gaussian, sign-flipping and zero-gradient. For Gaussian attacks, each Byzantine worker $\omega$ obtains $g_\omega^t$ (or $v_\omega^t$, which we will not distinguish below) from a Gaussian distribution with mean $\frac{1}{R}\sum_{\omega \in \mathcal{R}} g_\omega^t$ and variance 30. For sign-flipping attacks, each Byzantine worker $\omega$ obtains $g_\omega^t$ as $g_\omega^t = u \cdot \frac{1}{R}\sum_{\omega \in \mathcal{R}} g_\omega^t$, where the magnitude is set to $u = -3$. For zero-gradient attacks, each Byzantine worker $\omega$ obtains $g_\omega^t$ as $g_\omega^t = - \frac{1}{B} \sum_{\omega \in \mathcal{R}} g_\omega^t$ so that aggregation at the master node reaches a zero vector in the uncompressed situation. Then Byzantine worker $\omega$ compresses $g_\omega^t$ and sends to the master node. For the compressed methods, the compressor is unbiased rand-$k$ sparsification at the regular agents, and $k/p$ is 0.1. At the Byzantine agents we instead use biased top-$k$ sparsification to guarantee the attacks are strong enough. The hyperparameter $\beta$ in gradient difference compression is 0.1 by default, $\gamma$ is 0.01, and $\epsilon$ is $10^{-5}$.

First, we show the positive effect of reducing compression and stochastic noise to Byzantine-robustness. Fig. \ref{compression1} depicts the optimality gap $f(x^t) - f(x^*)$ of SGD, Byzantine-robust SGD, Byzantine-robust compressed SGD, Byzantine-robust compressed SGD with gradient difference compression (GDC), SAGA, Byzantine-robust SAGA, Byzantine-robust compressed SAGA, and BROADCAST on the two datasets. Observe that SGD and SAGA are unable to tolerate any Byzantine attacks. The Byzantine-robust SAGA significantly outperforms the Byzantine-robust SGD, implying the importance of reducing stochastic noise. With compression, the Byzantine-robust compressed SGD and SAGA are worse than their uncompressed counterparts when facing Gaussian attacks and have large learning errors when facing sign-flipping and zero-gradient attacks. This is due to the accumulation of compression noise, such that the learning errors are dominated by the term $2C_\alpha^2 \delta G^2$ in \eqref{sgd-error} and \eqref{saga-error}, where $G$ is the bound of stochastic gradients and can be large. Our proposed BROADCAST can defend all the three types of Byzantine attacks as the Byzantine-robust SAGA without compression, and slightly outperforms the latter since the Byzantine workers must obey the top-$k$ sparsification rule. Our proposed BROADCAST is also superior to the Byzantine-robust compressed SGD with gradient difference compression, thanks to the reduction of stochastic noise.

Second, Fig. \ref{comparison} compares BROADCAST and the existing Byzantine-robust methods with compression on COVTYPE. SignSGD \cite{bernstein2018signsgd} transmits the signs of stochastic gradients. The gradient norm thresholding SGD \cite{ghosh2021communication} compresses the stochastic gradients and removes a fraction of them with the largest norms before mean aggregation. We let the fraction be $0.3$, which is slightly larger than the exact fraction of Byzantine workers. With the accumulation of compression noise, SignSGD almost fails to defend all attacks and even cannot converge without attacks. For Gaussian attacks, the gradient norm thresholding SGD behaves well because all the malicious messages are removed. But it is unable to remove all the malicious messages under sign-flipping and zero-gradient attacks. In contrast, the proposed BROADCAST performs well in defending various Byzantine attacks.

Third, Fig. \ref{aggregation} compares different robust aggregation rules, all equipped with gradient difference compression and SAGA as in BROADCAST. When there are no attacks, geometric median outperforms Krum and coordinate-wise median. For COVTYPE, under Gaussian attacks, geometric median is the best, while under sign-flipping and zero-gradient attacks, geometric median and Krum perform similarly. For Mushrooms, coordinate-wise median is slightly better under sign-flipping and zero-gradient attacks.

Last, we investigate the effect of hyperparameter $\beta$ in gradient difference compression for the proposed BROADCAST. As shown in Fig. \ref{betapicture}, $\beta$ is chosen as 0.1, 0.01 and 0.001, respectively. According to Theorem \ref{theorem cd}, when $\beta$ is smaller, the step size $\gamma$ should be smaller. Thus we choose $\gamma$ as 0.01 when $\beta$ is 0.1 and 0.01, and as 0.005 when $\beta$ is 0.001. Observe that for different $\beta$, BROADCAST converges to almost the same point and the value of $\beta$ does not influence the performance of BROADCAST in the presence of Byzantine attacks.

\begin{figure*}[htb]
    \centering
    \includegraphics[width=0.9\textwidth]{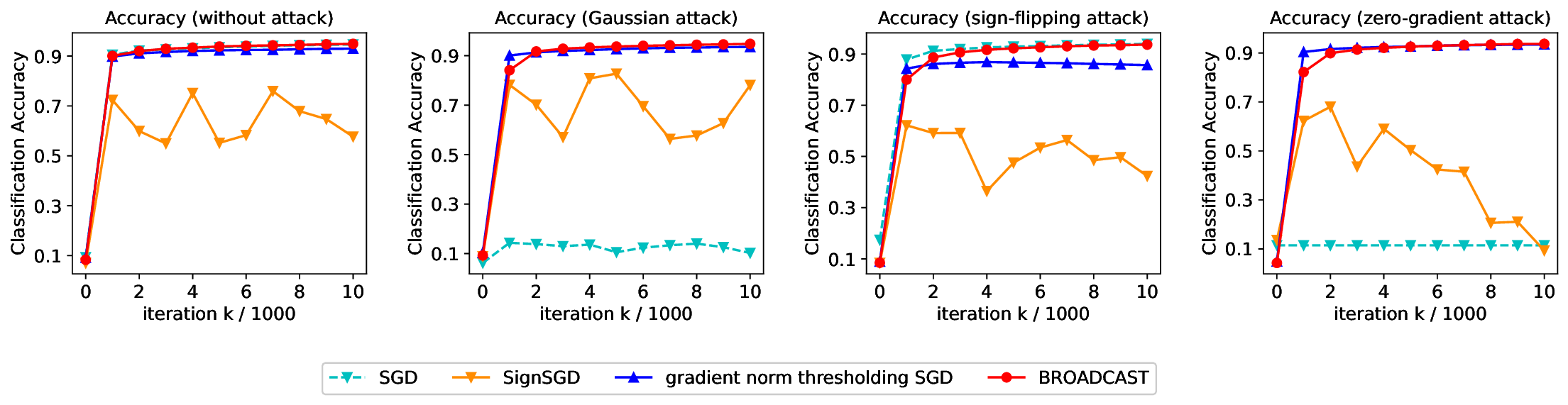}
    \caption{Comparison between proposed algorithm and existing methods for neural network training.}
    \label{comparison_mnist}
\end{figure*}

\subsection{Neural Network Training}

In this experiment, we train a two-layer neural network, each layer containing 50 neurons with tanh activation, for image classification on the MNIST dataset. MNIST is a handwritten digit dataset, containing 60000 training samples and 10000 testing samples, each with 784 dimensions. Here we launch $R=180$ regular workers and $B=20$ Byzantine workers. We apply the BROADCAST algorithm to train the neural network, and compare it with SGD, SignSGD and the gradient norm thresholding SGD. The cost function is cross entropy and the batch size is 5. The compressor is rand-$k$ with $k/p=0.1$.  At the Byzantine agents we instead use biased top-$k$ sparsification to guarantee the attacks are strong enough. The hyperparameter $\beta$ in gradient difference compression is 0.1, $\gamma$ is 0.1, and $\epsilon$ is $10^{-5}$. Since the fraction of Byzantine workers is $10\%$, we reduce the fraction of removed workers in the gradient norm thresholding SGD as $15\%$ for fair comparison. The tested attacks include Gaussian, sign-flipping and zero-gradient, whose settings are the same as those in the logistic regression experiments.



Fig. \ref{comparison_mnist} demonstrates the classification accuracy on the testing samples. Observe that our proposed BROADCAST algorithm reaches high accuracies under all the attacks and outperforms the existing methods. SignSGD behaves unstably. SGD fails under Gaussian and zero-gradient attacks, but works well under sign-flipping attacks that are not strong enough now. Interestingly, under such weak sign-flipping attacks, the gradient norm thresholding SGD has about $8\%$ classification accuracy decline compared to BROADCAST since it is unable to remove all the malicious messages.

\section{Conclusions}

In light of the analysis that a vanilla combination of distributed compressed SGD and geometric median aggregation suffers from compression and stochastic noise in the presence of Byzantine attacks, we develop a novel BROADCAST algorithm to reduce the noise, and consequently, enhance Byzantine-robustness. Theoretical results show that BROADCAST enjoys a linear convergence rate to the neighborhood of the optimal solution and achieves gradient compression for free. Thanks to the successful reduction of both compression and stochastic noise, BROADCAST is demonstrated by numerical experiments to outperform the existing Byzantine-robust methods with compression.

Due to the page limit, now we only analyze the geometric median-based robust aggregation rule and the SAGA-based variance reduction technique. Extending the current results to other robust aggregation rules and variance reduction techniques is natural, and will be an interesting future work. Further improving the communication efficiency by performing multiple
rounds of local updates before one round of transmissions is also of practical importance.



\bibliography{federated}
\bibliographystyle{IEEEbib}

\appendix

\section{Analysis of Byzantine-Robust Compressed SGD}
\label{app-B}

Before proving Lemma \ref{lemma concentration} that analyzes the geometric median of compressed random vectors, we review the following lemma that analyzes the geometric median of any random vectors that are not necessarily compressed.

\begin{lemma} \cite{wu2020federated}
\label{lemma geomed}
Let $\{z_\omega, \omega \in \mathcal{W}\}$ be a set of random vectors distributed in a normed vector space. It holds when $B < \frac{W}{2}$ that
\begin{equation}
    E \left \|  \mathop{{\rm geomed}}\limits_{\omega \in \mathcal{R}} \{z_\omega \} \right \|^2 \leq \frac{C_\alpha^2}{R} \sum\limits_{\omega \in \mathcal{R}} E \left \|z_\omega \right \|^2,
\end{equation}
where $\alpha := \frac{B}{W}$ and $C_\alpha := \frac{2-2\alpha}{1-2\alpha}$. Define $z_\epsilon^*$ as an $\epsilon$-approximate geometric median of $\{ z_\omega, \omega \in \mathcal{R} \}$. It holds when $B < \frac{W}{2}$ that
\begin{equation}
    E \left \| z_\epsilon^* \right \|^2 \leq \frac{2C_\alpha^2}{R} \sum\limits_{\omega \in \mathcal{R}} E \left \|z_\omega \right \|^2 + \frac{2\epsilon^2}{(W-2B)^2}.
\end{equation}
\end{lemma}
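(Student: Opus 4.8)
The plan is to prove the contraction through a composite Lyapunov function that couples the iterate error with two auxiliary potentials tracking the two noise sources that BROADCAST is designed to eliminate. Concretely, I would introduce a SAGA gradient-table potential
\[
P^t := \frac{1}{RJ}\sum_{\omega \in \mathcal{R}}\sum_{j=1}^{J} E\left\|\nabla f_{\omega,j}(\phi_{\omega,j}^t) - \nabla f_{\omega,j}(x^*)\right\|^2
\]
and a compression-state potential $C^t := \frac{1}{R}\sum_{\omega \in \mathcal{R}} E\|h_\omega^t - \nabla f_\omega(x^*)\|^2$, and define $V^t := E\|x^t - x^*\|^2 + c_1 P^t + c_2 C^t$ for coefficients $c_1,c_2>0$ fixed later. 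The goal is to show $V^{t+1} \leq (1 - \frac{\gamma\mu}{2})V^t + \frac{\gamma\mu}{2}\Delta_2$, so that unrolling the geometric recursion, dropping the nonnegative potentials, and folding the initial potentials into $\Delta_1 = \|x^0-x^*\|^2 - \Delta_2$ (consistent with the initialization $\phi_{\omega,j}^0 = x^0$) yields the stated bound.

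The first block is the iterate descent. Expanding $\|x^{t+1}-x^*\|^2$ from \eqref{broadcast-x} with $m^t := \mathop{\rm geomed}_{\omega\in\mathcal{W}}\{\hat g_\omega^t\}$ and splitting $m^t = \nabla f(x^t) + (m^t - \nabla f(x^t))$, strong convexity from Assumption \ref{a1} (with $\nabla f(x^*)=0$) supplies the contractive term $-2\gamma\mu\|x^t-x^*\|^2$, Young's inequality absorbs the cross term, and $L$-smoothness controls $\|m^t\|^2$, so everything reduces to bounding $E\|m^t - \nabla f(x^t)\|^2$. Here I would prove a difference-compression analogue of Lemma \ref{lemma concentration}: recentering at $\nabla f(x^t)$ and invoking the $\epsilon$-approximate estimate of Lemma \ref{lemma geomed} for the recentered regular vectors gives $E\|m^t - \nabla f(x^t)\|^2 \leq \frac{2C_\alpha^2}{R}\sum_{\omega\in\mathcal{R}} E\|\hat g_\omega^t - \nabla f(x^t)\|^2 + \frac{2\epsilon^2}{(W-2B)^2}$. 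Decomposing
\[
\hat g_\omega^t - \nabla f(x^t) = \big[\mathcal{Q}(g_\omega^t - h_\omega^t) - (g_\omega^t - h_\omega^t)\big] + \big[g_\omega^t - \nabla f_\omega(x^t)\big] + \big[\nabla f_\omega(x^t) - \nabla f(x^t)\big],
\]
and conditioning first on $\mathcal{Q}$ (whose error is zero mean with variance $\leq\delta\|g_\omega^t-h_\omega^t\|^2$) then on $i_\omega^t$ (SAGA being unbiased, $E_{i_\omega^t}[g_\omega^t]=\nabla f_\omega(x^t)$), the three pieces decouple, yielding
\[
\frac{1}{R}\sum_{\omega\in\mathcal{R}} E\|\hat g_\omega^t - \nabla f(x^t)\|^2 \leq \frac{\delta}{R}\sum_{\omega} E\|g_\omega^t - h_\omega^t\|^2 + \frac{1}{R}\sum_{\omega} E\|g_\omega^t - \nabla f_\omega(x^t)\|^2 + \sigma^2,
\]
where the last term uses Assumption \ref{a2}. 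The decisive point is that, unlike Byzantine-robust compressed SAGA, the compression error is now $\delta\|g_\omega^t - h_\omega^t\|^2$ instead of $\delta\|g_\omega^t\|^2$, so it vanishes as $h_\omega^t \to g_\omega^t$; this is precisely why Assumption \ref{a4} is not needed.

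The second block is the two potential recursions. The SAGA variance $\frac{1}{R}\sum E\|g_\omega^t - \nabla f_\omega(x^t)\|^2$ is bounded via $E\|X-EX\|^2\le E\|X\|^2$ and $L$-smoothness by $2L^2\|x^t-x^*\|^2 + 2P^t$, while the per-entry refresh with probability $1/J$ gives the standard recursion $E[P^{t+1}] \leq (1-\tfrac{1}{J})P^t + \tfrac{L^2}{J}E\|x^t-x^*\|^2$. For the compression state, expanding $h_\omega^{t+1} = h_\omega^t + \beta\mathcal{Q}(g_\omega^t - h_\omega^t)$ and using $E_{\mathcal{Q}}\|\mathcal{Q}(\cdot)\|^2 \leq (1+\delta)\|\cdot\|^2$ together with the hypothesis $\beta(1+\delta)\leq 1$ produces the DIANA-style contraction $E\|h_\omega^{t+1} - \nabla f_\omega(x^*)\|^2 \leq (1-\beta)E\|h_\omega^t - \nabla f_\omega(x^*)\|^2 + \beta E\|g_\omega^t - \nabla f_\omega(x^*)\|^2$, hence $C^{t+1} \leq (1-\beta)C^t + \beta(2L^2\|x^t-x^*\|^2 + 2P^t)$; the leftover difference term is split as $\frac{\delta}{R}\sum E\|g_\omega^t - h_\omega^t\|^2 \leq 2\delta(\text{SAGA terms}) + 2\delta C^t$.

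The final block assembles these estimates into $V^{t+1}$, choosing $c_1,c_2$ proportional to $\gamma$ so that the $P^t$ and $C^t$ terms each contract (at rates near $1/J$ and $\beta$) while the $\|x^t-x^*\|^2$ coefficient becomes $1-\gamma\mu/2$, and $\Delta_2 = \frac{70}{17\mu^2}\big(2(1+6\delta)C_\alpha^2\sigma^2 + \frac{2\epsilon^2}{(W-2B)^2}\big)$ collects the residual $\sigma^2$ and $\epsilon^2$ budget (the $6\delta$ correction entering through the $\delta$-weighted coupling). I expect the main obstacle to be exactly this three-way balancing: the SAGA table feeds both the variance and the compression state, and the compression state feeds back into the iterate scaled by $2C_\alpha^2\delta$, so the coefficients and constraints must be tuned simultaneously. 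The condition $\delta C_\alpha^2 \leq \frac{\mu^2}{56L^2}$ is what renders the $\delta$-weighted feedback small enough to absorb, and the step-size cap $\gamma \leq \frac{\beta\mu}{4\sqrt{35}\sqrt{1+5\delta}\,J^2L^2C_\alpha}$ is what keeps the iterate-driven growth of both potentials strictly below their contraction rates, closing the single geometric recursion.
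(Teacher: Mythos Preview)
Your proposal does not address the stated lemma at all. Lemma~\ref{lemma geomed} is a purely deterministic/probabilistic bound on the (approximate) geometric median: given arbitrary random vectors $\{z_\omega\}$ with $B<W/2$, it asserts $E\|\mathop{\mathrm{geomed}}\{z_\omega\}\|^2 \le \frac{C_\alpha^2}{R}\sum_{\omega\in\mathcal R}E\|z_\omega\|^2$ and the corresponding $\epsilon$-approximate version. There is no algorithm, no iterate $x^t$, no SAGA table, no compression state $h_\omega^t$, no step size, and no Lyapunov function involved in this statement. In the paper this lemma is not even proved; it is simply quoted from \cite{wu2020federated} as a black-box tool.

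What you have written is instead a full proof plan for Theorem~\ref{theorom cd} (Convergence of BROADCAST): the potentials $P^t$ and $C^t$, the recursion $V^{t+1}\le(1-\gamma\mu/2)V^t+\frac{\gamma\mu}{2}\Delta_2$, the constants $\frac{70}{17\mu^2}$, $\frac{\mu^2}{56L^2}$, and the step-size cap $\frac{\beta\mu}{4\sqrt{35}\sqrt{1+5\delta}\,J^2L^2C_\alpha}$ are all specific to that theorem, not to the lemma you were asked to prove. Indeed, your argument \emph{invokes} Lemma~\ref{lemma geomed} (``invoking the $\epsilon$-approximate estimate of Lemma~\ref{lemma geomed} for the recentered regular vectors'') rather than establishing it. A proof of Lemma~\ref{lemma geomed} would instead proceed from the definition of geometric median, using that the exact geomed minimizes $\sum_\omega\|v-z_\omega\|$ and that at most $B$ of the $W$ points can be arbitrary, to show the median lies within the stated multiple of the regular points' norms; the $\epsilon$-approximate part then follows by the triangle inequality and the definition \eqref{epsilon}.
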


Now we give the proof of Lemma \ref{lemma concentration}.
\begin{proof}
From Lemma \ref{lemma geomed} and the definition of $\epsilon$-approximate geometric median in \eqref{epsilon} we have
\begin{align}\label{lemma1-1}
      & E \left \| {\rm{geomed}} \{ \mathcal{Q}(z_\omega) \} - \bar{z} \right \|^2  \\
    = & E \left \| {\rm{geomed}} \{ \mathcal{Q}(z_\omega) - \bar{z} \} \right \|^2 \notag \\
    \leq & \frac{2C_\alpha^2}{R} \sum\limits_{\omega \in \mathcal{R}} E \left \| \mathcal{Q}(z_\omega) - \bar{z} \right \|^2 + \frac{2\epsilon^2}{(W-2B)^2}. \notag
\end{align}
Since $\mathcal{Q}(\cdot)$ is an unbiased compressor, we have
\begin{align}\label{lemma1-2}
      & E \left \| \mathcal{Q}(z_\omega) - \bar{z} \right \|^2  \\
    = & E \left \| \mathcal{Q}(z_\omega) - z_\omega + z_\omega - \bar{z} \right \|^2 \notag \\
    = & E \left \| \mathcal{Q}(z_\omega) - z_\omega \right \|^2 + E \left \| z_\omega - \bar{z} \right \|^2 \notag \\
    \leq & \delta E \left \| z_\omega \right \|^2 + E \left \| z_\omega - E z_\omega + E z_\omega - \bar{z} \right \|^2 \notag \\
    = &\delta E \left \| z_\omega \right \|^2 + E \left \| z_\omega - E z_\omega \right \|^2 + \left \| E z_\omega - \bar{z} \right \|^2. \notag
\end{align}
Here the second and the last equalities use $E[\mathcal{Q}(z_\omega) - z_\omega] = 0$ in \eqref{unbiased} and $E[z_\omega-E z_\omega] = 0$, respectively. The inequality comes from $\left \| \mathcal{Q}(z_\omega) - z_\omega \right \|^2 \leq \delta E \left \| z_\omega \right \|^2$ in \eqref{unbiased}. Combining \eqref{lemma1-1} and \eqref{lemma1-2} yields \eqref{concentration} and completes the proof.
\end{proof}

Next we prove Theorem \ref{theorem sgd}.
\begin{proof}
We begin by manipulating $E \left \| x^{t+1} - x^* \right \|^2$ as
\begin{align}
\label{xt+1sgd}
    & E \left \| x^{t+1} - x^* \right \|^2  \\
    = & E \left \| x^t - \gamma \nabla f(x^t) - x^* + x^{t+1} - x^t + \gamma \nabla f(x^t) \right \|^2 \notag \\
    \leq & \frac{1}{1 - \eta} \left \| x^t - \gamma\nabla f(x^t) - x^* \right \|^2  + \frac{1}{\eta} E \left \| x^{t+1} - x^t + \gamma\nabla f(x^t) \right \|^2, \notag
\end{align}
where $0< \eta <1$. The inequality comes from $\left \| a+b \right \|^2 \leq \frac{1}{1-\eta} \left \| a \right \|^2 + \frac{1}{\eta} \left \| b \right \|^2$.

Since $\nabla f(x^*) = 0$, we can bound the first term at the right-hand side of \eqref{xt+1sgd} as
\begin{align}
\label{xtx*sgd}
    & \left \| x^t - \gamma\nabla f(x^t) - x^* \right \|^2  \\
    = & \left \| x^t - \gamma \left ( \nabla f(x^t) - \nabla f(x^*) \right ) - x^* \right \|^2 \notag \\
    = & \left \| x^t -x ^* \right \|^2 - 2\gamma \langle \nabla f(x^t) - \nabla f(x^*), x^t - x^* \rangle \notag \\
    & + \gamma^2 \left \| \nabla f(x^t) - \nabla f(x^*) \right \|^2 \notag \\
    \leq & \left \| x^t - x^* \right \|^2 - 2\gamma\mu \left \| x^t -x^* \right \|^2 + \gamma^2 L^2 \left \| x^t -x^* \right \|^2 \notag \\
    = & (1 - 2\gamma\mu +\gamma^2L^2) \left \| x^t -x^* \right \|^2. \notag
\end{align}
Here we use $L \left \| x^t -x^* \right \| \geq \left \| \nabla f(x^t) - \nabla f(x^*) \right \|$ as $f(x)$ has Lipschitz continuous gradients and $\langle \nabla f(x^t) - \nabla f(x^*), x^t - x^* \rangle \geq \mu \left \| x^t - x^* \right \|^2$ as $f(x)$ is strongly convex.

Substituting (\ref{xtx*sgd}) into (\ref{xt+1sgd}), we can obtain
\begin{align}
\label{xt+1asgd}
    E \left \| x^{t+1} - x^* \right \|^2 \leq \frac{1 - 2\gamma\mu + \gamma^2L^2}{1-\eta} \left \| x^t - x^* \right \|^2  \\
    + \frac{1}{\eta} E \left \| x^{t+1} - x^t + \gamma\nabla f(x^t) \right \|^2. \notag
\end{align}
With $\eta = \frac{\gamma\mu}{2}$, if
\begin{equation}
\label{gamma1sgd}
    \gamma^2L^2 \leq \frac{\gamma\mu}{2},
\end{equation}
it holds that
\begin{equation}
    \frac{1 - 2\gamma\mu + \gamma^2L^2}{1-\eta} \leq 1 - \gamma\mu.
\end{equation}
Therefore, (\ref{xt+1asgd}) can be rewritten as
\begin{align}
\label{xt+1final}
    E \left \| x^{t+1} - x^* \right \|^2 \leq (1 - \gamma\mu) \left \| x^t -x^* \right \|^2 \\
    + \frac{2}{\gamma\mu} E \left \| x^{t+1} - x^t + \gamma\nabla f(x^t) \right \|^2. \notag
\end{align}

Let $z_\epsilon^*$ be an $\epsilon$-approximate geometric median of $\{ \mathcal{Q}(g_\omega^t), \omega \in \mathcal{W} \}$. Based on Lemma \ref{lemma concentration} and Assumptions \ref{a2}, \ref{a3}, and \ref{a4}, the second term at the right-hand side of \eqref{xt+1final} is
\begin{align}
    & E \left \| x^{t+1} - x^t + \gamma\nabla f(x^t) \right \|^2  \\
    = & \gamma^2 E \left \| z_\epsilon^* - \nabla f(x^t) \right \|^2 \notag \\
    \leq & \gamma^2 \{ \frac{2C_\alpha^2}{R} \sum\limits_{\omega \in \mathcal{R}} E \left \| g_\omega^t - \nabla f_\omega(x^t) \right \|^2 + \frac{2\epsilon^2}{(W-2B)^2} \notag \\
    & + \frac{2C_\alpha^2}{R} \sum\limits_{\omega \in \mathcal{R}} \left \| \nabla f_\omega(x^t) - \nabla f(x^t) \right \|^2 + \frac{2C_\alpha^2 \delta}{R} \sum\limits_{\omega \in \mathcal{R}} E \left \| g_\omega^t \right \|^2 \} \notag \\
    \leq & \gamma^2 \left ( 2C_\alpha^2 \zeta^2 + 2C_\alpha^2 \sigma^2 + 2C_\alpha^2 \delta G^2 + \frac{2\epsilon^2}{(W-2B)^2} \right ). \notag
\end{align}
Thus, we have
\begin{align}\label{thm1-1}
    & \hspace{-1em} E \left \| x^{t+1} - x^* \right \|^2 \leq (1 - \gamma\mu) \left \| x^t -x^* \right \|^2  \\
    & \hspace{-1em} + \frac{2\gamma}{\mu} \left ( 2C_\alpha^2 \zeta^2 + 2C_\alpha^2 \sigma^2 + 2C_\alpha^2 \delta G^2 + \frac{2\epsilon^2}{(W-2B)^2} \right ). \notag
\end{align}

Applying telescopic cancellation on \eqref{thm1-1} from iteration $1$ to $t$ yields
\begin{equation}
    E \left \| x^t - x^* \right \|^2 \leq \left ( 1 - \gamma\mu \right )^t \Delta_1 + \Delta_2,
\end{equation}
where
\begin{equation}
    \Delta_1 := \left \| x^0 - x^* \right \|^2 - \Delta_2,
\end{equation}
\begin{small}
\begin{align}
    \Delta_2 := \frac{2}{\mu^2} \Bigg( 2C_\alpha^2\sigma^2 + 2C_\alpha^2\zeta^2 + 2C_\alpha^2 \delta G^2 + \frac{2\epsilon^2}{(W-2B)^2} \Bigg).
\end{align}
\end{small}
\hspace{-1em} This completes the proof.
\end{proof}

\section{Analysis of Byzantine-Robust Compressed SAGA}
\label{app-C}

We first review the following supporting lemma for SAGA.
\begin{lemma} \cite{wu2020federated}
\label{lemma universal}
Under Assumption \ref{a1}, if all regular workers $\omega \in \mathcal{W}$ update $\phi_{\omega, i_\omega^t}$ and $g_\omega^t$ as (\ref{updatephi}) and (\ref{update g}), then the corrected stochastic gradient $g_\omega^t$ satisfies
\begin{equation}
   \hspace{-0.9em} E \left \| g_\omega^t - \nabla f_\omega(x^t) \right \|^2 \leq L^2\frac{1}{J} \sum\limits_{j=1}^J \left \| x^t - \phi_{\omega, j}^t \right \|^2, \forall \omega \in \mathcal{W},
\end{equation}
and
\begin{equation}
    \frac{1}{R} \sum\limits_{\omega \in \mathcal{R}} E \left \| g_\omega^t - \nabla f_\omega(x^t) \right \|^2 \leq L^2S^t,
\end{equation}
where $S^t$ is defined as
\begin{equation}
\label{defs}
    S^t := \frac{1}{R} \sum\limits_{\omega \in \mathcal{R}} \frac{1}{J} \sum\limits_{j=1}^J \left \| x^t - \phi_{\omega, j}^t \right \|^2.
\end{equation}
Further, $S^t$ satisfies that
\begin{align}
\label{St+1}
     E S^{t+1} \leq & 4J E\left \|x^{t+1} - x^t + \gamma \nabla f(x^t) \right \|^2  \\
    &  + 4J\gamma^2L^2 \left \| x^t - x^* \right \|^2 + \left (1 - \frac{1}{J^2} \right )S^t. \notag
\end{align}
\end{lemma}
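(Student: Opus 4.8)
The plan is to establish the three claims in the order stated, treating the first two as direct consequences of the structure of the SAGA correction in \eqref{update g} and reserving the genuine work for the recursion \eqref{St+1}. Throughout I would condition on the filtration generated by the history up to iteration $t$, so that $x^t$ and the whole table $\{\phi_{\omega,j}^t\}$ are treated as fixed while the sample index $i_\omega^t$ (uniform on $\{1,\dots,J\}$) is the only source of randomness in the per-step bookkeeping.

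For the first bound, I would set $X_{\omega,j} := \nabla f_{\omega,j}(x^t) - \nabla f_{\omega,j}(\phi_{\omega,j}^t)$ and use $\nabla f_\omega(x^t) = \frac{1}{J}\sum_{j=1}^J \nabla f_{\omega,j}(x^t)$ to rewrite the correction as $g_\omega^t - \nabla f_\omega(x^t) = X_{\omega,i_\omega^t} - \frac{1}{J}\sum_{j=1}^J X_{\omega,j}$. Since $i_\omega^t$ is uniform, this is a centered random vector, so $E_{i_\omega^t}\|g_\omega^t - \nabla f_\omega(x^t)\|^2 = \frac{1}{J}\sum_j \|X_{\omega,j} - \bar X\|^2 \le \frac{1}{J}\sum_j \|X_{\omega,j}\|^2$ by the elementary inequality $E\|Y - EY\|^2 \le E\|Y\|^2$. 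Bounding each $\|X_{\omega,j}\|^2 \le L^2\|x^t - \phi_{\omega,j}^t\|^2$ through $L$-Lipschitz continuity of the component gradients yields the first inequality, and averaging it over $\omega \in \mathcal{R}$ and dividing by $R$ gives the second inequality at once by the definition \eqref{defs} of $S^t$.

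The recursion \eqref{St+1} is the crux. I would split $x^{t+1} - \phi_{\omega,j}^{t+1} = (x^{t+1} - x^t) + (x^t - \phi_{\omega,j}^{t+1})$ and apply Young's inequality $\|a+b\|^2 \le (1+\rho)\|b\|^2 + (1+\tfrac{1}{\rho})\|a\|^2$ with the specific choice $\rho = \tfrac{1}{J}$. For the $x^t - \phi_{\omega,j}^{t+1}$ term I take $E_{i_\omega^t}$: because the update zeroes exactly the summand $j = i_\omega^t$ and leaves the rest unchanged, $E_{i_\omega^t}\big[\frac{1}{J}\sum_j \|x^t - \phi_{\omega,j}^{t+1}\|^2\big] = (1-\tfrac{1}{J})\frac{1}{J}\sum_j \|x^t - \phi_{\omega,j}^t\|^2$, and the prefactor becomes $(1+\tfrac{1}{J})(1-\tfrac{1}{J}) = 1 - \tfrac{1}{J^2}$, which after averaging over $\omega$ produces the $(1-\tfrac{1}{J^2})S^t$ term. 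For the remaining term, $(1+\tfrac{1}{\rho}) = 1+J$, and I would write $x^{t+1}-x^t = (x^{t+1}-x^t+\gamma\nabla f(x^t)) - \gamma\nabla f(x^t)$, apply $\|a-b\|^2 \le 2\|a\|^2 + 2\|b\|^2$, use $\|\nabla f(x^t)\| = \|\nabla f(x^t) - \nabla f(x^*)\| \le L\|x^t - x^*\|$ (since $\nabla f(x^*)=0$), and finally bound $2(1+J) \le 4J$ for $J \ge 1$ to land exactly on the coefficients $4J$ and $4J\gamma^2 L^2$.

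The delicate point, and the main obstacle, is securing the exact $1 - \tfrac{1}{J^2}$ contraction: it forces the Young parameter $\rho = \tfrac{1}{J}$ to interact precisely with the $1 - \tfrac{1}{J}$ factor from the uniform table update, and any looser choice would spoil the clean rate and the downstream step-size conditions. Care is also needed with the filtration, since $\phi_{\omega,j}^{t+1}$ and, through the aggregated direction, $x^{t+1}$ depend on the current sampling, so the conditional-expectation computation must be carried out before taking the total expectation. A secondary subtlety is that the per-sample Lipschitz step requires each $f_{\omega,j}$ to be $L$-smooth, which I would note is the standard finite-sum reading of Assumption \ref{a1}.
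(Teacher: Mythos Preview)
The paper does not prove this lemma; it is simply quoted from \cite{wu2020federated} as a supporting result, so there is no in-paper argument to compare against. Your proposal is the standard SAGA bookkeeping and is correct in all essentials: the centered-variable rewrite $g_\omega^t-\nabla f_\omega(x^t)=X_{\omega,i_\omega^t}-\bar X$ followed by $E\|Y-EY\|^2\le E\|Y\|^2$ and per-sample $L$-smoothness gives the first two inequalities, and for \eqref{St+1} the Young split with parameter $\rho=\tfrac1J$ is exactly what produces $(1+\tfrac1J)(1-\tfrac1J)=1-\tfrac1{J^2}$, while $2(1+J)\le 4J$ delivers the stated constants. Your caveat that the argument implicitly needs each component $f_{\omega,j}$ to be $L$-smooth (not just the aggregate $f$) is well placed, and your observation about taking the Young inequality pointwise before conditioning---since both $x^{t+1}$ and $\phi_{\omega,j}^{t+1}$ depend on the current sampling---is the right way to keep the filtration clean.
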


To handle the bias introduced by geometric median and the effect of compression, the following lemma describes the gap between the $\epsilon$-approximate geometric median of $\{ \mathcal{Q}(g_\omega^t), \omega \in \mathcal{W} \}$ and $\nabla f(x^t)$.

\begin{lemma}
\label{lemma z*saga}
Consider the Byzantine-robust compressed SAGA update \eqref{byrd-csaga} with $\epsilon$-approximate geometric median aggregation and using an unbiased compressor. Under Assumptions \ref{a1}, \ref{a2}, and \ref{a4}, if the number of Byzantine workers satisfies $B < \frac{W}{2}$, then an $\epsilon$-approximate geometric median of $\{ \mathcal{Q}(g_\omega^t), \omega \in \mathcal{W} \}$, denoted by $z_\epsilon^*$, satisfies
\begin{align}\label{lemma4-0000}
     & E \left \| z_\epsilon^* - \nabla f(x^t) \right \|^2 \notag \\
\leq & 2C_\alpha^2L^2S^t + 2C_\alpha^2\sigma^2 + 2C_\alpha^2 \delta G^2 + \frac{2\epsilon^2}{(W-2B)^2}.
\end{align}
\end{lemma}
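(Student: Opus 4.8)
The plan is to apply Lemma~\ref{lemma concentration} directly with the substitution $z_\omega = g_\omega^t$ for $\omega \in \mathcal{R}$, and then bound each of the four resulting terms using the SAGA structure and Assumptions~\ref{a1},~\ref{a2},~\ref{a4}. The key observation is that, conditioned on the history up to iteration $t$ (in particular on $x^t$ and all the tables $\phi_{\omega,j}^t$), the expectation $E z_\omega = E_{i_\omega^t}[g_\omega^t]$ over the random sample index equals $\nabla f_\omega(x^t)$, because the SAGA correction is unbiased: $E_{i_\omega^t}[\nabla f_{\omega,i_\omega^t}(x^t) - \nabla f_{\omega,i_\omega^t}(\phi_{\omega,i_\omega^t}^t)] = \nabla f_\omega(x^t) - \frac1J\sum_j \nabla f_{\omega,j}(\phi_{\omega,j}^t)$, which cancels the stored average term. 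Consequently $\bar z = \frac1R\sum_{\omega\in\mathcal{R}} E z_\omega = \frac1R\sum_{\omega\in\mathcal{R}}\nabla f_\omega(x^t) = \nabla f(x^t)$, so the left-hand side of \eqref{concentration} becomes exactly $E\|z_\epsilon^* - \nabla f(x^t)\|^2$, the quantity we want.

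With this substitution, the four terms on the right-hand side of \eqref{concentration} are handled as follows. The first term, $\frac{2C_\alpha^2}{R}\sum_{\omega\in\mathcal{R}} E\|z_\omega - E z_\omega\|^2 = \frac{2C_\alpha^2}{R}\sum_{\omega\in\mathcal{R}} E\|g_\omega^t - \nabla f_\omega(x^t)\|^2$, is bounded by $2C_\alpha^2 L^2 S^t$ using the second inequality of Lemma~\ref{lemma universal}. The second term, $\frac{2C_\alpha^2}{R}\sum_{\omega\in\mathcal{R}}\|E z_\omega - \bar z\|^2 = \frac{2C_\alpha^2}{R}\sum_{\omega\in\mathcal{R}}\|\nabla f_\omega(x^t) - \nabla f(x^t)\|^2$, is bounded by $2C_\alpha^2\sigma^2$ via Assumption~\ref{a2}. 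The third term, $\frac{2C_\alpha^2\delta}{R}\sum_{\omega\in\mathcal{R}} E\|g_\omega^t\|^2$, is bounded by $2C_\alpha^2\delta G^2$ via Assumption~\ref{a4} (this is the one place Assumption~\ref{a4} is needed, since the SAGA corrected gradient $g_\omega^t$ is assumed bounded in the same way the raw stochastic gradients are — or one reinterprets $G^2$ as a bound on the corrected gradients). The fourth term is $\frac{2\epsilon^2}{(W-2B)^2}$ verbatim. Summing these four bounds gives exactly \eqref{lemma4-0000}.

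The only subtlety — and the step I expect to require the most care — is the conditioning argument that makes $\bar z = \nabla f(x^t)$. Lemma~\ref{lemma concentration} is stated for an arbitrary collection of random vectors with $\bar z$ defined as the average of their \emph{means}; to invoke it at iteration $t$ one must work conditionally on the filtration generated by the algorithm's history through the broadcast of $x^t$, so that $x^t$ and the gradient tables are deterministic and the only randomness is $\{i_\omega^t\}$ and the compression operators $\mathcal{Q}$. Under this conditioning the unbiasedness of the SAGA correction is immediate, and then one takes total expectation at the end; the inequality is preserved because the right-hand side terms are themselves expectations. A second minor point is that Lemma~\ref{lemma universal} must be compatible with this conditioning — its bound $E\|g_\omega^t - \nabla f_\omega(x^t)\|^2 \le L^2 \frac1J\sum_j\|x^t - \phi_{\omega,j}^t\|^2$ is already an expectation over $i_\omega^t$ given the history, so the two lemmas compose cleanly. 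Beyond this bookkeeping, the proof is a direct term-by-term substitution with no real obstacle.
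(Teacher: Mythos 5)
Your proof is correct and takes essentially the same route as the paper: the paper's own argument for this lemma simply re-derives, inline, the decomposition that underlies Lemma~\ref{lemma concentration} (splitting $\mathcal{Q}(g_\omega^t)-\nabla f(x^t)$ into the compression error, the inner deviation $g_\omega^t-\nabla f_\omega(x^t)$, and the outer deviation $\nabla f_\omega(x^t)-\nabla f(x^t)$, with cross terms vanishing by unbiasedness of $\mathcal{Q}$ and of the SAGA correction), and then applies Lemma~\ref{lemma universal}, Assumption~\ref{a2}, and Assumption~\ref{a4} exactly as you do. The caveat you flag about invoking Assumption~\ref{a4} for the corrected gradient $g_\omega^t$ rather than the raw stochastic gradient is present in the paper's proof as well, so your argument is at the same level of rigor as the original.
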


\begin{proof}
From Lemma \ref{lemma concentration} and Lemma \ref{lemma universal} it holds that
\begin{align}\label{lemma4-0001}
     & E \left \| z_\epsilon^* - \nabla f(x^t) \right \|^2  \\
\leq & 2C_\alpha^2 \frac{1}{R} \sum\limits_{\omega \in \mathcal{R}} E \left \| \mathcal{Q}(g_\omega^t) - \nabla f(x^t) \right \|^2 + \frac{2\epsilon^2}{(W-2B)^2}. \notag
\end{align}
Since $\mathcal{Q}(\cdot)$ is an unbiased compressor, for any $\omega \in \mathcal{R}$ we have
\begin{align}\label{lemma4-0002}
      & E \left \| \mathcal{Q}(g_\omega^t) - \nabla f(x^t) \right \|^2 \\
    = & E \left \| \mathcal{Q}(g_\omega^t) - g_\omega^t + g_\omega^t - \nabla f(x^t) \right \|^2 \notag \\
    = & E \left \| \mathcal{Q}(g_\omega^t) - g_\omega^t \right \|^2 + E \left \| g_\omega^t - \nabla f(x^t) \right \|^2 \notag \\
    \leq & \delta E \left \| g_\omega^t \right \|^2 + E \left \| g_\omega^t - \nabla f_\omega(x^t) + \nabla f_\omega(x^t) - \nabla f(x^t) \right \|^2 \notag \\
    = & \delta E \left \| g_\omega^t \right \|^2 + E \left \| g_\omega^t - \nabla f_\omega(x^t) \right \|^2 + \left \| \nabla f_\omega(x^t) - \nabla f(x^t) \right \|^2. \notag
\end{align}
Here the second and the last equalities use $E[\mathcal{Q}(g_\omega^t) - g_\omega^t] = 0$ in \eqref{unbiased} and $E[g_\omega^t - \nabla f_\omega(x^t)] = 0$ for $\omega \in \mathcal{R}$, respectively. The inequality comes from $E \left \| \mathcal{Q}(g_\omega^t) - g_\omega^t \right \|^2 \leq \delta E \left \| g_\omega^t \right \|^2$ in \eqref{unbiased}. Substituting \eqref{lemma4-0002} into \eqref{lemma4-0001} yields
\begin{align}\label{lemma4-0002000}
 & E \left \| z_\epsilon^* - \nabla f(x^t) \right \|^2 \\
\leq & 2C_\alpha^2 \delta \frac{1}{R} \sum\limits_{\omega \in \mathcal{R}} E \left \| g_\omega^t \right \|^2 + 2C_\alpha^2 \frac{1}{R} \sum\limits_{\omega \in \mathcal{R}} E \left \| g_\omega^t - \nabla f_\omega(x^t) \right \|^2 \notag \\
 & + 2C_\alpha^2 \frac{1}{R} \sum\limits_{\omega \in \mathcal{R}} \left \| \nabla f_\omega(x^t) - \nabla f(x^t) \right \|^2 + \frac{2\epsilon^2}{(W-2B)^2}. \notag
\end{align}
According to Lemma \ref{lemma universal}, as well as Assumptions \ref{a2} and \ref{a4}, the right-hand side of \eqref{lemma4-0002000} can be bounded as in \eqref{lemma4-0000}. 
\end{proof}

Now we can prove Theorem \ref{theorem saga}.
\begin{proof}
When the step size $\gamma$ is sufficiently small such that
\begin{equation}
\label{gamma1saga}
    \gamma^2L^2 \leq \frac{\gamma\mu}{2},
\end{equation}
we can observe that \eqref{xt+1final} also holds true here.

We construct a \emph{Lyapunov function} $T^t$ as
\begin{equation}
\label{defLyasaga}
    T^t := \left \| x^t -x^* \right \|^2 + cS^t,
\end{equation}
where $c$ is any positive constant and $S^t$ is defined as (\ref{defs}). Thus, $T^t$ is non-negative.

Based on (\ref{St+1}), it follows that
\begin{align}
\label{Tt+1asaga}
    & E T^{t+1} = E \left \| x^{t+1}  - x^* \right \|^2 + c E S^{t+1}  \\
    \leq & (1 - \gamma\mu + 4cJ\gamma^2L^2) \left \| x^t - x^* \right \|^2  + \left ( 1 - \frac{1}{J^2} \right )c S^t \notag \\
    & + \left ( \frac{2}{\gamma\mu} + 4cJ \right ) E \left \| x^{t+1} - x^t + \gamma\nabla f(x^t) \right \|^2. \notag
\end{align}
Let $z_\epsilon^*$ be an $\epsilon$-approximate geometric median of $\{ \mathcal{Q}(g_\omega^t), \omega \in \mathcal{W} \}$ and observe the second term at the right-hand side of \eqref{Tt+1asaga}. We have
\begin{align}
\hspace{-1em}    E \left \| x^{t+1} - x^t + \gamma\nabla f(x^t) \right \|^2 = \gamma^2 E \left \| z_\epsilon^* - \nabla f(x^t) \right \|^2.
\end{align}
With this fact and Lemma \ref{lemma z*saga}, \eqref{Tt+1asaga} can be rewritten as
\begin{align}
\label{Tcoeffsaga}
&    E T^{t+1} \leq \left [ 1 - \gamma\mu + 4cJ\gamma^2L^2 \right ] \left \| x^t - x^* \right \|^2 \\
& + \left [ (1 - \frac{1}{J^2})c + \left ( \frac{2}{\gamma\mu} + 4cJ \right )2C_\alpha^2 \gamma^2 L^2 \right ] S^t \notag \\
& + \gamma^2 \left ( \frac{2}{\gamma\mu} + 4cJ \right ) \left ( 2C_\alpha^2\sigma^2 + 2C_\alpha^2 \delta G^2 + \frac{2\epsilon^2}{(W-2B)^2} \right ). \notag
\end{align}

If we let
\begin{equation}
\label{4cJsaga}
    4cJ\gamma^2L^2 \leq \frac{\gamma\mu}{2},
\end{equation}
then it holds
\begin{equation}
    \frac{2}{\gamma\mu} + 4cJ \leq \frac{2}{\gamma\mu} + \frac{\mu}{2\gamma L^2} \leq \frac{5}{2\gamma\mu}.
\end{equation}
Thus, the first coefficient at the right-hand side of \eqref{Tcoeffsaga} is bounded by
\begin{align}
\label{xleqsaga}
    1 - \gamma\mu + 4cJ\gamma^2L^2 \leq 1-\frac{\gamma\mu}{2}.
\end{align}

If $\gamma$ and $c$ are chosen as
\begin{equation}
\label{gamma2saga}
    \frac{\gamma\mu}{2} \leq \frac{1}{2J^2},
\end{equation}
and
\begin{equation}
\label{cvaluesaga}
    c = \frac{10C_\alpha^2J^2\gamma L^2}{\mu} \geq \frac{5}{2} \cdot \frac{2C_\alpha^2J^2\gamma L^2}{\mu(\frac{1}{J^2} - \frac{\gamma\mu}{2})},
\end{equation}
then the second coefficient at the right-hand side of \eqref{Tcoeffsaga} is bounded by
\begin{align}
\label{cleqsaga}
    & (1 - \frac{1}{J^2})c + \left ( \frac{2}{\gamma\mu} + 4cJ \right )2C_\alpha^2 \gamma^2 L^2 \\
    \leq & (1 - \frac{1}{J^2})c + \frac{5}{2} \cdot \frac{2C_\alpha^2\gamma L^2}{\mu} \notag \\
    \leq & (1 - \frac{\gamma\mu}{2})c. \notag
\end{align}

For the last term at the right-hand side of \eqref{Tcoeffsaga}, we have
\begin{align}
\label{deltaleqsaga}
    & \gamma^2 \left ( \frac{2}{\gamma\mu} + 4cJ \right ) \left ( 2C_\alpha^2\sigma^2 + 2C_\alpha^2 \delta G^2 + \frac{2\epsilon^2}{(W-2B)^2} \right )  \\
    & \leq \frac{5\gamma}{2\mu} \left ( 2C_\alpha^2\sigma^2 + 2C_\alpha^2 \delta G^2 + \frac{2\epsilon^2}{(W-2B)^2} \right ). \notag
\end{align}

Therefore, substituting (\ref{xleqsaga}), (\ref{cleqsaga}), and (\ref{deltaleqsaga}), we know that \eqref{Tcoeffsaga} satisfies
\begin{align}
    E T^{t+1} \leq & \left ( 1 - \frac{\gamma\mu}{2} \right ) \left \| x^t - x^* \right \|^2  \\
                   & + \left ( 1 - \frac{\gamma\mu}{2} \right )c S^t + \frac{\gamma \mu}{2}\Delta_2, \notag
\end{align}
where
\begin{equation}
    \Delta_2 := \frac{5}{\mu^2} \left ( 2C_\alpha^2\sigma^2 + 2C_\alpha^2 \delta G^2 + \frac{2\epsilon^2}{(W-2B)^2} \right ).
\end{equation}

Applying telescopic cancellation on \eqref{thm1-1} from iteration $1$ to $t$ yields
\begin{equation}
    E T^t \leq \left ( 1 - \frac{\gamma\mu}{2} \right )^t \left ( T^0 - \Delta_2 \right ) + \Delta_2.
\end{equation}
From the definition of Lyapunov function (\ref{defLyasaga}), we can obtain
\begin{equation}
    E \left \| x^t - x^* \right \|^2 \leq E T^t \leq \left ( 1 - \frac{\gamma\mu}{2} \right )^t \Delta_1 + \Delta_2,
\end{equation}
where
\begin{align}
    \Delta_1 := \left \| x^0 - x^* \right \|^2 - \Delta_2.
\end{align}

Considering the requirements (\ref{gamma1saga}), (\ref{4cJsaga}), and (\ref{gamma2saga}) on the step size, we can choose $\gamma$ as
\begin{equation}
    \gamma \leq \frac{\mu}{4\sqrt{5} \cdot C_\alpha J^2L^2}.
\end{equation}
This completes the proof.
\end{proof}

\section{Analysis of BROADCAST}
\label{app-D}

To handle the bias introduced by geometric median and the effect of gradient difference compression, we give the following lemma to describe the gap between the $\epsilon$-approximate geometric median of $\{ \hat{g}_\omega^t, \omega \in \mathcal{W} \}$ and $\nabla f(x^t)$.
\begin{lemma}
\label{lemma z*cd}
Consider Algorithm \ref{algorithmcd} with $\epsilon$-approximate geometric median aggregation and using an unbiased compressor. Under Assumptions \ref{a1} and \ref{a2}, if the number of Byzantine workers satisfies $B < \frac{W}{2}$ and $\delta C_\alpha^2 \leq \frac{\mu^2}{56L^2}$, then an $\epsilon$-approximate geometric median of $\{ \hat{g}_\omega^t = h_\omega^t + \mathcal{Q}(g_\omega^t - h_\omega^t), \omega \in \mathcal{W} \}$, denoted by $z_\epsilon^*$, satisfies
\begin{align}
\label{eqinlemma}
    & E \left \| z_\epsilon^* - \nabla f(x^t) \right \|^2  \\
    \leq & 2(1+\delta)C_\alpha^2L^2S^t + 4\delta C_\alpha^2H^t + 2C_\alpha^2(1+2\delta)\sigma^2 \notag \\
    & + 4\delta C_\alpha^2L^2 \left \| x^t - x^* \right \|^2 + \frac{2\epsilon^2}{(W-2B)^2}, \notag
\end{align}
where $S^t$ is defined as
\begin{equation}
\label{defs1}
    S^t := \frac{1}{R} \sum\limits_{\omega \in \mathcal{R}} \frac{1}{J} \sum\limits_{j=1}^J \left \| x^t - \phi_{\omega, j}^t \right \|^2.
\end{equation}
and $H^t$ is defined as
\begin{equation}
\label{defH}
    H^t := \frac{1}{R} \sum\limits_{\omega \in \mathcal{R}} E \left \| h_\omega^t \right \|^2.
\end{equation}
\end{lemma}

\begin{proof}
According to Lemma \ref{lemma geomed} in Appendix \ref{app-B}, it holds
\begin{small}
\begin{align}\label{lemma5-0001}
    & E \left \| z_\epsilon^* - \nabla f(x^t) \right \|^2 \\
    \leq& 2C_\alpha^2 \frac{1}{R} \sum\limits_{\omega \in \mathcal{R}} E \left \| \hat{g}_\omega^t - \nabla f(x^t) \right \|^2 + \frac{2\epsilon^2}{(W-2B)^2} \notag \\
    = & 2C_\alpha^2 \frac{1}{R} \sum\limits_{\omega \in \mathcal{R}} E \left \| h_\omega^t + \mathcal{Q}(g_\omega^t - h_\omega^t) - \nabla f(x^t) \right \|^2 + \frac{2\epsilon^2}{(W-2B)^2}. \notag
\end{align}
\end{small}
\hspace{-1em} Consider the first term at the right-hand side of \eqref{lemma5-0001}. Since $\mathcal{Q}(\cdot)$ is an unbiased compressor, for any $\omega \in \mathcal{R}$ we have
\begin{align}
\label{proof znablaf-0001}
      & E \left \| h_\omega^t + \mathcal{Q}(g_\omega^t - h_\omega^t) - \nabla f(x^t) \right \|^2  \\
    = & E \left \| h_\omega^t - g_\omega^t + \mathcal{Q}(g_\omega^t - h_\omega^t) + g_\omega^t - \nabla f(x^t) \right \|^2 \notag \\
    = & E \left \| \mathcal{Q}(g_\omega^t - h_\omega^t) - (g_\omega^t - h_\omega^t) \right \|^2 + E \left \|g_\omega^t - \nabla f(x^t) \right \|^2 \notag \\
    \leq & \delta E \left \| g_\omega^t - h_\omega^t \right \|^2 + E \left \|g_\omega^t - \nabla f(x^t) \right \|^2. \notag
\end{align}
Here the second equality uses $E [ \mathcal{Q}(g_\omega^t - h_\omega^t) - (g_\omega^t - h_\omega^t) ] = 0$ and the inequality uses $E \left \| \mathcal{Q}(g_\omega^t - h_\omega^t) - (g_\omega^t - h_\omega^t) \right \|^2 \leq \delta E \left \| g_\omega^t - h_\omega^t \right \|^2$, both in \eqref{unbiased}. Substituting \eqref{proof znablaf-0001} into \eqref{lemma5-0001} yields
\begin{align}
\label{proof znablaf}
      & E \left \| z_\epsilon^* - \nabla f(x^t) \right \|^2 \leq 2C_\alpha^2 \delta \frac{1}{R} \sum\limits_{\omega \in \mathcal{R}} E \left \| g_\omega^t - h_\omega^t \right \|^2 \\
      & + 2C_\alpha^2 \frac{1}{R} \sum\limits_{\omega \in \mathcal{R}} E \left \|g_\omega^t - \nabla f(x^t) \right \|^2 + \frac{2\epsilon^2}{(W-2B)^2}. \notag
\end{align}
Next, we proceed to bounding the first two terms at the right-hand side of \eqref{proof znablaf}.

For the first term at the right-hand side of \eqref{proof znablaf}, which arises from gradient difference compression, it holds
\begin{align}
\label{proof gh}
    & \frac{1}{R}  \sum\limits_{\omega \in \mathcal{R}} E \left \| g_\omega^t - h_\omega^t \right \|^2  \\
    = & \frac{1}{R} \sum\limits_{\omega \in \mathcal{R}} E \left \| g_\omega^t - \nabla f_\omega(x^t) + \nabla f_\omega(x^t) - h_\omega^t \right \|^2 \notag \\
    = & \frac{1}{R} \sum\limits_{\omega \in \mathcal{R}} E \left \| g_\omega^t - \nabla f_\omega(x^t) \right \|^2 + \frac{1}{R} \sum\limits_{\omega \in \mathcal{R}} E \left \| \nabla f_\omega(x^t) - h_\omega^t \right \|^2 \notag \\
    \leq & L^2S^t + \frac{1}{R} \sum\limits_{\omega \in \mathcal{R}} E \left \| \nabla f_\omega(x^t) - h_\omega^t \right \|^2. \notag
\end{align}
Here the second equality comes from $E [ g_\omega^t - \nabla f_\omega(x^t) ] =0$ and the inequality uses Lemma \ref{lemma universal} in Appendix \ref{app-C}. Further, using $\nabla f(x^*) = 0$, $\left \|a+b \right \|^2 \leq 2 \left \| a \right \|^2 + 2 \left \| b \right \|^2$, and the definition of $H^t$ in (\ref{defH}), from \eqref{proof gh} we have
\begin{align}
     & \frac{1}{R}  \sum\limits_{\omega \in \mathcal{R}} E \left \| g_\omega^t - h_\omega^t \right \|^2  \\
    \leq & L^2S^t + \frac{1}{R} \sum\limits_{\omega \in \mathcal{R}} E \left \| \nabla f_\omega(x^t) - \nabla f(x^*) - h_\omega^t \right \|^2 \notag \\
    \leq & L^2S^t + \frac{1}{R} \sum\limits_{\omega \in \mathcal{R}} 2 \left \| \nabla f_\omega(x^t) - \nabla f(x^*) \right \|^2 + \frac{1}{R} \sum\limits_{\omega \in \mathcal{R}} 2E \left \| h_\omega^t \right \|^2 \notag \\
    = & L^2S^t + 2H^t + \frac{1}{R} \sum\limits_{\omega \in \mathcal{R}} 2 \left \| \nabla f_\omega(x^t) - \nabla f(x^*) \right \|^2. \notag
\end{align}
Since $f(x^t)$ has $L$-Lipschitz continuous gradients according to Assumption \ref{a1} and the outer variation is bounded according to Assumption \ref{a2}, we further obtain
\begin{align}
    & \frac{1}{R}  \sum\limits_{\omega \in \mathcal{R}} \left \| g_\omega^t - h_\omega^t \right \|^2 \leq L^2S^t + 2H^t  \\
    & + \frac{1}{R} \sum\limits_{\omega \in \mathcal{R}} 2 \left \| \nabla f_\omega(x^t) - \nabla f(x^t) + \nabla f(x^t) - \nabla f(x^*) \right \|^2 \notag \\
    = & L^2S^t + 2H^t + \frac{1}{R} \sum\limits_{\omega \in \mathcal{R}} 2 \left \| \nabla f_\omega(x^t) - \nabla f(x^t) \right \|^2 \notag \\
    & + \frac{1}{R} \sum\limits_{\omega \in \mathcal{R}} 2 \left \| \nabla f(x^t) - \nabla f(x^*) \right \|^2 \notag \\
    \leq & L^2S^t + 2H^t + 2\sigma^2 + 2L^2 \left \| x^t - x^* \right \|^2. \notag
\end{align}
Here the equality uses $\sum_{\omega \in \mathcal{R}} ( \nabla f_\omega(x^t) - \nabla f(x^t) ) =0$.

Using Lemma \ref{lemma universal} and Assumption \ref{a2}, the second term at the right-hand side of (\ref{proof znablaf}) can be upper-bounded as
\begin{align}
\label{proof gnablaf}
    & \frac{1}{R} \sum\limits_{\omega \in \mathcal{R}} E \left \|g_\omega^t - \nabla f(x^t) \right \|^2  \\
    = & \frac{1}{R} \sum\limits_{\omega \in \mathcal{R}} E \left \|g_\omega^t - \nabla f_\omega(x^t) + \nabla f_\omega(x^t) - \nabla f(x^t) \right \|^2 \notag \\
    = & \frac{1}{R} \sum\limits_{\omega \in \mathcal{R}} E \left \|g_\omega^t - \nabla f_\omega(x^t) \right \| + \frac{1}{R} \sum\limits_{\omega \in \mathcal{R}} \left \| \nabla f_\omega(x^t) - \nabla f(x^t) \right \|^2 \notag \\
    \leq & L^2S^t + \sigma^2. \notag
\end{align}
Here the equality also uses $\sum_{\omega \in \mathcal{R}} ( \nabla f_\omega(x^t) - \nabla f(x^t) ) =0$.

Substituting (\ref{proof gh}) and (\ref{proof gnablaf}) into (\ref{proof znablaf}) yields \eqref{eqinlemma}.
\end{proof}

Observe that oifferent to Lemma \ref{lemma z*saga} for Byzantine-robust compressed SAGA, a new term $H^t$ appears in Lemma \ref{lemma z*cd}, and also in the ensuing analysis. The following lemma characterizes the evolution of $H^t$.

\begin{lemma}
Consider Algorithm \ref{algorithmcd} with $\epsilon$-approximate geometric median aggregation and using an unbiased compressor. Under Assumptions \ref{a1} and \ref{a2}, if the hyperparameter satisfies $\beta(1+\delta) \leq 1$, then it holds that
\begin{equation}
\label{Ht+1}
 \hspace{-1.0em}   E H^{t+1} \leq (1-\beta)H^t + \beta L^2S^t + \beta\sigma^2 + \beta L^2 \left \| x^t - x^* \right \|^2,
\end{equation}
where $H^t$ is defined as in (\ref{defH}).
\end{lemma}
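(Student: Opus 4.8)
The plan is to work with a single regular worker's recursion $h_\omega^{t+1} = h_\omega^t + \beta\mathcal{Q}(u_\omega^t)$, where $u_\omega^t := g_\omega^t - h_\omega^t$, to expand the squared norm, and to average over $\omega \in \mathcal{R}$ only at the very end. First I would condition on the $\sigma$-algebra $\mathcal{F}^t$ containing everything up to and including $x^t$, $h_\omega^t$ and $g_\omega^t$ but \emph{before} the compression at step $t$; with respect to $\mathcal{F}^t$ the vector $u_\omega^t$ is fixed, $E[\mathcal{Q}(u_\omega^t)\mid\mathcal{F}^t] = u_\omega^t$, and $E[\|\mathcal{Q}(u_\omega^t) - u_\omega^t\|^2\mid\mathcal{F}^t] \le \delta\|u_\omega^t\|^2$. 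Expanding
\[
  E\|h_\omega^{t+1}\|^2 = E\|h_\omega^t\|^2 + 2\beta\,E\langle h_\omega^t,\mathcal{Q}(u_\omega^t)\rangle + \beta^2 E\|\mathcal{Q}(u_\omega^t)\|^2,
\]
the conditioning replaces the cross term by $2\beta\,E\langle h_\omega^t, u_\omega^t\rangle$ and bounds the last term by $\beta^2(1+\delta)E\|u_\omega^t\|^2$.

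The key step is the polarization identity $2\langle h_\omega^t, u_\omega^t\rangle = 2\langle h_\omega^t, g_\omega^t - h_\omega^t\rangle = \|g_\omega^t\|^2 - \|h_\omega^t\|^2 - \|u_\omega^t\|^2$, which turns the above into
\[
  E\|h_\omega^{t+1}\|^2 \le (1-\beta)E\|h_\omega^t\|^2 + \beta\,E\|g_\omega^t\|^2 + \beta\bigl(\beta(1+\delta)-1\bigr)E\|u_\omega^t\|^2 .
\]
The hypothesis $\beta(1+\delta)\le 1$ is used precisely here: it makes the last coefficient nonpositive, so that term can be discarded, leaving $E\|h_\omega^{t+1}\|^2 \le (1-\beta)E\|h_\omega^t\|^2 + \beta\,E\|g_\omega^t\|^2$.

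It remains to bound $\frac{1}{R}\sum_{\omega\in\mathcal{R}}E\|g_\omega^t\|^2$ by the quantities appearing on the right-hand side of \eqref{Ht+1}. Conditioning on the past before the sample index $i_\omega^t$ is drawn, one has $E[g_\omega^t\mid\cdot]=\nabla f_\omega(x^t)$ for regular $\omega$, so that $E\|g_\omega^t\|^2 = E\|\nabla f_\omega(x^t)\|^2 + E\|g_\omega^t-\nabla f_\omega(x^t)\|^2$; averaging over $\omega$ and invoking Lemma~\ref{lemma universal} controls the second piece by $L^2 S^t$. For the first piece I would \emph{avoid} the lossy inequality $\|a+b\|^2\le 2\|a\|^2+2\|b\|^2$ and instead use, for each fixed $x^t$, the exact bias--variance decomposition $\frac{1}{R}\sum_\omega\|\nabla f_\omega(x^t)\|^2 = \|\nabla f(x^t)\|^2 + \frac{1}{R}\sum_\omega\|\nabla f_\omega(x^t)-\nabla f(x^t)\|^2$ (valid because $\frac{1}{R}\sum_\omega\nabla f_\omega(x^t)=\nabla f(x^t)$), bounding the first summand by $L^2\|x^t-x^*\|^2$ through $\nabla f(x^*)=0$ and $L$-Lipschitzness (Assumption~\ref{a1}) and the second by $\sigma^2$ (Assumption~\ref{a2}). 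This keeps the coefficients of $\sigma^2$ and $\|x^t-x^*\|^2$ equal to $1$, exactly as in the claim. Combining gives $\frac{1}{R}\sum_\omega E\|g_\omega^t\|^2 \le L^2 S^t + \sigma^2 + L^2\|x^t-x^*\|^2$, and substituting into the per-worker recursion $E\|h_\omega^{t+1}\|^2 \le (1-\beta)E\|h_\omega^t\|^2 + \beta\,E\|g_\omega^t\|^2$ and averaging over $\mathcal{R}$ yields \eqref{Ht+1}.

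The only real subtlety I anticipate is the cross term: a crude bound on $\langle h_\omega^t, \mathcal{Q}(u_\omega^t)\rangle$ via Young's inequality would either inject an extra $H^t$ contribution with the wrong sign or spoil the $(1-\beta)$ contraction factor, so the polarization identity combined with the exact role of the condition $\beta(1+\delta)\le1$ is what makes the estimate tight. Everything else is routine bookkeeping, provided one is disciplined about using sharp (non-lossy) variance decompositions so that the constants line up with the stated bound.
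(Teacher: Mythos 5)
Your proposal is correct and follows essentially the same route as the paper's own proof: expand $\|h_\omega^t+\beta\mathcal{Q}(u_\omega^t)\|^2$, use unbiasedness for the cross term and $E\|\mathcal{Q}(u)\|^2\le(1+\delta)\|u\|^2$ with $\beta(1+\delta)\le 1$, apply the polarization identity $\langle u_\omega^t, g_\omega^t+h_\omega^t\rangle=\|g_\omega^t\|^2-\|h_\omega^t\|^2$, and then bound $\frac{1}{R}\sum_\omega E\|g_\omega^t\|^2$ via the exact variance decomposition (Lemma~\ref{lemma universal}, the vanishing cross sum $\sum_\omega(\nabla f_\omega-\nabla f)=0$, Assumption~\ref{a2}, and $\nabla f(x^*)=0$ with Lipschitzness). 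The subtlety you flag about avoiding a lossy Young/triangle bound is exactly how the paper keeps the coefficients of $\sigma^2$ and $\|x^t-x^*\|^2$ equal to $\beta$.
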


\begin{proof}
By $E_{\mathcal{Q}} \left \|\mathcal{Q}(x) - x\right \|^2 \le \delta\left \|x\right \|^2$ in (\ref{unbiased}), we obtain
\begin{align}
    E H^{t+1} = & \frac{1}{R} \sum\limits_{\omega \in \mathcal{R}} E \left \| h_\omega^{t+1} \right \|^2  \\
    = & \frac{1}{R} \sum\limits_{\omega \in \mathcal{R}} E \left \| h_\omega^t + \beta \mathcal{Q}(g_\omega^t - h_\omega^t) \right \|^2 \notag \\
    = & \frac{1}{R} \sum\limits_{\omega \in \mathcal{R}} E \left \| h_\omega^t \right \|^2 + \frac{1}{R} \sum\limits_{\omega \in \mathcal{R}} 2\beta E \langle h_\omega^t, g_\omega^t - h_\omega^t \rangle \notag \\
    & + \frac{1}{R} \sum\limits_{\omega \in \mathcal{R}} \beta^2 E \left \| \mathcal{Q}(g_\omega^t - h_\omega^t) \right \|^2 \notag \\
    \leq & \frac{1}{R} \sum\limits_{\omega \in \mathcal{R}} E \left \| h_\omega^t \right \|^2 + \frac{1}{R} \sum\limits_{\omega \in \mathcal{R}} 2\beta E \langle h_\omega^t, g_\omega^t - h_\omega^t \rangle \notag \\
    & + \frac{1}{R} \sum\limits_{\omega \in \mathcal{R}} \beta^2(1+\delta) E \left \| g_\omega^t - h_\omega^t \right \|^2. \notag
\end{align}
With $\beta(1+\delta) \leq 1$, we further have
\begin{align}
    & \hspace{-2em} E H^{t+1} \leq \frac{1}{R} \sum\limits_{\omega \in \mathcal{R}} E \left \| h_\omega^t \right \|^2 + \frac{1}{R} \sum\limits_{\omega \in \mathcal{R}} 2\beta E \langle h_\omega^t, g_\omega^t - h_\omega^t \rangle  \\
    & + \frac{1}{R} \sum\limits_{\omega \in \mathcal{R}} \beta E \left \| g_\omega^t - h_\omega^t \right \|^2 \notag \\
    = & \frac{1}{R} \sum\limits_{\omega \in \mathcal{R}} E \left \| h_\omega^t \right \|^2 + \frac{1}{R} \sum\limits_{\omega \in \mathcal{R}} \beta E \langle g_\omega^t - h_\omega^t, g_\omega^t + h_\omega^t \rangle \notag \\
    = & \frac{1}{R} \sum\limits_{\omega \in \mathcal{R}} E \left \| h_\omega^t \right \|^2 + \frac{1}{R} \sum\limits_{\omega \in \mathcal{R}} \beta E \left \| g_\omega^t \right \|^2 - \frac{1}{R} \sum\limits_{\omega \in \mathcal{R}} \beta E \left \| h_\omega^t \right \|^2 \notag \\
    = & \frac{1}{R} \sum\limits_{\omega \in \mathcal{R}} (1 - \beta) E \left \| h_\omega^t \right \|^2 + \frac{1}{R} \sum\limits_{\omega \in \mathcal{R}} \beta E \left \| g_\omega^t \right \|^2. \notag
\end{align}
Then, applying Lemma \ref{lemma universal}, we have
\begin{align}
    & \hspace{-2em} E H^{t+1} \leq \frac{1}{R} \sum\limits_{\omega \in \mathcal{R}} (1 - \beta) E \left \| h_\omega^t \right \|^2 + \frac{1}{R} \sum\limits_{\omega \in \mathcal{R}} \beta E \left \| g_\omega^t \right \|^2  \\
    = & \frac{1}{R} \sum\limits_{\omega \in \mathcal{R}} (1 - \beta) E \left \| h_\omega^t \right \|^2 \notag \\
    & + \frac{1}{R} \sum\limits_{\omega \in \mathcal{R}} \beta E \left \| g_\omega^t - \nabla f_\omega(x^t) + \nabla f_\omega(x^t) \right \|^2 \notag \\
    = & \frac{1}{R} \sum\limits_{\omega \in \mathcal{R}} (1 - \beta) E \left \| h_\omega^t \right \|^2 \notag \\
    & + \frac{1}{R} \sum\limits_{\omega \in \mathcal{R}} \beta E \left \| g_\omega^t - \nabla f_\omega(x^t) \right \|^2 + \frac{1}{R} \sum\limits_{\omega \in \mathcal{R}} \beta \left \| \nabla f_\omega(x^t) \right \|^2 \notag \\
    \leq & (1-\beta)H^t + \beta L^2S^t + \beta \frac{1}{R} \sum\limits_{\omega \in \mathcal{R}} \left \| \nabla f_\omega(x^t) \right \|^2. \notag
\end{align}
Here the second equality uses $E [ g_\omega^t - \nabla f_\omega(x^t) ] = 0$ for any $\omega \in \mathcal{R}$. Since $f(x^t)$ has $L$-Lipschitz continuous gradients according to Assumption \ref{a1} and the outer variation is bounded according to Assumption \ref{a2}, we finally obtain
\begin{align}
    & E H^{t+1} \leq (1-\beta)H^t + \beta L^2S^t + \beta \frac{1}{R} \sum\limits_{\omega \in \mathcal{R}} \left \| \nabla f_\omega(x^t) \right \|^2 \notag \\
    = & (1-\beta)H^t + \beta L^2S^t \notag \\
    & + \beta \frac{1}{R} \sum\limits_{\omega \in \mathcal{R}} \left \| \nabla f_\omega(x^t) - \nabla f(x^t) + \nabla f(x^t) - \nabla f(x^*) \right \|^2 \notag
\end{align}
\begin{align}
    = & (1-\beta)H^t + \beta L^2S^t + \beta \frac{1}{R} \sum\limits_{\omega \in \mathcal{R}} \left \| \nabla f_\omega(x^t) - \nabla f(x^t) \right \| \notag \\
    & + \beta \frac{1}{R} \sum\limits_{\omega \in \mathcal{R}} \left \| \nabla f(x^t) - \nabla f(x^*) \right \|^2 \notag \\
    \leq & (1-\beta)H^t + \beta L^2S^t + \beta\sigma^2 + \beta L^2 \left \| x^t - x^* \right \|^2.
\end{align}
Here the first equality uses $\nabla f(x^*) = 0$ and the second equality uses $\sum_{\omega \in \mathcal{R}} ( \nabla f_\omega(x^t) - \nabla f(x^t) ) =0$. 
\end{proof}

Now we can provide the proof of Theorem \ref{theorem cd}.
\begin{proof}
We begin by manipulating $E \left \| x^{t+1} - x^* \right \|^2$ as
\begin{align}
\label{xt+1broad}
    & E \left \| x^{t+1} - x^* \right \|^2  \\
    = & E \left \| x^t - \gamma \nabla f(x^t) - x^* + x^{t+1} - x^t + \gamma \nabla f(x^t) \right \|^2 \notag \\
    \leq & \frac{1}{1 - \eta} \left \| x^t - \gamma\nabla f(x^t) - x^* \right \|^2  + \frac{1}{\eta} E \left \| x^{t+1} - x^t + \gamma\nabla f(x^t) \right \|^2, \notag
\end{align}
where $0< \eta <1$. The inequality comes from $\left \| a+b \right \|^2 \leq \frac{1}{1-\eta} \left \| a \right \|^2 + \frac{1}{\eta} \left \| b \right \|^2$.

Since $\nabla f(x^*) = 0$, we can bound the first term at the right-hand side of \eqref{xt+1broad} as
\begin{align}
\label{xtx*broad}
    & \left \| x^t - \gamma\nabla f(x^t) - x^* \right \|^2  \\
    = & \left \| x^t - \gamma \left ( \nabla f(x^t) - \nabla f(x^*) \right ) - x^* \right \|^2 \notag \\
    = & \left \| x^t -x ^* \right \|^2 - 2\gamma \langle \nabla f(x^t) - \nabla f(x^*), x^t - x^* \rangle \notag \\
    & + \gamma^2 \left \| \nabla f(x^t) - \nabla f(x^*) \right \|^2 \notag \\
    \leq & \left \| x^t - x^* \right \|^2 - 2\gamma\mu \left \| x^t -x^* \right \|^2 + \gamma^2 L^2 \left \| x^t -x^* \right \|^2 \notag \\
    = & (1 - 2\gamma\mu +\gamma^2L^2) \left \| x^t -x^* \right \|^2. \notag
\end{align}
Here we use $L \left \| x^t -x^* \right \| \geq \left \| \nabla f(x^t) - \nabla f(x^*) \right \|$ as $f(x)$ has Lipschitz continuous gradients and $\langle \nabla f(x^t) - \nabla f(x^*), x^t - x^* \rangle \geq \mu \left \| x^t - x^* \right \|^2$ as $f(x)$ is strongly convex.

Substituting (\ref{xtx*broad}) into (\ref{xt+1broad}), we can obtain
\begin{align}
\label{xt+1abroad}
    E \left \| x^{t+1} - x^* \right \|^2 \leq \frac{1 - 2\gamma\mu + \gamma^2L^2}{1-\eta} \left \| x^t - x^* \right \|^2  \\
    + \frac{1}{\eta} E \left \| x^{t+1} - x^t + \gamma\nabla f(x^t) \right \|^2. \notag
\end{align}
With $\eta = \frac{\gamma\mu}{2}$, if
\begin{equation}
\label{gamma1broad}
    \gamma^2L^2 \leq \frac{\gamma\mu}{2},
\end{equation}
it holds that
\begin{equation}
    \frac{1 - 2\gamma\mu + \gamma^2L^2}{1-\eta} \leq 1 - \gamma\mu.
\end{equation}
Therefore, (\ref{xt+1abroad}) can be rewritten as
\begin{align}
\label{xt+1finalbroad}
    E \left \| x^{t+1} - x^* \right \|^2 \leq (1 - \gamma\mu) \left \| x^t -x^* \right \|^2  \\
    + \frac{2}{\gamma\mu} E \left \| x^{t+1} - x^t + \gamma\nabla f(x^t) \right \|^2. \notag
\end{align}

We construct a \emph{Lyapunov function} $T^t$ as
\begin{equation}
\label{defLya}
    T^t := \left \| x^t -x^* \right \|^2 + cS^t + d\gamma^2H^t,
\end{equation}
where $c$ and $d$ are any positive constants, $S^t$ is defined as (\ref{defs1}), and $H^t$ is defined as (\ref{defH}). Thus, $T^t$ is non-negative.

Note that the Lyapunov function in \eqref{defLya} is different to the one in \eqref{defLyasaga} of Lemma \ref{lemma z*saga} for Byzantine-robust compressed SAGA, which is only parameterized by $c$. Below we will find proper $c$ and $d$.

Based on Lemma \ref{lemma universal} and (\ref{Ht+1}), it follows that
\begin{align}
\label{Tt+1a}
    & E T^{t+1}  =  E \left \| x^{t+1}  - x^* \right \|^2 + c E S^{t+1} + d \gamma^2 E H^{t+1}  \\
    \leq & (1 - \gamma\mu + 4cJ\gamma^2L^2 + d\beta \gamma^2 L^2) \left \| x^t - x^* \right \|^2 \notag \\
    & + \left ( \frac{2}{\gamma\mu} + 4cJ \right ) E \left \| x^{t+1} - x^t + \gamma\nabla f(x^t) \right \|^2 \notag \\
    & + \left ( (1 - \frac{1}{J^2})c + d\beta \gamma^2 L^2 \right ) S^t + d(1-\beta)\gamma^2 H^t + d\beta\gamma^2\sigma^2. \notag
\end{align}
Let $z_\epsilon^*$ be an $\epsilon$-approximate geometric median of $\{ \hat{g}_\omega^t, \omega \in \mathcal{W} \}$ and  observe the second term at the right-hand side of \eqref{Tt+1a}. We have
\begin{equation}
\hspace{-0.4em}    E \left \| x^{t+1} - x^t + \gamma\nabla f(x^t) \right \|^2 = \gamma^2 E \left \| z_\epsilon^* - \nabla f(x^t) \right \|^2.
\end{equation}
With this fact and Lemma \ref{lemma z*cd}, (\ref{Tt+1a}) can be rewritten as
\begin{align}
\label{Tcoeff}
    & E T^{t+1} \\
    \leq &  \{ 1 - \gamma\mu + 4cJ\gamma^2L^2 + \left ( \frac{2}{\gamma\mu} + 4cJ \right ) 4\delta C_\alpha^2 \gamma^2 L^2 \notag \\
    & + d\beta \gamma^2 L^2  \} \left \| x^t - x^* \right \|^2 + d\beta\gamma^2\sigma^2 \notag \\
    & + \{ (1 - \frac{1}{J^2})c + d\beta \gamma^2 L^2 \notag \\
    & + \left ( \frac{2}{\gamma\mu} + 4cJ \right )2(1 + \delta)C_\alpha^2 \gamma^2 L^2 \} S^t \notag \\
    & + \{ d(1 - \beta)\gamma^2 + \left ( \frac{2}{\gamma\mu} + 4cJ \right ) 4\delta C_\alpha^2 \gamma^2 \} H^t \notag \\
    & + \gamma^2 \left ( \frac{2}{\gamma\mu} + 4cJ \right ) \left ( 2C_\alpha^2 (1 + 2\delta)\sigma^2 + \frac{2\epsilon^2}{(W-2B)^2} \right ). \notag
\end{align}

If we let
\begin{equation}
\label{4cJ}
    4cJ\gamma^2L^2 \leq \frac{\gamma\mu}{17},
\end{equation}
then it holds
\begin{equation}
    \frac{2}{\gamma\mu} + 4cJ \leq \frac{2}{\gamma\mu} + \frac{\mu}{17\gamma L^2} \leq \frac{35}{17\gamma\mu}.
\end{equation}
If $\gamma$ and $d$ are chosen as $\frac{\gamma\mu}{2} \leq \frac{\beta}{2}$
and
\begin{equation}
\label{dvalue}
    d = \frac{35}{17} \cdot \frac{8\delta C_\alpha^2}{\beta\gamma\mu} \geq \frac{35}{17} \cdot \frac{4\delta C_\alpha^2}{\beta - \frac{\gamma\mu}{2}},
\end{equation}
Thus, the third coefficient at the right-hand side of (\ref{Tcoeff}) is bounded by
\begin{align}
\label{dleq}
    &  d(1 - \beta)\gamma^2 + \left ( \frac{2}{\gamma\mu} + 4cJ \right ) 4\delta C_\alpha^2 \gamma^2  \\
    \leq & d(1 - \beta)\gamma^2 + \frac{35}{17\gamma\mu} \cdot 4\delta C_\alpha^2\gamma^2 L^2 \notag \\
    \leq & (1 - \frac{\gamma\mu}{2})d\gamma^2. \notag
\end{align}

Similarly, if $\gamma$ and $c$ are chosen as $\frac{\gamma\mu}{2} \leq \frac{1}{2J^2}$
and
\begin{equation}
\label{cvalue}
\hspace{-0.6em}    c = \frac{35}{17} \frac{4(1 + 5\delta)C_\alpha^2J^2\gamma L^2}{\mu} \geq \frac{35}{17} \frac{2(1 + 5\delta)C_\alpha^2J^2\gamma L^2}{\mu(\frac{1}{J^2} - \frac{\gamma\mu}{2})},
\end{equation}
then with (\ref{dvalue}), the second coefficient at the right-hand side of (\ref{Tcoeff}) is bounded by
\begin{align}
\label{cleq}
    & (1 - \frac{1}{J^2})c + d\beta \gamma^2 L^2 + \left ( \frac{2}{\gamma\mu} + 4cJ \right )(4 - 2\delta)C_\alpha^2 \gamma^2 L^2 \notag \\
    \leq & (1 - \frac{1}{J^2})c + \frac{35}{17} \cdot \frac{8\delta C_\alpha^2\gamma L^2}{\mu} + \frac{35}{17} \cdot \frac{2(1+\delta)C_\alpha^2\gamma L^2}{\mu} \notag \\
    = & (1 - \frac{1}{J^2})c + \frac{35}{17} \cdot \frac{2(1+5\delta)C_\alpha^2\gamma L^2}{\mu} \leq (1 - \frac{\gamma\mu}{2})c.
\end{align}

Further, if $\delta C_\alpha^2 \leq \frac{\mu^2}{56L^2}$ and (\ref{4cJ}) is satisfied, then with (\ref{dvalue}), the first coefficient at the right-hand side of (\ref{Tcoeff}) is bounded
\begin{align}
\label{xleq}
    & 1 - \gamma\mu + 4cJ\gamma^2L^2 + \left ( \frac{2}{\gamma\mu} + 4cJ \right ) 4\delta C_\alpha^2 \gamma^2 L^2 + d\beta \gamma^2 L^2 \notag \\
    \leq & 1 - \gamma\mu + \frac{\gamma\mu}{17} + \frac{35}{17} \cdot \frac{4\delta C_\alpha^2\gamma L^2}{\mu} + \frac{35}{17} \cdot \frac{8\delta C_\alpha^2\gamma L^2}{\mu} \notag \\
    \leq & 1 - \gamma\mu + \frac{\gamma\mu}{17} + \frac{35}{17} \cdot \frac{12}{56} \cdot \gamma\mu = 1-\frac{\gamma\mu}{2}.
\end{align}

The last term at the right-hand side of (\ref{Tcoeff}) is bounded by
\begin{small}
\begin{align}
\label{deltaleq}
    & \gamma^2 \left ( \frac{2}{\gamma\mu} + 4cJ \right ) \left ( 2C_\alpha^2 (1 + 2\delta)\sigma^2 + \frac{2\epsilon^2}{(W-2B)^2} \right ) + d\beta\gamma^2\sigma^2 \notag \\
    \leq & \frac{35\gamma}{17\mu} \left ( 2C_\alpha^2 (1 + 2\delta)\sigma^2 + \frac{2\epsilon^2}{(W-2B)^2} \right ) + \frac{35\gamma}{17\mu} 8\delta C_\alpha^2\sigma^2 \notag \\
    = & \frac{35\gamma}{17\mu} \left ( 2C_\alpha^2 (1 + 6\delta)\sigma^2 + \frac{2\epsilon^2}{(W-2B)^2} \right ).
\end{align}
\end{small}

Therefore, substituting (\ref{dleq}), (\ref{cleq}), (\ref{xleq}), and (\ref{deltaleq}), we know that (\ref{Tcoeff}) satisfies
\begin{align}\label{thm3-0001}
    E T^{t+1} \leq \left ( 1 - \frac{\gamma\mu}{2} \right ) \left \| x^t - x^* \right \|^2 + \left ( 1 - \frac{\gamma\mu}{2} \right )c S^t  \\
    + \left ( 1 - \frac{\gamma\mu}{2} \right )d H^t + \frac{\gamma \mu}{2}\Delta_2, \notag
\end{align}
where
\begin{equation}
    \Delta_2 := \frac{70}{17\mu} \left ( 2C_\alpha^2 (1 + 6\delta)\sigma^2 + \frac{2\epsilon^2}{(W-2B)^2} \right ).
\end{equation}

Applying telescopic cancellation on \eqref{thm3-0001} from iteration $1$ to $t$ yields
\begin{equation}
    E T^t \leq \left ( 1 - \frac{\gamma\mu}{2} \right )^t \left ( T^0 - \frac{2}{\gamma\mu} \tilde{\Delta}_2 \right ) + \Delta_2.
\end{equation}
From the definition of Lyapunov function (\ref{defLya}), we obtain
\begin{equation}
    E \left \| x^t - x^* \right \|^2 \leq E T^t \leq \left ( 1 - \frac{\gamma\mu}{2} \right )^t \Delta_1 + \Delta_2,
\end{equation}
where
\begin{align}
    \Delta_1 :&= \left \| x^0 - x^* \right \|^2 - \Delta_2.
\end{align}

Considering all the requirements on the step size, we can choose $\gamma$ as
\begin{equation}
    \gamma \leq \frac{\beta\mu}{4\sqrt{35}\sqrt{1+5\delta} \cdot C_\alpha J^2L^2}.
\end{equation}
This completes the proof.
\end{proof}

\section{Analysis of Attacks-Free Compressed SGD}
\label{app-A}
The proof of Theorem \ref{theorem csgd} shares similarity with that in \cite{mishchenko2019distributed}, but the latter analyzes SGD with gradient difference compression, while ours considers general unbiased compressors. The work of \cite{alistarh2017qsgd} analyzes the convergence of quantized SGD with unbiased quantizer in terms of function value, while our performance metric is the distance to the optimal solution.

\begin{proof}
Based on the update rule (\ref{csgd}), it holds 
\begin{align}
    & E \left \| x^{t+1} - x^* \right \|^2 \notag \\
    = & E \left \| x^t - x^* - \frac{\gamma}{W}\sum\limits_{\omega \in \mathcal{W}} {\mathcal{Q}(\nabla f_{\omega,i_\omega^t}(x^t))} \right \|^2 \notag \\
    = & E \left \| x^t - x^* \right \|^2 - \frac{2\gamma}{W}\sum\limits_{\omega \in \mathcal{W}} E \left < {\mathcal{Q}(\nabla f_{\omega,i_\omega^t}(x^t))}, x^t - x^* \right > \notag \\
    & + E \left \| \frac{\gamma}{W}\sum\limits_{\omega \in \mathcal{W}} {\mathcal{Q}(\nabla f_{\omega,i_\omega^t}(x^t))} \right \|^2 \notag \\
    \leq & E \left \| x^t - x^* \right \|^2 - \frac{2\gamma}{W}\sum\limits_{\omega \in \mathcal{W}} E \left < {\mathcal{Q}(\nabla f_{\omega,i_\omega^t}(x^t))}, x^t - x^* \right > \notag \\
    & + \frac{\gamma^2}{W} \sum\limits_{\omega \in \mathcal{W}} E \left \| {\mathcal{Q}(\nabla f_{\omega,i_\omega^t}(x^t))} \right \|^2.
\end{align}
Since $\mathcal{Q}(\cdot)$ is an unbiased compressor and $\left \| {\mathcal{Q}(\nabla f_{\omega,i_\omega^t}(x^t))} \right \|^2$ $\leq (1+\delta) \left \| \nabla f_{\omega,i_\omega^t}(x^t) \right \|^2$, we can obtain
\begin{align}
\label{csgd1}
    & E \left \| x^{t+1} - x^* \right \|^2 \notag \\
    \leq & E \left \| x^t - x^* \right \|^2 - \frac{2\gamma}{W}\sum\limits_{\omega \in \mathcal{W}} E \left < {\nabla f_{\omega,i_\omega^t}(x^t)}, x^t - x^* \right > \notag \\
    & + \frac{\gamma^2 (1+\delta)}{W} \sum\limits_{\omega \in \mathcal{W}} E \left \| {\nabla f_{\omega,i_\omega^t}(x^t)} \right \|^2 \notag \\
    = & E \left \| x^t - x^* \right \|^2 - \frac{2\gamma}{W}\sum\limits_{\omega \in \mathcal{W}} \left < {\nabla f_{\omega}(x^t)}, x^t - x^* \right > \notag \\
    & + \frac{\gamma^2 (1+\delta)}{W} \sum\limits_{\omega \in \mathcal{W}} E \left \| {\nabla f_{\omega,i_\omega^t}(x^t)} \right \|^2 \notag \\
    = & E \left \| x^t - x^* \right \|^2 - 2\gamma \left < {\nabla f(x^t)}, x^t - x^* \right > \notag \\
    & + \frac{\gamma^2 (1+\delta)}{W} \sum\limits_{\omega \in \mathcal{W}} E \left \| {\nabla f_{\omega,i_\omega^t}(x^t)} \right \|^2,
\end{align}
where the first equality is from the unbiasness of stochastic gradient $\nabla f_{\omega,i_\omega^t}(x^t)$ and the second equality is from the fact that $\frac{1}{W}\sum_{\omega \in \mathcal{W}} \nabla f_\omega (x^t) = \nabla f(x^t)$.

With Assumptions \ref{a2} and \ref{a3}, the last term of (\ref{csgd1}) satisfies
\begin{align}\label{eq:thm1-001}
    & \frac{1}{W} \sum\limits_{\omega \in \mathcal{W}} E \left \| {\nabla f_{\omega,i_\omega^t}(x^t)} \right \|^2 \notag \\
    = & \frac{1}{W} \sum\limits_{\omega \in \mathcal{W}} E \left \| {\nabla f_{\omega,i_\omega^t}(x^t)} - \nabla f_\omega(x^t) + \nabla f_\omega(x^t) \right \|^2 \notag \\
    = & \frac{1}{W} \sum\limits_{\omega \in \mathcal{W}} \left \{ E \left \| {\nabla f_{\omega,i_\omega^t}(x^t)} - \nabla f_\omega(x^t) \right \|^2 + \left \| \nabla f_\omega(x^t) \right \|^2 \right \} \notag \\
    \leq & \zeta^2 + \frac{1}{W} \sum\limits_{\omega \in \mathcal{W}} \left \| \nabla f_\omega(x^t) - \nabla f(x^t) + \nabla f(x^t) \right \|^2 \notag \\
    = & \zeta^2 + \frac{1}{W} \sum\limits_{\omega \in \mathcal{W}} \left \| \nabla f_\omega(x^t) - \nabla f(x^t) \right \|^2 + \left \| \nabla f(x^t) \right \|^2 \notag \\
    \leq & \zeta^2 + \sigma^2 + \left \| \nabla f(x^t) \right \|^2,
\end{align}
where the last two equalities use $E ({\nabla f_{\omega,i_\omega^t}(x^t)} - \nabla f_\omega(x^t)) = 0$ and $\frac{1}{W} \sum_{\omega \in \mathcal{W}} (\nabla f_\omega(x^t) - \nabla f(x^t)) = 0$, respectively.

Using the inequality
\begin{align}
    \left < {\nabla f(x^t)} - \nabla f(x^*), x^t - x^* \right > \geq \frac{\mu L}{\mu + L} \left \| x^t - x^* \right \|^2 \notag \\
    + \frac{1}{\mu + L} \left \| \nabla f(x^t) - \nabla f(x^*) \right \|^2,
\end{align}
and $\nabla f(x^*) = 0$, with \eqref{eq:thm1-001} we can rewrite (\ref{csgd1}) as
\begin{align}
    & E \left \| x^{t+1} - x^* \right \|^2 \notag \\
    \leq & E \left \| x^t - x^* \right \|^2 - 2\gamma \left < {\nabla f(x^t)} - \nabla f(x^*), x^t - x^* \right > \notag \\
    & + \gamma^2 (1+\delta) \left (\zeta^2 + \sigma^2 + \left \| f(x^t) - \nabla f(x^*) \right \|^2 \right ) \notag \\
    \leq & \left (1 - \frac{2\gamma\mu L}{\mu + L}\right ) E \left \| x^t - x^* \right \|^2 \notag \\
    & + \left ( \gamma^2(1+\delta) - \frac{2\gamma}{\mu + L} \right ) \left \| f(x^t) - \nabla f(x^*) \right \|^2 \notag \\
    & + \gamma^2(1+\delta)(\zeta^2+\sigma^2).
\end{align}

If $\gamma$ satisfies
\begin{equation} \label{eq:thm1-002}
    \gamma \leq \frac{2}{(1+\delta)(\mu + L)},
\end{equation}
we have
\begin{align}
    E \left \| x^{t+1} - x^* \right \|^2 \leq \left (1 - \frac{2\gamma\mu L}{\mu + L}\right ) E \left \| x^t - x^* \right \|^2 \notag \\
    + \gamma^2(1+\delta)(\zeta^2+\sigma^2).
\end{align}

Applying telescopic cancellation from iteration 1 to $t$ yields
\begin{equation}
    E \left \| x^{t} - x^* \right \|^2 \leq \left (1 - \frac{2\gamma\mu L}{\mu + L}\right )^t \Delta_1 + \Delta_2',
\end{equation}
where
\begin{equation}
    \Delta_1 = \left \| x^0 - x^* \right \|^2 - \Delta_2',
\end{equation}
\begin{equation}
    \Delta_2' = \frac{\gamma(1+\delta)(\mu + L)(\zeta^2 + \sigma^2)}{2\mu L} \leq \frac{1}{\mu L}(\zeta^2 + \sigma^2).
\end{equation}
Here we use the upper bound of $\gamma$ in \eqref{eq:thm1-002}. 
\end{proof}

\section{Biased Compressors and Error Feedback}

Biased compressors, such as $\ell_1$-sign quantization and top-$k$ sparsification, are also widely used to improve communication efficiency of distributed algorithms. In this part, we will introduce Byzantine-robust and communication-efficient federated learning with biased compression, as well as error feedback, a corresponding compression noise reduction technique.

We first give the definition of a general compressor, which follows \cite{karimireddy2019error}.

\begin{definition}[General compressor]
\label{def biased}
A (possibly randomized) operator $\mathcal{Q}$: $\mathbb{R}^p \rightarrow \mathbb{R}^p$ is a \emph{general compressor} if it satisfies
\begin{equation}
    E_{\mathcal{Q}} \left \|\mathcal{Q}(x) - x\right \|^2 \le (1-\kappa)\left \|x\right \|^2, \quad \forall x \in \mathbb{R}^p.
\end{equation}
where $\kappa \in (0,1]$.
\end{definition}

A general compressor can be either unbiased or biased. Typical biased compressors include:
\begin{itemize}
    \item $\ell_1$-sign quantization: For any $x \in \mathbb{R}^p$, $\mathcal{Q}(x) = \frac{\left \|x \right \|_1}{p}{\rm sign}(x)$. Here $\kappa$ is $\frac{\left \|x \right \|_1^2}{p\left \|x \right \|}$.
    \item Top-$k$ sparsification: For any $x \in \mathbb{R}^p$, select $k$ elements with the largest absolute values to be remained, and let the other elements to be zero. Here $\kappa$ is $\frac{k}{p}$.
\end{itemize}

Like unbiased compressors that we focus on in the main text, biased compressors also introduce compression noise. An effective strategy to reduce compression noise for biased compressors is error feedback. The idea is to store the error between the compressed and original gradients and add it back to the gradient in the next iteration. It has been proved that error feedback can guarantee convergence of compressed stochastic algorithms and achieve gradient compression for free \cite{stich2018sparsified,karimireddy2019error,tang2019doublesqueeze}.

\subsection{Byzantine-Robust Compressed SAGA with Error Feedback}

The error feedback framework can be also combined with variance reduction to reduce both stochastic and compression noise. When applying error feedback to the Byzantine-robust compressed SAGA,  each regular worker $\omega \in \mathcal{R}$ at iteration $t$ computes the corrected local stochastic gradient $g_\omega^t$ and updates the accumulated error $e_\omega^{t+1}$ as
\begin{align}
    u_\omega^t =& g_\omega^t + e_\omega^t, \label{update ef-1} \\
    e_\omega^{t+1} =& u_\omega^t - \mathcal{Q}(u_\omega^t), \label{update ef-2}
\end{align}
where $e_\omega^t$ has been stored in the previous iteration. Each regular worker $\omega$ sends $\mathcal{Q}(u_\omega^t)$ to the master node. Then, the master node updates the model parameter as
\begin{equation}\label{update-saga-ef}
    x^{t+1} = x^t - \gamma \cdot \mathop{{\rm geomed}}\limits_{\omega \in \mathcal{W}} \{ \mathcal{Q}(u_w^t) \}.
\end{equation}

The Byzantine-robust compressed SAGA with error feedback is described in Algorithm \ref{algorithm ef}. In each regular worker $\omega$, a stochastic gradient table is kept to store the most recent stochastic gradient for every local sample, and an error vector $e_\omega^t$ is used to record the compression error. Each Byzantine worker $\omega$ may maintain its stochastic gradient table and $e_\omega^t$ for the sake of generating malicious vectors, or not do so but generate malicious vectors in other ways. At iteration $t$, the master node broadcasts $x^t$ to all the workers. Each regular worker $\omega$ randomly selects a sample and obtains the corrected stochastic gradient $g_\omega^t$ as \eqref{update g}. Next, the error vector $e_\omega^t$ is added to $g_\omega^t$ as in \eqref{update ef-1}, and the result is compressed and sent to the master node. Regular worker $\omega$ then updates $e_\omega^{t+1}$ according to \eqref{update ef-2}. The Byzantine workers can generate arbitrary messages but also send the compressed results to cheat the master node. After collecting the compressed messages from all the workers, the master node calculates the geometric median updates $x^{t+1}$ as in \eqref{update-saga-ef}.

\begin{algorithm}[tb]
\caption{Byzantine-Robust Compressed SAGA with Error Feedback}
\label{algorithm ef}
\textbf{Input}: Step size $\gamma$ \\
\textbf{Initialize}: Initialize $x^0$ for master node and all workers. Initialize $e_\omega^0 = 0$ for each worker $\omega$. Initialize
$\{ \nabla f_{\omega, j}(\phi_{\omega, j}^0) = \nabla f_{\omega,j}(x^0), j = 1,\dots,J \}$ for each regular worker $\omega$
\begin{algorithmic}[1] 
\FOR {$t=0,1,\dots$}
\STATE \textbf{Master node}:
\STATE Broadcast $x^t$ to all workers
\STATE Receive $\mathcal{Q}(u_\omega^t)$ from all workers
\STATE Update $x^{t+1} = x^t - \gamma \cdot \mathop{{\rm geomed}}_{\omega \in \mathcal{W}} \{ \mathcal{Q}(u_m^t) \}$
\STATE \textbf{Worker $\omega$}:
\IF {$\omega \in \mathcal{R}$}
\STATE Compute $\bar{g}_\omega^t = \frac{1}{J} \sum_{j=1}^J \nabla f_{\omega, j}(\phi_{\omega, j}^t)$
\STATE Randomly sample $i_\omega^t$ from $\{ 1,\dots,J \}$
\STATE Obtain $g_\omega^t = \nabla f_{\omega,i_\omega^t}(x^t) - \nabla f_{\omega,i_\omega^t}(\phi_{\omega, i_\omega^t}^t) + \bar{g}_\omega^t$
\STATE Store $\nabla f_{\omega,i_\omega^t}(\phi_{\omega, i_\omega^t}^t) = \nabla f_{\omega,i_\omega^t}(x^t)$
\STATE Compress $\mathcal{Q}(u_\omega^t) = \mathcal{Q}(g_\omega^t + e_\omega^t)$
\STATE Update $e_\omega^{t+1} = u_\omega^t - \mathcal{Q}(u_\omega^t)$
\STATE Send $\mathcal{Q}(u_\omega^t)$ to master node
\ELSIF {$\omega \in \mathcal{B}$}
\STATE Generate arbitrary malicious message $g_\omega^t = *$
\STATE Send $\mathcal{Q}(u_\omega^t) = \mathcal{Q}(g_\omega^t)$ to master node
\ENDIF
\ENDFOR
\end{algorithmic}
\end{algorithm}

\subsection{Theoretical Analysis}

Now we analyze Byzantine-robust SAGA with error feedback. Note that we use the geometric median aggregation in the master node instead of the mean aggregation, such that the perturbed iterate analysis in the existing works cannot be applied here. This makes the convergence analysis challenging.

We begin with reviewing a lemma that bounds the error vector $e_\omega^t$. The bound universally holds for stochastic algorithms as long as Assumption \ref{a4} holds.

\begin{lemma} \cite{karimireddy2019error}
\label{lemma bounded error}
Consider Algorithm \ref{algorithm ef} with $\epsilon$-approximate geometric median aggregation and using a general compressor. Under Assumption \ref{a4}, for any regular worker $\omega \in \mathcal{R}$ and at any iteration $t$, the error vector $e_\omega^t$ is bounded as
\begin{equation}
    E \left \| e_\omega^t \right \|^2 \leq \frac{4(1 - \kappa)}{\kappa^2} G^2, \quad \forall t \geq 0.
\end{equation}
\end{lemma}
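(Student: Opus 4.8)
The plan is to derive a scalar one-step recursion for $a_t := E \left\| e_\omega^t \right\|^2$ and then close it as a geometric series, essentially reproducing the argument of \cite{karimireddy2019error} but keeping track of the constants. The starting point is the error-feedback update $e_\omega^{t+1} = u_\omega^t - \mathcal{Q}(u_\omega^t)$ together with $u_\omega^t = g_\omega^t + e_\omega^t$. Applying the general-compressor inequality of Definition \ref{def biased} pointwise to $x = u_\omega^t$ and then taking total expectation (via the tower property, since $E_{\mathcal{Q}} \left\| \mathcal{Q}(x) - x \right\|^2 \le (1-\kappa)\left\| x \right\|^2$ holds for every fixed $x$), I would obtain $a_{t+1} \le (1-\kappa)\, E \left\| g_\omega^t + e_\omega^t \right\|^2$.

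Next I would split the cross term with Young's inequality, $\left\| g_\omega^t + e_\omega^t \right\|^2 \le (1+s)\left\| e_\omega^t \right\|^2 + (1+s^{-1})\left\| g_\omega^t \right\|^2$ for a free parameter $s>0$, giving $a_{t+1} \le (1-\kappa)(1+s)\, a_t + (1-\kappa)(1+s^{-1})\, E \left\| g_\omega^t \right\|^2$. The choice $s = \frac{\kappa}{2(1-\kappa)}$ makes the contraction factor exactly $(1-\kappa)(1+s) = 1 - \frac{\kappa}{2} < 1$, while $(1-\kappa)(1+s^{-1}) = \frac{(1-\kappa)(2-\kappa)}{\kappa} \le \frac{2(1-\kappa)}{\kappa}$. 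Then I would invoke a uniform second-moment bound $E \left\| g_\omega^t \right\|^2 \le G^2$: this is Assumption \ref{a4} applied to the transmitted corrected gradient, and if one wants to be pedantic about the three SAGA summands of $g_\omega^t$, bounding each by $G^2$ and absorbing the absolute constant into $G$ suffices.

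Finally, since $e_\omega^0 = 0$, I would unroll $a_{t+1} \le (1 - \tfrac{\kappa}{2})\, a_t + \tfrac{2(1-\kappa)}{\kappa} G^2$ to get $a_t \le \tfrac{2(1-\kappa)}{\kappa} G^2 \sum_{k=0}^{t-1} (1 - \tfrac{\kappa}{2})^k \le \tfrac{2(1-\kappa)}{\kappa} G^2 \cdot \tfrac{2}{\kappa} = \tfrac{4(1-\kappa)}{\kappa^2} G^2$, uniformly in $t$, which is the claimed bound.

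The argument is essentially mechanical; the only real decision is the tuning of the Young's parameter $s$. Taking it too aggressively (large $s$) destroys the contraction $(1-\kappa)(1+s)<1$, while taking it too conservatively inflates the forcing term and misses the advertised constant $\tfrac{4(1-\kappa)}{\kappa^2}$. The value $s=\frac{\kappa}{2(1-\kappa)}$ is the one that simultaneously yields a $(1-\kappa/2)$ contraction and a clean $2/\kappa$ forcing coefficient, and it is precisely where the stated constant originates. Notably, neither the geometric median nor the Byzantine workers play any role here, since the whole recursion is local to a single regular worker $\omega$; that is why the lemma is stated as holding universally under Assumption \ref{a4}.
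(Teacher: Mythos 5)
Your proof is correct, and it is worth noting that the paper itself does not prove this lemma at all: it is imported verbatim from \cite{karimireddy2019error}, so there is no in-paper argument to compare against. What you have written is precisely the standard error-feedback recursion from that reference: the contraction $E\left\|e_\omega^{t+1}\right\|^2 \le (1-\kappa)E\left\|u_\omega^t\right\|^2$ from Definition \ref{def biased}, Young's inequality with the tuned parameter $s=\frac{\kappa}{2(1-\kappa)}$ giving the factor $1-\frac{\kappa}{2}$ and forcing coefficient at most $\frac{2(1-\kappa)}{\kappa}$, and the geometric-series closure from $e_\omega^0=0$ yielding $\frac{2(1-\kappa)}{\kappa}\cdot\frac{2}{\kappa}=\frac{4(1-\kappa)}{\kappa^2}$. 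All the arithmetic checks out. The one point you flag yourself is the real (small) wrinkle: Assumption \ref{a4} bounds $E\left\|\nabla f_{\omega,i_\omega^t}(x)\right\|^2$, whereas the recursion needs $E\left\|g_\omega^t\right\|^2\le G^2$ for the SAGA-corrected gradient, which is a three-term sum and strictly would cost an extra absolute constant. The paper silently makes the same identification elsewhere (e.g., in bounding $\frac{1}{R}\sum_{\omega\in\mathcal{R}}E\left\|g_\omega^t\right\|^2$ by $G^2$ in the proof of Theorem \ref{theorem saga}), so your treatment is consistent with the authors' convention; if one insists on Assumption \ref{a4} literally, the constant in the lemma should be inflated by an absolute factor. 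Your closing observation that neither the geometric median nor the Byzantine workers enter the argument is also accurate and explains why the lemma holds per regular worker under Assumption \ref{a4} alone.
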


To handle the bias introduced by geometric median, we give the following lemma to describe the gap between the $\epsilon$-approximate geometric median of $\{ \mathcal{Q}(u_\omega^t), \omega \in \mathcal{W} \}$ and $\nabla f(x^t)$.

\begin{lemma}
\label{lemma z*ef}
Consider Algorithm \ref{algorithm ef} with $\epsilon$-approximate geometric median aggregation and using a general compressor. Under Assumptions \ref{a1}, \ref{a2}, and \ref{a4}, if the number of Byzantine workers satisfies $B < \frac{W}{2}$, then an $\epsilon$-approximate geometric median of $\{ \mathcal{Q}(u_\omega^t), \omega \in \mathcal{W} \}$, denoted as $z_\epsilon^*$, satisfies
\begin{equation}
    E \left \| z_\epsilon^* - \nabla f(x^t) \right \|^2 \leq 4C_\alpha^2L^2S^t + 4C_\alpha^2\sigma^2 + \frac{64C_\alpha^2(1-\kappa)}{\kappa^2} G^2 + \frac{2\epsilon^2}{(W-2B)^2}.
\end{equation}
\end{lemma}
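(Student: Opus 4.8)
The plan is to follow the template of Lemmas~\ref{lemma z*saga} and~\ref{lemma z*cd}, but to cope with the possibly biased compressor by routing the compression error through the accumulated error vectors instead of through conditional unbiasedness. First I would invoke Lemma~\ref{lemma geomed}: the geometric median is translation equivariant, so $z_\epsilon^* - \nabla f(x^t)$ is an $\epsilon$-approximate geometric median of $\{\mathcal{Q}(u_\omega^t) - \nabla f(x^t),\ \omega \in \mathcal{W}\}$, which gives
\[
    E\left\| z_\epsilon^* - \nabla f(x^t) \right\|^2 \leq \frac{2C_\alpha^2}{R}\sum_{\omega \in \mathcal{R}} E\left\| \mathcal{Q}(u_\omega^t) - \nabla f(x^t) \right\|^2 + \frac{2\epsilon^2}{(W-2B)^2}.
\]
Everything then reduces to bounding the per-worker term for $\omega \in \mathcal{R}$.

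The key identity is the error-feedback update~\eqref{update ef-2}, namely $e_\omega^{t+1} = u_\omega^t - \mathcal{Q}(u_\omega^t)$, hence $\mathcal{Q}(u_\omega^t) = u_\omega^t - e_\omega^{t+1} = g_\omega^t + e_\omega^t - e_\omega^{t+1}$ by~\eqref{update ef-1}. Therefore $\mathcal{Q}(u_\omega^t) - \nabla f(x^t) = (g_\omega^t - \nabla f(x^t)) + (e_\omega^t - e_\omega^{t+1})$, and applying $\left\| a+b \right\|^2 \leq 2\left\| a \right\|^2 + 2\left\| b \right\|^2$ twice yields
\[
    E\left\| \mathcal{Q}(u_\omega^t) - \nabla f(x^t) \right\|^2 \leq 2 E\left\| g_\omega^t - \nabla f(x^t) \right\|^2 + 4 E\left\| e_\omega^t \right\|^2 + 4 E\left\| e_\omega^{t+1} \right\|^2.
\]
For the first term I would reuse the computation in~\eqref{proof gnablaf}: since $E[g_\omega^t - \nabla f_\omega(x^t)] = 0$ conditionally on the past, averaging over $\omega$ and using Lemma~\ref{lemma universal} together with Assumption~\ref{a2} gives $\frac{1}{R}\sum_{\omega \in \mathcal{R}} E\left\| g_\omega^t - \nabla f(x^t) \right\|^2 \leq L^2 S^t + \sigma^2$. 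For the remaining two terms, Lemma~\ref{lemma bounded error} bounds $E\left\| e_\omega^t \right\|^2 \leq \frac{4(1-\kappa)}{\kappa^2}G^2$ uniformly in $t$ and $\omega$. Substituting these bounds, averaging over $\omega \in \mathcal{R}$, and multiplying by $2C_\alpha^2$ reproduces exactly $4C_\alpha^2 L^2 S^t + 4C_\alpha^2 \sigma^2 + \frac{64C_\alpha^2(1-\kappa)}{\kappa^2}G^2 + \frac{2\epsilon^2}{(W-2B)^2}$, the claimed inequality.

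The main obstacle — and the reason this lemma is more delicate than its unbiased-compressor analogues — is that with a biased $\mathcal{Q}$ one can no longer split a term like $E\left\| \mathcal{Q}(u_\omega^t) - u_\omega^t + \cdots \right\|^2$ into a clean sum of squares using $E[\mathcal{Q}(u_\omega^t) - u_\omega^t] = 0$, so the compression error cannot be absorbed by a $\delta$-type bound as in~\eqref{lemma4-0002} and~\eqref{proof znablaf-0001}. The resolution is never to touch the raw compression error at all: rewrite it as the difference $e_\omega^t - e_\omega^{t+1}$ of two consecutive accumulated errors, each uniformly bounded by Lemma~\ref{lemma bounded error} — which is precisely where Assumption~\ref{a4} enters, since that bound rests on bounded stochastic gradients. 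A minor point worth verifying is that Lemma~\ref{lemma bounded error} does apply in this setting: the error recursion~\eqref{update ef-1}--\eqref{update ef-2} is entirely local to each regular worker and is insensitive to whether the master node aggregates by mean or by geometric median, so the uniform bound on $e_\omega^t$ carries over unchanged.
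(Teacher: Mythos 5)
Your proposal is correct and follows essentially the same route as the paper's own proof: both invoke Lemma~\ref{lemma geomed}, rewrite $\mathcal{Q}(u_\omega^t)$ as $g_\omega^t + e_\omega^t - e_\omega^{t+1}$ via the error-feedback recursion, split off the accumulated errors with $\left\|a+b\right\|^2 \leq 2\left\|a\right\|^2 + 2\left\|b\right\|^2$, and then apply Lemma~\ref{lemma universal}, Assumption~\ref{a2}, and the uniform bound of Lemma~\ref{lemma bounded error} to arrive at the identical constants. Your closing remarks on why the biased compressor forces the detour through $e_\omega^t - e_\omega^{t+1}$ accurately reflect the paper's reasoning.
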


\begin{proof}
From (\ref{update ef-2}), we have
\begin{align}
    \mathcal{Q}(u_\omega^t) = u_\omega^t - e_\omega^{t+1} = g_\omega^t + e_\omega^t - e_\omega^{t+1}.
\end{align}
According to Lemma \ref{lemma geomed}, it holds that
\begin{align}
    & E \left \| z_\epsilon^* - \nabla f(x^t) \right \|^2 \\
    \leq & 2C_\alpha^2 \frac{1}{R} \sum\limits_{\omega \in \mathcal{R}} E \left \| \mathcal{Q}(u_\omega^t) - \nabla f(x^t) \right \|^2 + \frac{2\epsilon^2}{(W-2B)^2} \notag \\
    = & 2C_\alpha^2 \frac{1}{R} \sum\limits_{\omega \in \mathcal{R}} E \left \| g_\omega^t + e_\omega^t - e_\omega^{t+1} - \nabla f(x^t) \right \|^2 + \frac{2\epsilon^2}{(W-2B)^2} \notag \\
\leq & 2C_\alpha^2 \frac{1}{R} \sum\limits_{\omega \in \mathcal{R}} \left \{ 2 E \left \| g_\omega^t - \nabla f(x^t) \right \|^2 + 2 E \left \| e_\omega^t - e_\omega^{t+1} \right \|^2 \right \} + \frac{2\epsilon^2}{(W-2B)^2} \notag \\
    \leq & 2C_\alpha^2 \frac{1}{R} \sum\limits_{\omega \in \mathcal{R}} \left \{ 2 E \left \| g_\omega^t - \nabla f_\omega(x^t) + \nabla f_\omega(x^t) - \nabla f(x^t) \right \|^2 + 4E \left \| e_\omega^t \right \|^2 + 4E \left \| e_\omega^{t+1} \right \|^2 \right \} + \frac{2\epsilon^2}{(W-2B)^2}. \notag
\end{align}
Here the last two inequalities come from the fact that $\left \|a+b \right \|^2 \leq 2 \left \| a \right \|^2 + 2 \left \| b \right \|^2$. Applying Lemma \ref{lemma universal}, Lemma \ref{lemma bounded error}, and Assumption \ref{a2}, we have
\begin{align}
    & E \left \| z_\epsilon^* - \nabla f(x^t) \right \|^2 \\
    \leq & 2C_\alpha^2 \frac{1}{R} \sum\limits_{\omega \in \mathcal{R}} \left \{ 2 E \left \| g_\omega^t - \nabla f_\omega(x^t) \right \|^2 + 2E \left \| \nabla f_\omega(x^t) - \nabla f(x^t) \right \|^2 + \frac{32(1-\kappa)}{\kappa^2} G^2 \right \} + \frac{2\epsilon^2}{(W-2B)^2} \notag \\
    \leq & 4C_\alpha^2L^2S^t + 4C_\alpha^2\sigma^2 + \frac{64C_\alpha^2(1-\kappa)}{\kappa^2} G^2 + \frac{2\epsilon^2}{(W-2B)^2}. \notag
\end{align}
Here the first inequality uses $\sum_{\omega \in \mathcal{R}} ( \nabla f_\omega(x^t) - \nabla f(x^t) ) =0$. This completes the proof.
\end{proof}

Now we establish the convergence of the Byzantine-robust compressed SAGA with error feedback.
\begin{theorem}[Convergence of Byzantine-robust compressed SAGA with error feedback]
\label{theorem ef}
Consider Algorithm \ref{algorithm ef} with $\epsilon$-approximate geometric median aggregation and using a general compressor. Under Assumptions \ref{a1}, \ref{a2}, and \ref{a4}, if the number of Byzantine workers satisfies $B < \frac{W}{2}$, and the step size $\gamma$ satisfies
\begin{equation}
    \gamma \leq \frac{\mu}{4\sqrt{10}J^2L^2C_\alpha},
\end{equation}
then it holds that
\begin{equation}
    E \left \| x^t - x^* \right \|^2 \leq \left ( 1 - \frac{\gamma\mu}{2} \right )^t \Delta_1 + \Delta_2,
\end{equation}
where
\begin{align}
    \Delta_1 :=& \left \| x^0 - x^* \right \|^2 - \Delta_2, \\
    \Delta_2 :=& \frac{5}{\mu^2} \left ( 4C_\alpha^2\sigma^2 + \frac{64C_\alpha^2(1-\kappa)}{\kappa^2} G^2 + \frac{2\epsilon^2}{(W-2B)^2} \right ).
\end{align}
\end{theorem}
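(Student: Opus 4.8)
The plan is to mirror, almost line for line, the argument used for Theorem~\ref{theorem saga}, with Lemma~\ref{lemma z*ef} playing the role of Lemma~\ref{lemma z*saga} and with \emph{no} auxiliary Lyapunov term for the error vectors. First I would record that whenever $\gamma^2 L^2 \le \gamma\mu/2$ the generic descent inequality \eqref{xt+1final} holds verbatim, since its derivation only uses strong convexity and $L$-smoothness of $f$ through \eqref{xtx*sgd} and the choice $\eta=\gamma\mu/2$; the update rule enters only through $x^{t+1} - x^t + \gamma\nabla f(x^t) = \gamma\big(\nabla f(x^t) - z_\epsilon^*\big)$, where $z_\epsilon^*$ denotes an $\epsilon$-approximate geometric median of $\{\mathcal{Q}(u_\omega^t),\ \omega\in\mathcal{W}\}$.

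Next I would introduce the Lyapunov function $T^t := \|x^t - x^*\|^2 + c S^t$ with a constant $c>0$ to be fixed and $S^t$ as in \eqref{defs}. Combining \eqref{xt+1final} with the recursion \eqref{St+1} and then substituting Lemma~\ref{lemma z*ef} for $E\|z_\epsilon^* - \nabla f(x^t)\|^2$ gives
\begin{align*}
E T^{t+1} \le {}& \big(1 - \gamma\mu + 4cJ\gamma^2 L^2\big)\|x^t - x^*\|^2 + \Big[\big(1-\tfrac{1}{J^2}\big)c + \big(\tfrac{2}{\gamma\mu}+4cJ\big)4C_\alpha^2\gamma^2 L^2\Big] S^t \\
& + \gamma^2\big(\tfrac{2}{\gamma\mu}+4cJ\big)\Big(4C_\alpha^2\sigma^2 + \tfrac{64C_\alpha^2(1-\kappa)}{\kappa^2}G^2 + \tfrac{2\epsilon^2}{(W-2B)^2}\Big).
\end{align*}
Then I would impose the step-size/constant conditions exactly as in the proof of Theorem~\ref{theorem saga}: require $4cJ\gamma^2 L^2 \le \gamma\mu/2$ so that $\tfrac{2}{\gamma\mu}+4cJ \le \tfrac{5}{2\gamma\mu}$ and the coefficient of $\|x^t-x^*\|^2$ is at most $1-\gamma\mu/2$; require $\gamma\mu/2 \le 1/(2J^2)$ and pick $c = \tfrac{20 C_\alpha^2 J^2\gamma L^2}{\mu}$ (the factor $20$, rather than the $10$ of Theorem~\ref{theorem saga}, absorbing the $4C_\alpha^2$ versus $2C_\alpha^2$ in Lemma~\ref{lemma z*ef}) so that the coefficient of $S^t$ is at most $(1-\gamma\mu/2)c$; and bound the residual by $\tfrac{5\gamma}{2\mu}\big(4C_\alpha^2\sigma^2 + \tfrac{64C_\alpha^2(1-\kappa)}{\kappa^2}G^2 + \tfrac{2\epsilon^2}{(W-2B)^2}\big) = \tfrac{\gamma\mu}{2}\Delta_2$. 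This yields $E T^{t+1} \le (1-\gamma\mu/2)T^t + \tfrac{\gamma\mu}{2}\Delta_2$; telescoping, together with $S^0=0$ (hence $T^0 = \|x^0-x^*\|^2$) from the initialization $\phi^0_{\omega,j}=x^0$, and $\|x^t-x^*\|^2 \le T^t$, gives the claimed bound. Collecting the requirements $\gamma^2 L^2 \le \gamma\mu/2$, $4cJ\gamma^2 L^2 \le \gamma\mu/2$, $\gamma\mu/2 \le 1/(2J^2)$ and using $J^{3/2}\le J^2$ reproduces $\gamma \le \tfrac{\mu}{4\sqrt{10}J^2 L^2 C_\alpha}$.

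The only genuinely new ingredient relative to Theorem~\ref{theorem saga} is Lemma~\ref{lemma z*ef}, whose proof already appears above; the point is that error feedback contributes the \emph{non-vanishing} constant $\tfrac{64C_\alpha^2(1-\kappa)}{\kappa^2}G^2$ via the uniform bound of Lemma~\ref{lemma bounded error} (which is where Assumption~\ref{a4} enters), so no extra Lyapunov component tracking $e_\omega^t$ is needed and the analysis stays with the two-term function $T^t$. I expect no real obstacle beyond careful bookkeeping of the constants $c$ and $\gamma$; the only subtlety is verifying that the single stated step size simultaneously satisfies all three inequalities.
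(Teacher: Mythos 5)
Your proposal is correct and follows essentially the same route as the paper's own proof: the same two-term Lyapunov function $T^t = \|x^t-x^*\|^2 + cS^t$, the same substitution of Lemma~\ref{lemma z*ef} into \eqref{xt+1final} and \eqref{St+1}, the same choice $c = 20C_\alpha^2 J^2\gamma L^2/\mu$, and the same three step-size conditions collapsed into the stated bound. The observation that no extra Lyapunov component for $e_\omega^t$ is needed, because Lemma~\ref{lemma bounded error} bounds the error vectors uniformly under Assumption~\ref{a4}, matches the paper exactly.
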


\begin{proof}
When the step size $\gamma$ is sufficiently small such that
\begin{equation}
\label{gamma1ef}
    \gamma^2L^2 \leq \frac{\gamma\mu}{2},
\end{equation}
we can observe that \eqref{xt+1final} also holds true here.

To prove the theorem, we construct a \emph{Lyapunov function} $T^t$ as
\begin{equation}
\label{defLyaef}
    T^t := \left \| x^t -x^* \right \|^2 + cS^t,
\end{equation}
where $c$ is any positive constant and $S^t$ is defined as (\ref{defs}). Thus, $T^t$ is non-negative.

Based on (\ref{St+1}), it follows that
\begin{align}
\label{Tt+1aef}
    E T^{t+1} & = E \left \| x^{t+1}  - x^* \right \|^2 + c E S^{t+1} \\
    \leq & (1 - \gamma\mu + 4cJ\gamma^2L^2) \left \| x^t - x^* \right \|^2 + \left ( \frac{2}{\gamma\mu} + 4cJ \right ) E \left \| x^{t+1} - x^t + \gamma\nabla f(x^t) \right \|^2 + \left ( 1 - \frac{1}{J^2} \right )c S^t. \notag
\end{align}
Let $z_\epsilon^*$ be an $\epsilon$-approximate geometric median of $\{ \mathcal{Q}(v_\omega^t), \omega \in \mathcal{W} \}$ and observe the second term at the right-hand side of \eqref{Tt+1aef}. We have
\begin{equation}
    E \left \| x^{t+1} - x^t + \gamma\nabla f(x^t) \right \|^2 = \gamma^2 E \left \| z_\epsilon^* - \nabla f(x^t) \right \|^2.
\end{equation}
With this fact and Lemma \ref{lemma z*ef}, (\ref{Tt+1aef}) can be rewritten as
\begin{align}
\label{Tcoeffef}
    E T^{t+1} \leq & \left [ 1 - \gamma\mu + 4cJ\gamma^2L^2 \right ] \left \| x^t - x^* \right \|^2 \\
    & + \left [ (1 - \frac{1}{J^2})c + \left ( \frac{2}{\gamma\mu} + 4cJ \right )4C_\alpha^2 \gamma^2 L^2 \right ] S^t \notag \\
    & + \gamma^2 \left ( \frac{2}{\gamma\mu} + 4cJ \right ) \left ( 4C_\alpha^2\sigma^2 + \frac{64C_\alpha^2(1-\kappa)}{\kappa^2} G^2 + \frac{2\epsilon^2}{(W-2B)^2} \right ). \notag
\end{align}

If we let
\begin{equation}
\label{4cJef}
    4cJ\gamma^2L^2 \leq \frac{\gamma\mu}{2},
\end{equation}
then it holds
\begin{equation}
    \frac{2}{\gamma\mu} + 4cJ \leq \frac{2}{\gamma\mu} + \frac{\mu}{2\gamma L^2} \leq \frac{5}{2\gamma\mu}.
\end{equation}
Thus, the first coefficient at the right-hand side of \eqref{Tcoeffef} is bounded by
\begin{align}
\label{xleqef}
    1 - \gamma\mu + 4cJ\gamma^2L^2 \leq 1-\frac{\gamma\mu}{2}.
\end{align}

Similarly, if $\gamma$ and $c$ are chosen as
\begin{equation}
\label{gamma2ef}
    \frac{\gamma\mu}{2} \leq \frac{1}{2J^2},
\end{equation}
and
\begin{equation}
\label{cvalueef}
    c = \frac{20C_\alpha^2J^2\gamma L^2}{\mu} \geq \frac{5}{2} \cdot \frac{4C_\alpha^2J^2\gamma L^2}{\mu(\frac{1}{J^2} - \frac{\gamma\mu}{2})},
\end{equation}
then the second coefficient at the right-hand side of (\ref{Tcoeffef}) is bounded by
\begin{align}
\label{cleqef}
    & (1 - \frac{1}{J^2})c + \left ( \frac{2}{\gamma\mu} + 4cJ \right )4C_\alpha^2 \gamma^2 L^2 \\
    \leq & (1 - \frac{1}{J^2})c + \frac{5}{2} \cdot \frac{4C_\alpha^2\gamma L^2}{\mu} \notag \\
    \leq & (1 - \frac{\gamma\mu}{2})c. \notag
\end{align}

The last term at the right-hand side of (\ref{Tcoeffef}) is bounded by
\begin{align}
\label{deltaleqef}
    & \gamma^2 \left ( \frac{2}{\gamma\mu} + 4cJ \right ) \left ( 4C_\alpha^2\sigma^2 + \frac{64C_\alpha^2(1-\kappa)}{\kappa^2} G^2 + \frac{2\epsilon^2}{(W-2B)^2} \right ) \\
    \leq & \frac{5\gamma}{2\mu} \left ( 4C_\alpha^2\sigma^2 + \frac{64C_\alpha^2(1-\kappa)}{\kappa^2} G^2 + \frac{2\epsilon^2}{(W-2B)^2} \right ). \notag
\end{align}

Therefore, substituting (\ref{xleqef}), (\ref{cleqef}), and (\ref{deltaleqef}), we know that (\ref{Tcoeffef}) satisfies
\begin{equation}\label{thm4-000}
    E T^{t+1} \leq \left ( 1 - \frac{\gamma\mu}{2} \right ) \left \| x^t - x^* \right \|^2 + \left ( 1 - \frac{\gamma\mu}{2} \right )c S^t + \frac{\gamma \mu}{2} \Delta_2,
\end{equation}
where
\begin{equation}
    \Delta_2 := \frac{5}{\mu^2} \left ( 4C_\alpha^2\sigma^2 + \frac{64C_\alpha^2(1-\kappa)}{\kappa^2} G^2 + \frac{2\epsilon^2}{(W-2B)^2} \right ).
\end{equation}

Applying telescopic cancellation on \eqref{thm4-000} from iteration $1$ to $t$ yields
\begin{equation}
    E T^t \leq \left ( 1 - \frac{\gamma\mu}{2} \right )^t \left ( T^0 - \Delta_2 \right ) + \frac{2}{\gamma\mu} \tilde{\Delta}_2.
\end{equation}

From the definition of Lyapunov function (\ref{defLyaef}), we can obtain
\begin{equation}
    E \left \| x^t - x^* \right \|^2 \leq E T^t \leq \left ( 1 - \frac{\gamma\mu}{2} \right )^t \Delta_1 + \Delta_2,
\end{equation}
where
\begin{align}
    \Delta_1 := \left \| x^0 - x^* \right \|^2 - \Delta_2.
\end{align}

Considering the requirements (\ref{gamma1ef}), (\ref{4cJ}), and (\ref{gamma2ef}) on the step size, we can choose $\gamma$ as
\begin{equation}
    \gamma \leq \frac{\mu}{4\sqrt{10} \cdot C_\alpha J^2L^2}.
\end{equation}
This completes the proof.
\end{proof}

Theorem \ref{theorem ef} shows that the Byzantine-robust compressed SAGA with error feedback can also linearly converge to a neighborhood of the optimal solution. However, the analysis needs the stochastic gradients to be bounded, which is common in the analysis of error feedback. The learning error $\Delta_2$ is linear with $G^2$ and can be very large.
Improving the proof techniques and obtain a tighter bound of learning error for error feedback will be our future work.

\subsection{Numerical Experiments}

Here we provide numerical experiments of Byzantine-robust compressed SAGA with error feedback to illustrate its effectiveness. The considered problem is also logistic regression. The dataset and detailed settings are the same as those in the main text.

\begin{figure*}[htb]
    \centering
    \includegraphics[width=0.90\textwidth]{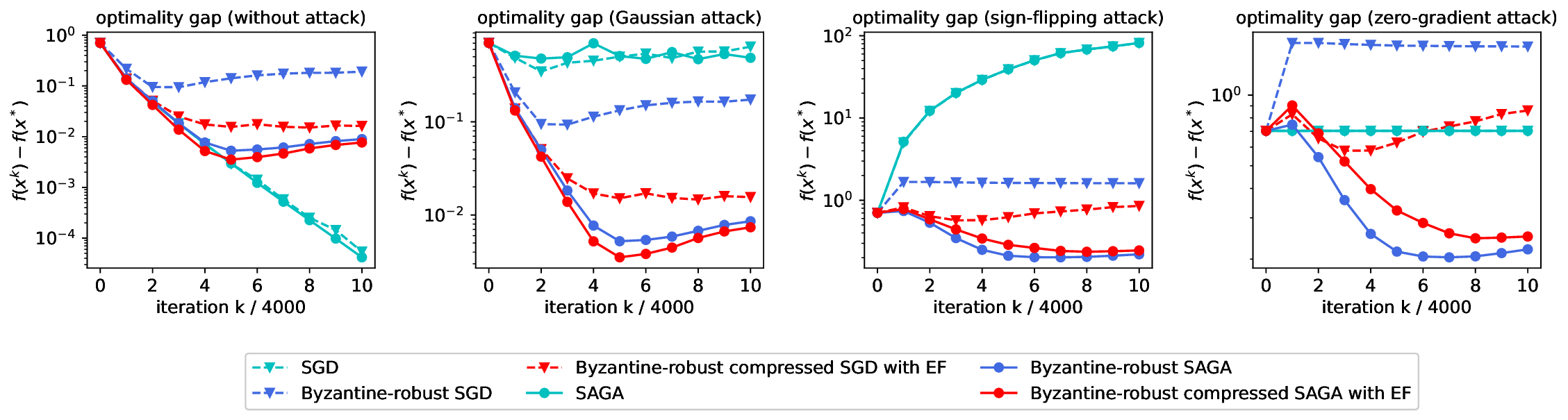}
    \caption{Effect of reduction of stochastic and compression noise.}
    \label{compressionef}
\end{figure*}

\begin{figure*}[htb]
    \centering
    \includegraphics[width=0.90\textwidth]{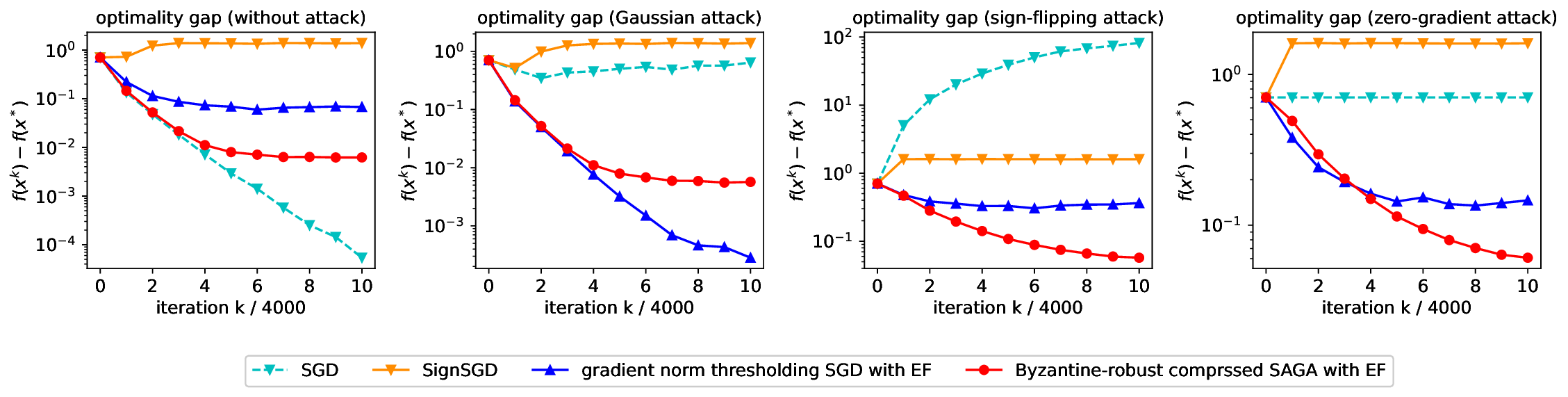}
    \caption{Comparison of proposed algorithm and existing methods: SignSGD and gradient norm thresholding SGD.}
    \label{comparisonef}
\end{figure*}

Figure \ref{compressionef} depicts the optimality gap $f(x^t) - f(x^*)$ of SGD, Byzantine-robust SGD, Byzantine-robust compressed SGD with error feedback (EF), SAGA, Byzantine-robust SAGA, and Byzantine-robust compressed SAGA with EF. The compressor here is top-$k$ spasification and ratio $k/p$ is 0.1. The Byzantine workers also obey the top-$k$ sparsification compression rule and the error feedback framework, so as to recover the effects of uncompressed Byzantine attacks as much as possible. Observe that the Byzantine-robust compressed SAGA with EF has the ability to defend all the three attacks as the Byzantine-robust SAGA without compression, and their learning errors are similar. This fact implies that error feedback can successfully reduce compression noise for a biased compressor and achieve compression for free, too.

Figure \ref{comparisonef} compares the Byzantine-robust SAGA with EF with SignSGD and the gradient norm thresholding SGD. Here the compressor is $\ell_1$-sign quantization, and the gradient norm thresholding SGD also uses error feedback as in \cite{ghosh2021communication}. The fraction of removed compressed gradients is 0.3. When only Gaussian attacks are considered, the gradient norm thresholding SGD with EF behaves the best because all the malicious messages are removed, such that the training process is similar to that of SGD without attacks. However, it cannot remove all the malicious messages under the sign-flipping and zero-gradient attacks and behaves worse than the Byzantine-robust SAGA with EF. On the contrary, the Byzantine-robust SAGA with EF is able to defend various Byzantine attacks.

\end{document}